\documentclass[a4paper]{article}

\usepackage[bottom=3cm,top=3cm]{geometry}

\usepackage{graphicx,floatrow}
\usepackage{amsfonts,amssymb,amsmath,amsthm}
\usepackage{color,comment,wrapfig}
\usepackage[normalem]{ulem}
\usepackage[shortlabels]{enumitem}
\usepackage{url}
\usepackage{hyperref}
\hypersetup{colorlinks=false,pdfborder={0 0 0}}
\usepackage{thm-restate}
\usepackage{microtype,xspace}

\usepackage[a]{esvect}  % vector symbols; allows us to have a vector symbol consistent across different papers (gives different styles [a]...[h])

% highlighting with a vertical line:
\usepackage[color]{changebar}\setlength\changebarsep{15pt}

%\definecolor{lgray}{gray}{0.95}
\usepackage[linecolor=black,textsize=tiny,backgroundcolor=white]{todonotes}

\renewcommand{\vec}{\vv}
\newcommand{\vx}{\vec{x}}
\newcommand{\vy}{\vec{y}}
\newcommand{\vw}{\vec{w}}
\newcommand{\tzz}{\ensuremath{T^{\textrm{zz}}}\xspace}

\newcommand{\Nset}{\mathbb{N}}
\newcommand{\Zset}{\mathbb{Z}}

\newcommand{\pos}{\mathrm{pos}}
\newcommand{\head}{\mathrm{head}}  % does not include m_i
\newcommand{\headp}{\mathrm{head}^{\rightarrow}\!}
\newcommand{\tail}{\mathrm{tail}}  % includes m_i

\newcommand{\tml}[2]{\ensuremath{L_{#1}^{#2}} }

\newcommand{\ra}{\rightarrow}

\newtheorem{theorem}{Theorem}

\newtheorem{corollary}[theorem]{Corollary}{\bfseries}{\itshape}
\newtheorem{lemma}[theorem]{Lemma}{\bfseries}{\itshape}
{\bfseries}{\itshape}
\newtheorem{definition}[theorem]{Definition}{\bfseries}{\itshape}
\newtheorem{remark}[theorem]{Remark}{\bfseries}{}
% \counterwithin{theorem}{section}

\newtheorem{openproblem}[theorem]{Open problem}

\title{Turning Machines:\\ a simple algorithmic model for molecular robotics\footnote{This paper expands upon the conference version published in {\em Proceedings of the 26th International Conference on DNA Computing and Molecular Programming}. Oxford 2020, Leibniz International Proceedings in Informatics (LIPIcs). 
Authors C.\ Wood and D.\ Woods are supported by European Research Council (ERC) award number 772766 and Science foundation Ireland (SFI) grant 18/ERCS/5746 (this manuscript reflects only the authors' view and the ERC is not responsible for any use that may be made of the information it contains).}  \vspace{1ex}} 

\author{Irina Kostitsyna\thanks{Department of Mathematics and Computer Science, TU Eindhoven, the Netherlands.  i.kostitsyna@tue.nl}
	 \and
        Cai Wood\thanks{Hamilton Institute and Department of Computer Science, Maynooth University, Ireland. \{cai.wood.2017@mumail.ie, damien.woods@mu.ie\}   } 
        \and
        Damien Woods\footnotemark[3]
        \vspace{1ex}
}

\date{}

\begin{document}

\maketitle
\begin{abstract}
Molecular robotics is challenging, so it seems best to keep it simple.  We consider an abstract molecular robotics model based on simple folding instructions that execute asynchronously.  Turning Machines are a simple 1D to 2D folding model, also easily generalisable to 2D to 3D folding.  A Turning Machine starts out as a line of connected monomers in the discrete plane, each with an associated turning number.  A monomer turns relative to its neighbours, executing a unit-distance translation that drags other monomers along with it, and through collective motion the initial set of monomers eventually folds into a programmed shape.  We provide a suite of tools for reasoning about Turning Machines by fully characterising their ability to execute line rotations:  executing an almost-full line rotation of $5\pi/3$ radians is possible, yet a full $2\pi$ rotation is impossible.  Furthermore, line rotations up to $5\pi/3$ are executed efficiently, in $O(\log n)$ expected time in our continuous time Markov chain time model.  We then show that such line-rotations represent a fundamental primitive in the model, by using them to efficiently and asynchronously fold shapes.  In particular, arbitrarily large zig-zag-rastered squares and zig-zag paths are foldable, as are $y$-monotone shapes albeit with error (bounded by perimeter length).  Finally, we give shapes that despite having paths that traverse all their points, are in fact impossible to fold, as well as techniques for folding certain classes of (scaled) shapes without error.  Our approach relies on careful geometric-based analyses of the feats possible and impossible by a very simple robotic system, and  pushes conceptional hardness towards mathematical analysis and away from molecular implementation.
\end{abstract}

\section{Introduction}
The challenge of building molecular robots has many moving parts,  
as the saying goes. 
These include   
 molecular parts that move relative to each other;  
 units needing some sort of memory state; 
 the ability to transition between states; 
and perhaps even the ability to use computation to drive robotic movements. 
Here we consider a simple algorithmic model of robotic reconfiguration called Turning Machines.

The main ethos behind our work is the notion of having a reconfigurable structure where component monomers actuate their position relative to their neighbours and  
 governed by {\em simple} actuation rules.
 Volume exclusion applies (two monomers can not occupy the same position in space), almost for free we get massive parallelism and asynchronicity,
 and the complexity of allowable state changes is small: unit monomers start with a natural number and decrement step-by-step to zero.
 The Turning Machine model  embodies these concepts. 

On the one hand, there are a number of senses in which molecular systems are {\em better} suited to robotic-style reconfiguration than macro-scale robotic systems:  there is no gravity nor friction fighting against components' actuation, and should we know how to exploit them, randomness, freely diffusing fuel (robots need not carry all their fuel) and large numbers of components are all readily available as resources. 
On the other hand, building nanoscale components presents a number of challenges including implementing computational controllers at the nanoscale, as well as designing systems that self-assemble and interact in a regime where we can not easily send in human mechanics to diagnose and fix problems.

\begin{figure}[t]
  \centering
\includegraphics{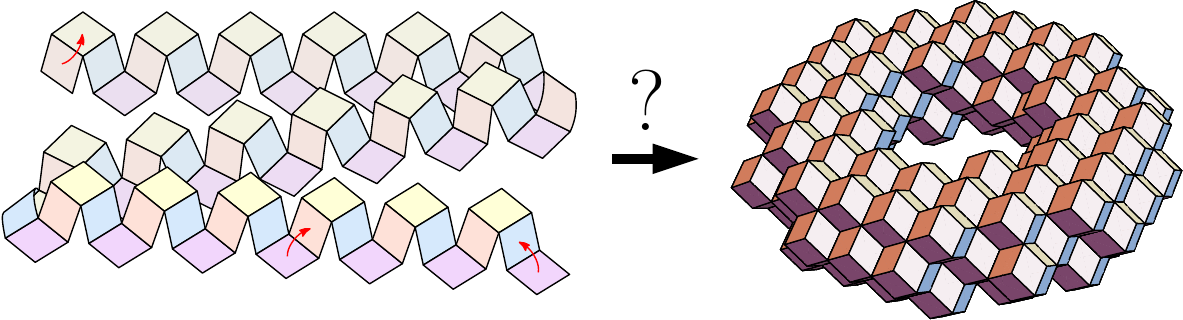}
\caption{Turning Machine motivation: what shapes can be made by autonomously folding structures using simple local turning rules that effect non-local movement?
Finding suitable abstract models and characterising their ability helps us to step back and create a vision of where we can go.}
    \label{fig:motivation}
\end{figure}

\subsection{Turning Machines}
Monomers are the atomic components of a Turning Machine and are arranged in a connected chain on the triangular grid $G_\triangle$, with each monomer along the chain pointing at the next (Figure~\ref{fig:model}).  In an initial instance, the chain of monomers are sitting on the $x$-axis all pointing to the east.  Each monomer has an initial (input) integer turning number $s\in\Zset$, the monomer's ultimate goal is to set that number to 0:  if $s$ is positive, the monomer tries to simultaneously decrement~$s$ and turn anti-clockwise 
by an angle of $\pi/3$ (Figure~\ref{fig:model}(b)), if $s$ is negative, it tries to increment and turn clockwise by $\pi/3$.\footnote{Having the monomer turning angle 
be confined to the range $(0, \pi/2]$ seems to capture a range of interesting and important blocking behaviours that would otherwise be missed by the model. Having the angle be 
$\pi/3$, and in particular choosing the triangular grid over the square grid, is a somewhat arbitrary choice in the model definition.} If $s=0$ the monomer has reached its target orientation and does not turn again (Figure~\ref{fig:model}(b)).

A key point is that although a monomer actuates by rotating the direction in which it points, when it does so it ``drags'' (translates) all monomers that come after it in the chain in the same way 
the rotation motion of a human arm (around a shoulder) appears to translate a flag through the air, or the way a cam in a combustion engine converts rotational shaft motion to translational piston motion (Figures~\ref{fig:model}(c) and~\ref{fig:60}).  Section~\ref{sec:def} gives the precise model definition. 

\begin{figure}[t]
  \centering
    \includegraphics[page=1, height=3.3cm]{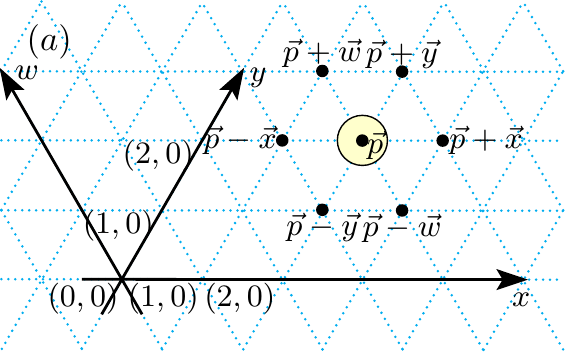}
    \hfill
 \includegraphics[page=2,height=3.3cm]{images/figures-new.pdf}
 
 \vspace{0.2cm}
 
 \includegraphics[page=15,width = \textwidth]{images/figures-new.pdf}
 
 \vspace{0.2cm}
 
 \includegraphics[page=16,width=\textwidth]{images/figures-new.pdf}
  \caption{Turning Machine model. 
  (a) Triangular grid conventions.   A configuration showing a single monomer on the triangular grid $G_\triangle$, along with axes $x$, $y$ and $w$.
  (b) Local movement (rotation): A monomer in state 3 pointing to the east undergoes three turning rule applications finishing in state 0 with no more applicable rules. Locally, the monomer effects a rotation motion.
(c) Global movement (translation):  
Left: Example initial configuration. 
Middle: A movement rule applied to a monomer in state 3 decrements the state to 2 and translates all subsequent monomers by $+\protect\vw$. 
Right: A second state-3 monomer moves.
(d) Blocking: Monomers are not permitted to make movements that would result in a self-intersecting configuration. 
The monomer $m_j$ is said to be {\em blocked} because if it were to move, then $m_k$ would move to overlap $m_i$, but such self-intersections are not permitted.}
    \label{fig:model}
\end{figure}

\subsubsection{The main challenge: blocking}
Programming the model simply requires annotating an east-pointing line of monomers with turning numbers; an incredibly simple programming syntax. 

Locally, individual monomers exhibit a small rotation, but globally this effects a large translation, or dragging, of many monomers (Figure~\ref{fig:model}(c)). 
Thus globally, the main challenge is how to effect global rotations -- in other words how to use translation to simulate rotation. In particular, how to do this when lots of monomers are asynchronously moving and bumping into each other, potentially blocking each other from moving. 

Blocking is illustrated in Figure~\ref{fig:model}(d) and comes in two forms. {\em Temporary blocking} where one monomer is in the way of another, but eventually will get out of the way, and {\em permanent blocking} where all monomers block each other in a locked configuration that will never free itself. 
We say that a target structure is foldable if all possible system trajectories  lead to that structure, i.e. permanent  blocking does not occur on any trajectory.
A foldable structure may exhibit temporary blocking on some trajectories, indeed most of the work for our positive results in this paper comes down to showing that for certain folding tasks, any blockings that happen are merely temporary kinks in the chain that are eventually worked out. 
We measure the amount of blocking by considering the completion time: a foldable structure where temporarily blocked monomers can quickly become unblocked finishes faster than one where blocking takes a while to sort out.  
Our model of time assumes that the time to apply a turning rule to a given unblocked monomer is an exponential random variable with rate~1, and the system evolves as a continuous time Markov chain with the discrete events being rules applied asynchronously and in parallel. 

Given a shape and a best-guess at an initial (input) state sequence to fold that shape, the main challenge often lies in showing that all trajectories lead to the desired shape, i.e. that permanent blocking never occurs. If we succeed in this, a second challenge is to analyse the time to completion---with speed of completion being a measure of efficiency and parallelism.

\subsection{Results: line rotations and shape building}
After defining the Turning Machines model (Section~\ref{sec:def}), and giving technical lemmas for reasoning about it   (Section~\ref{sec:tools}),   
in Section~\ref{sec:lines} we fully characterise the line rotation capabilities of the model, in two senses.
First, we show that for each of the angles $\theta \in \{ \pi/3, 2\pi/3, \pi, 4\pi/3, 5\pi/3 \}$, and any number of monomers $n\in\Nset$ there is a Turning Machine with $n$ monomers that starts on the $x$-axis and ends rotated by $\theta$ radians. We show this is the best one can do, that is, that rotation of $\theta \geq 2\pi$ is impossible (for any $ n>7$, there are always some trajectories that are permanently blocked). 
Second, line rotation is fast:  Up to constant factors the speed is optimal, completing in expected time $O(\log n)$. 
This shows that despite the fact that line rotations in the range $\pi \leq \theta \leq 5\pi/3$ experience large number of blockings along their trajectories, these blockings are all temporary, and do not conspire to slow  the system down by more than a constant factor on average.

In Section~\ref{sec:shapes}, we go on to give results on the shape-building abilities of Turning Machines. 
To do this, we build on previous  technical tools to show that so-called zig-zag paths (paths that wind over and back parallel to the $x$ axis) are foldable, and in merely logarithmic expected time (Theorem~\ref{thm:zz}). 
We use that result to prove that $n \times n$ squares, rastered in a zig-zag fashion, are foldable, and time-efficiently so  (Corollary~\ref{cor:square}). 
By allowing error in the folding of a shape (the error is the symmetric difference between the set of points of the shape and points in the folding), 
any shape is trivially foldable, albeit with error up to the area of the shape, by simply folding the entire bounding box of the shape.\footnote{Or even just drawing a line through a shape, or even just a single point in the shape, or even zero points in the shape!} 
This means that there are shapes, that are not Hamiltonian but are foldable with large error  (e.g. a cross with thin arms, Figure~\ref{fig:cross} and Theorem~\ref{thm:cross}). 
What shapes can be folded with small error?  We show that any shape from a wide class called $y$-monotone shapes is foldable in optimal\footnote{We don't state it as a result, but the folding tasks in this paper have an expected time lower bound of $\Omega(\log n)$, for a length $n$ Turning Machine instance, since they require they require $\geq n$ events to occur.}
expected time $O(\log n)$, and with error no more than the perimeter of the shape and no more than the perimeter of the folding (Theorem~\ref{thm:y-monotone-scale1}). 
Certain classes of shapes can be folded with zero error:  
Theorem~\ref{thm:yw-separator} shows that there are shapes, that we call,  $y$-monotone with a $yw$-separator, that are are foldable at scale factor~2. 
Finally, one can ask if every shape that has a Hamiltonian path is foldable. The answer is no:  our main negative result (Theorem~\ref{thm:spiral})  gives a classes of shapes (thin spirals with a gap between their arms) and proves that they are not foldable. 

We would argue that programming in this model is deceptively simple: if a shape has a  Hamiltonian path, it is typically straightforward to trace that  path  while assigning turning numbers to an initial configuration such that there is {\em some} trajectory that folds the shape. 
However, making the step forward to then show that {\em all} trajectories fold that shape may involve significant and subtle argumentation, or may be impossible. 
Programming in the model seems simple, the analysis may not be. This has the benefit of pulling hardness away from the molecular robot design and implementation problem (which is a rather challenging experimental problem) and instead pushing it towards a theoretical geometric analysis problem, exactly where we'd prefer it to be.

\subsection{Related and future work}
Besides finding insights at the interface of computation and geometry, another ultimate aim of this kind of work is to bridge the gap between what we can imagine in theory and what we can engineer in the lab~\cite{ramezani2019building}. 
Biological systems actuated at the molecular scale  provide inspiration: 
in the gastrulation phase of embryonic development of the model organism \emph{Drosophila melanogaster}, large-scale rearrangements of the embryo are effected by thousands of (nanoscale) molecular motors working together to rapidly push and pull the embryo into a target shape~\cite{dawes2005folded,Wieschaus08pulsed}.  

 Our Turning Machine model is a restriction of the nubot model~\cite{nubots},  a molecular robotic model with many  features including self-assembly capabilities, 
random agitation (jiggling) of monomers, 
the ability to execute cellular automata style rules,
and floppy/rigid molecular bonds. 
The parallel computing capabilities~\cite{nubotParallel}, 
and construction using random agitation and self-assembly~\cite{nubotAgitation} have been studied. 
Dabby and Chen consider related (experimental and theoretical) systems that use an insertion primitive to quickly grow long (possibly floppy) linear structures~\cite{dabbyChenSODA2012}, later tightly characterised  by  Hescott, Malchik and Winslow~\cite{WinslowInsertion2018,Winslow2017insertionTightBounds} in terms of number of monomer types and time.
Hou and Chen~\cite{hou2019exponentially} show that the nubot model can display exponential growth without needing to exploit  state changes. Chin, Tsai and Chen~\cite{chin2018minimal} look at both minimising numbers of state changes and number of `2D layers' to assembly 1D structures.
There are a number related autonomous self-folding models, both 1D to 2D~\cite{cheung2011programmable} and 2D to 3D~\cite{connelly2010locked},
and reconfigurable robotic/programmable matter systems, e.g.~\cite{aloupis2009linear,aloupis2008reconfiguration,demaine2018know,geary2016programming,gmyr2019forming,michail2019transformation}.

There are several avenues for future work.
\begin{itemize} 
\item Concretely, it remains to  fully characterise the classes of shapes foldable with zero error (Open Problem~\ref{op:foldable shapes}). 
For example, can our main negative result showing the impossibility of folding spirals be generalised to give impossibility result for wider classes of shapes? Can we find new techniques to increase the class of shapes that are foldable (with zero error), beyond the zig-zag rastering techniques we heavily use in Sections~\ref{sec:zz}--\ref{sec:exact-folding} (to fold $n \times n$ squares, and certain other $y$-monotone shapes including those with a $yw$-separator at scale factor 2).
\item Establishing  error bounds for various classes of shapes, including trade-offs between notions of shape complexity and error could be another avenue to explore. It seems interesting to consider folding where we permit a small number of trajectories to be permanently blocked.  
\item 
In this paper, our positive results are mainly for model instances with either positive integer states, which define an anti-clockwise rotation-and-translation motion about the origin, or with negative integer states which define a clockwise motion; our negative results hold for both directions. Does the combination of {\em both} anti-clockwise (positive) and clockwise (negative) turning rules strictly increase the expressivity of the model?
 \item 
Using a variant~\cite{nubots,nubotAgitation} of the model with random agitation of monomers would side-step our negative result about the impossibility of {\em reaching} a full $2\pi$ line rotation, essentially by allowing reversible movement out of blocked configurations (although upon reaching the target configuration, the Turing machine would also immediately reverse out it). We don't know if  adding agitation to the model can help increase the set of foldable shapes, however since it is a somewhat natural physical notion it is worth investigating --- either with or without the notion of locking into position when a monomer has  completed all of its movements by reaching state 0.
Indeed, the analysis of such systems would provide intellectual fruit by mixing probability, geometry and computation.
\item As indicated in Figure~\ref{fig:motivation}, it is straightforward to generalise the model to (say) 2D trees folding into 3D shapes. 
\item We know that certain kinds of zig-zag paths (e.g. over-and-back parallel to x-axis, while moving monotonically either positive or negative along the $y$ axis, see for example Figure~\ref{fig:zz}) are efficiently foldable, and under certain conditions mixing of two kinds of of zig-zag paths are foldable (e.g.~proof of  Theorem~\ref{thm:yw-separator} and Figure~\ref{fig:yw-monotone}).
\end{itemize}
In all of these cases fully characterising the classes of shapes that can be folded, and characterising the time to fold such classes of structures, provides questions whose answers would expand our understanding of the capabilities of simple reconfigurable robotic systems.

\section{Turning Machine model definition}\label{sec:def}
   
In this section we define the Turning Machine model. 
Formally speaking, the model is a restriction of the Nubot model~\cite{nubots}, for simplicity we instead use a custom formalism. 

\paragraph{Grid.}
Positions are pairs in $\mathbb{Z}^2$ defined on a two-dimensional (2D) triangular grid $G_\triangle$ using $x$ and $y$ axes as shown in Figure~\ref{fig:model}. 
We use the notation $\mathbb{N}^2$ to denote points in the positive orthant, or positive sextant, of the 2D triangular grid.
For convenience, we define a third axis, $w$, centred on the origin and running through the point $(x,y) = (-1,1)$. 
We let $\pm\vx, \pm\vy,  \pm\vw$ denote the unit vectors along the $x$, $y$ and $w$ axes. 

\paragraph{Monomer, configuration, trajectory.}
A monomer is a pair $m = (s(m), \pos(m)$) where $s(m)  \in \mathbb{Z}$ is a state and $\pos(m_i)  \in \mathbb{Z}^2 $ is a position.
A {\em configuration}, of length $n\in\mathbb{N}$, is a tuple of monomers $c = ( m_0, m_1, \ldots, m_{n-1})$ 
whose positions $\sigma(c) = \pos(m_0), \pos(m_1), \ldots,$ $\pos(m_{n-1})$  define a length $n-1$ simple directed path (or non-self-intersecting chain)  
in $\mathbb{Z}^2$ (on the grid $G_\triangle$) and where $\pos(m_0) = (0,0)$.\footnote{In the language of~\cite{nubots}, one can imagine that for all $i \in \{0,1,\ldots, n-2\}$,  there is a rigid bond between monomer $m_i$ and monomer $m_{i+1}$, and otherwise there are no bonds.} 

A configuration is a tuple of $n \in \mathbb{N}$ monomers $(m_0,m_1,\ldots, m_{n-1})$.
A {\em final configuration} has all monomers in state 0. 
A pair of configurations $(c_i, c_{i+1})$ is said to be a {\em step} if $c_i$ yields~$c_{i+1}$ via a single {\em rule application} (defined below) which we write as $c_i\ra c_{i+1}$.  
A trajectory, of length $k$, is a sequence of  configurations $c_0, c_1,\ldots,c_{k-1}$ where, for each $i\in \{0,1,\ldots,k-2\}$ the pair $(c_i, c_{i+1})$ is a step $c_i \ra c_{i+1}$. 
A Turning Machine {\em initial configuration} $c_0$ is said to {\em compute the target configuration}~$c_t$ if all trajectories that start at $c_0$ lead to $c_t$, and is said to compute its target configuration if it reaches the configuration with all monomers in state 0.
A {\em Turning Machine} instance is an initial configuration.
For a monomer $m_i$, we let $s_0(m_i)$ denote its state in the initial configuration, and we let $\ell_i$ denote the horizontal line through $\pos(m_i)$. 
Given a monomer $m_i$ in configuration $c$ we say that for $0\leq i \leq n-2$ monomer $m_i$  {\em points in direction} $\vec d \in \{\pm\vx, \pm\vy, \pm\vw \}$ if $\pos (m_{i+1}) - \pos(m_i) = \vec d$. By convention, monomer $m_{n-1}$ does not point in any direction, and whenever we say {\em all monomers point in some direction $\vec d$}, we mean all except $m_{n-1}$. 
A configuration $c'$ is {\em reachable} from configuration $c$ if there is at least one sequence of rule applications from $c$ to $c'$ (if $c$ is not specified we mean the initial configuration $c_0$).

\paragraph{Turning rule: state decrement/increment.}
Let $S_{\mathrm{init}} \subsetneq \mathrm{Z} $ be the set of states that appear in the initial configuration. 
Let $s_{\min} =  \min({ S_{\mathrm{init}} \cup \{ 0 \} }) $  and 
$s_{\max} =  \max({ S_{\mathrm{init}} \cup \{ 0 \} }) $, and let 
$S = \{s_{\min}, s_{\min}+1,\ldots , s_{\max} \} $ be the called the Turning Machine {\em state set}.  
The {\em turning rules} of a Turning Machine are  
defined by a function $r$ such that for all states $s \in ( S \setminus \{ 0 \} )$: 
\begin{equation}\label{eq:r} 
r(s) = 
 \begin{cases}
 s -1  \quad\textrm{ if } s >0\,,\\
 s +1  \quad\textrm{ if } s < 0\,.
 \end{cases}
\end{equation}
Let $\mathcal{C}$ be the set of all configurations. 
The turning rule $R : \mathcal{C} \times \mathbb{Z} \rightarrow \mathcal{C} $ is a function and  $R(c,i)$   is said to be {\em applicable} to monomer $m_i$ in configuration $c$ if $s(m_i) \neq 0$ and the rule is not blocked 
(defined below). 
If the rule is applicable, we write $R(c,i) = c'$ and say that $R(c,i)$ yields the new configuration $c'$, and we say that $(c,c')$ is a step.

\paragraph{Turning rule: blocking.}
For $i \in \{0,1,\ldots,n-1 \}$, we define the head and tail of monomer $m_i$ as 
$\head(m_i) = m_{i+1},m_{i+2},\ldots,m_{n-1}$
and 
$\tail(m_i) = m_{0},m_{1},\ldots,m_{i}$.
Consider the following tuple of unit vectors: $\vec d = ( \vx, \vy, \vw, -\vx, -\vy, -\vw) $, and let $\vec d_k$ denote the $k$th element of that tuple. 
Let $\vec d_i = \pos(m_{i+1})-\pos(m_i)$ be the direction of monomer $m_i$, and then if $s(m_i)>0$ let $i' = (i+2)~\mathrm{mod}~6$, or if $s(m_i)<0$ let $i' = (i - 2) ~\mathrm{mod}~6$.
For a vector $\vec d\in \Zset^2$ we write $m_i + \vec d$ to mean the monomer $m_i$ translated by $\vec d$. 
Define\footnote{Another way to state this is that when a monomer $m_i$ moves, $\head(m_i)$ translates in the direction corresponding to the current direction of $m_i$ rotated by the angle $2\pi/3$.}  
$\headp(m_i) = 
 m_{i+1} + \vec d_{\!i'},\, m_{i+2}+ \vec d_{\!i'},\, \ldots, m_{n-1} + \vec d_{\!i'}$.
 If the set of positions of $\tail(m_i)$
 has a non-empty intersection with the set of positions of  $\headp(m_i)$
 we say that the rule is blocked, and the rule is not applicable.
 If the rule is not blocked, it is applicable and the resulting next configuration is  
 $c'  = \tail(m_i)$, $\headp(m_{i}) = m_{0},m_{1},\ldots,m_{i}, 
  m_{i+1} + \vec d_{i'}, m_{i+2}+ \vec d_{i'}, \ldots, m_{n-1} + \vec d_{i'}$.
  
A configuration $c$ is said to be {\em permanently blocked} if (a) not all states are 0, and (b) none of the monomers in $c$ has an  applicable rule.
A monomer $m$ within a configuration $c$ is said to be {\em temporarily blocked} if 
(a) $m$ is not in state 0, and 
(b) there is no rule applicable to $m$, and 
(c) there is a trajectory starting at $c$ that reaches a configuration $c'$ where there is a rule applicable to $m$.

\paragraph{Time.}
A Turning Machine evolves as a continuous time Markov process. The rate for each rule application is 1. If there are $k$ applicable transitions for a configuration~$c_i$ (i.e.\ $k$ is the sum of the number of rule applications that can be applied to all monomers in $c_i$), then the probability of any given transition being applied is $1/k$, and the time until the next transition is applied is  an exponential random variable with rate $k$ (i.e.\ the expected time is $1/k$). 
 The probability of a trajectory is then the product of the probabilities of each of the transitions along the trajectory, and the expected time of a trajectory is the sum of the expected times of each transition in the trajectory. Thus, $\sum_{t \in \mathcal{T}} \mathrm{Pr}[t] \cdot  \mathrm{time}(t)$ is the expected time for the system to evolve from configuration~$c_i$ to configuration~$c_j$, where~$\mathcal{T}$ is the set of all trajectories from~$c_i$ to $c_j$, and $ \mathrm{time}(t)$ is the expected time for trajectory $t$.

\paragraph{Example.}
These concepts are illustrated in the proof of Lemma~\ref{lem:pi/3 rotation} in Appendix~\ref{app:sec:rot}, 
and in the example in Figure~\ref{fig:60}. 

\section{Introduction to line rotation Turning Machines}
\begin{figure}[t]
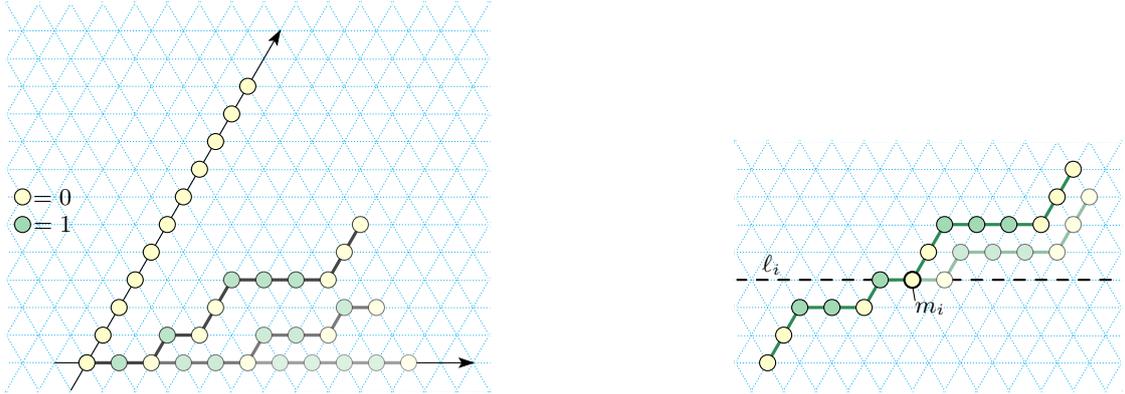

  \centering
    \includegraphics[page=5]{images/figures-new.pdf}\hfill
    \includegraphics[page=6]{images/figures-new.pdf}
  \caption{Left: 
  The Turning Machine $\tml{n}{1}$ that rotates a line of $n=11$ monomers by $\pi/3$; illustration for Lemma~\ref{lem:pi/3 rotation}.
  Four configurations are shown. The initial configuration has all monomers in state 1 sitting on the $x$-axis, in the final  configuration all are in state 0 and sitting on the $\pi/3$ line. 
  Two intermediate configurations are shown, respectively after 2, and then after 5,  turning rules applications. 
Right: A configuration of the Line rotation Turning Machine $L^1_{11}$ with the chain running from bottom left to top right. Lemmas~\ref{lem:pi/3 rotation} and~\ref{lem:180:li} uses the fact that $\tail(m_i)$ sits on or below $\ell_i$, $\head(m_i)$ sits on or above $\ell_i$, and $\headp(m_i)$ sits strictly above $\ell_i$.}
    \label{fig:60}
\end{figure}
Every Turning Machine analysed in this paper starts with $n\in\Nset$ monomers, 
sitting on the $x$-axis, pointing to the east. 
We define a class of Turning Machines which we call \emph{line rotation Turning Machines}. 
\begin{definition}[line rotation Turning Machine]\label{def:line}
Let $n\in\Nset$ and let $\tml{n}{\sigma}$ be the Turning Machine with initial configuration of $n$ monomers $c_0 = m_0, m_1,\ldots, m_{n-1}$ 
all   pointing to the east,
positioned on the $x$-axis ($\pos(m_i) = (i,0) \in \Zset^2$),
and for $0\leq i\leq n-2$ all monomers in the same state $s_0(m_i) = \sigma \in \Nset^+$ 
and $s_0(m_{n-1}) = 0$. 
\end{definition}

\begin{remark}\label{rem:angle}
The initial monomer state $\sigma \geq 0$ dictates that each monomer  wishes to turn (have a rule applied) a total $\sigma$ times, i.e. be rotated through an angle of $\sigma \pi/3$. 
\end{remark}
\begin{remark}[target configuration]\label{rem:targetconfig}
For intuition, if there was no notion of blocking in the Turning Machine model, that is, if the model permitted self-intersecting configurations (which it does not), then the final configuration $c$ of the Turning Machine in Definition~\ref{def:line} is a straight line of monomers sitting along the ray that starts at the origin and is at an angle of $\sigma \frac{\pi}{3}$, i.e. at positions $(0,0),(0,-1),\ldots,(0,- (n-1) )$ and all pointing to the west.
We call $c$ the desired {\em target configuration} of the line rotation Turning Machine $\tml{n}{\sigma}$. 
Also, if there was no notion of blocking: expected time to completion would be fast, $O(\log n)$ (by a generalisation of the analysis used in the proof of Lemma~\ref{lem:pi/3 rotation}).
However, a model with no blocking would be rather uninteresting. 
\end{remark} 
\noindent Figure~\ref{fig:60} (left) illustrates Lemma~\ref{lem:pi/3 rotation} and  Appendix~\ref{app:sec:rot} contains its straightforward, yet instructive, proof.  
\begin{restatable}{lemma}{lempioverthree}\label{lem:pi/3 rotation}
For each $n\in\Nset$, the line-rotating Turning Machine  $\tml{n}{1}$ computes its target configuration, and does so in expected $O(\log n)$ time. 
\end{restatable}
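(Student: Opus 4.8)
The plan is to exploit the fact that with all initial states equal to $1$ this machine is essentially blocking-free, so that the whole argument reduces to an invariant plus a harmonic-sum computation.

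First I would parametrise the reachable configurations. Each $m_i$ with $i\le n-2$ starts in state $1$, so it has a rule applied to it at most once; and since translating two consecutive monomers by the same vector preserves their difference, a rule application changes only the direction of the monomer it is applied to. Hence a reachable configuration is determined by the set $A\subseteq\{0,\dots,n-2\}$ of monomers that have already moved: those in $A$ point north ($\vy$), those not in $A$ point east ($\vx$), and $\pos(m_k) = (k,0) + a_k\vw$ with $a_k = |\{j\in A : j<k\}|$ — because $m_k$ is translated once, by $\vw$, for each earlier monomer that moves (each such move happening while that monomer still points east, so that $\headp$ is shifted by $\vx$ rotated by $2\pi/3$, i.e.\ by $\vw$), while $m_k$'s own move leaves $m_k$ fixed.

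Next I would prove, by induction on trajectory length, the invariant that every reachable configuration is weakly $y$-monotone, i.e.\ the $y$-coordinates $y(\pos(m_0))\le\dots\le y(\pos(m_{n-1}))$ are non-decreasing. It holds in $c_0$ (all heights are $0$), and a move of $m_k$ shifts $m_{k+1},\dots,m_{n-1}$ up by one unit while fixing $m_0,\dots,m_k$, which preserves the ordering since $y(\pos(m_k))\le y(\pos(m_{k+1}))\le y(\pos(m_{k+1}))+1$. This immediately rules out blocking: $y$-monotonicity puts $\tail(m_i)$ weakly below the horizontal line $\ell_i$ through $\pos(m_i)$ and $\head(m_i)$ weakly above $\ell_i$, and a move of $m_i$ (which points east, being in state $1$) shifts $\head(m_i)$ up by one, so $\headp(m_i)$ lies strictly above $\ell_i$ and hence is disjoint from $\tail(m_i)$. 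Consequently, from any reachable non-final configuration ($A\ne\{0,\dots,n-2\}$) every $k\notin A$ has an applicable rule, no trajectory is permanently blocked, each step enlarges $A$, and every maximal trajectory has exactly $n-1$ steps and terminates at $A=\{0,\dots,n-2\}$, where $\pos(m_k) = (k,0)+k\vw = (0,k)$ and every $m_k$ ($k\le n-2$) points north — the straight line along the $\pi/3$ ray, which is the target configuration of Remark~\ref{rem:targetconfig}. Thus $\tml{n}{1}$ computes its target configuration.

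Finally, for the time bound: in a reachable configuration with $j$ monomers not yet moved there are exactly $j$ applicable rules (one per un-moved monomer; $m_{n-1}$ never has a rule and moved monomers are in state $0$), and by the previous step none is blocked. So along any trajectory the number of applicable rules runs through $n-1, n-2, \dots, 1$, and by the continuous-time Markov chain time model the expected times of the corresponding steps are $\tfrac1{n-1},\dots,1$; hence every trajectory has expected time $\sum_{j=1}^{n-1}\tfrac1j = H_{n-1}$, and the expected time to reach the target is $\sum_t \Pr[t]\,H_{n-1} = H_{n-1} = O(\log n)$. The only real subtlety is spotting the $y$-monotonicity invariant and checking that it survives a move; once that is in hand, the no-blocking property — which is exactly what makes $\sigma=1$ easier than larger rotations, where blocking genuinely occurs — and the harmonic-sum bound are routine.
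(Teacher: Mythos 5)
Your proof is correct and follows essentially the same route as the paper's: both arguments observe that with states in $\{0,1\}$ every monomer points in direction $+\vx$ or $+\vy$, so the chain is weakly $y$-monotone, whence $\tail(m_i)$ lies on or below the horizontal line $\ell_i$ while $\headp(m_i)$ lies strictly above it, ruling out blocking; the time bound is then the same harmonic sum $\sum_{k=1}^{n-1}1/k=O(\log n)$. Your explicit parametrisation of reachable configurations by the set $A$ and the closed-form $\pos(m_k)=(k,0)+a_k\vw$ is a slightly more computational presentation of the same invariant, not a different argument.
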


\begin{figure}
\centering
\includegraphics[page=3]{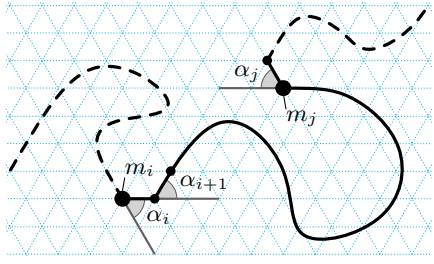}
\caption{Illustration of turn angle (Definition~\ref{def:oa}).  The turn angles $\alpha_i$ and $\alpha_{i+1}$ are positive (and to the left), and $\alpha_j$ is negative (and to the right).}
\label{fig:state-chain}
\end{figure}

\section{Tools for reasoning about  Turning Machines}\label{sec:tools}
\noindent The following lemma is illustrated in Figure~\ref{fig:60} (right).
\begin{lemma}\label{lem:180:li}
Let $n\in\Nset$ and let $T^{\leq 3}_{n}$ be a Turning Machine with initial states $0\le s(m_i)\le 3$ for all $0\leq i\leq n-1$. 
For any monomer $m_i$ in a reachable configuration $c$ of $T^{\leq 3}_{n}$, the monomers $\head(m_i)$ are positioned on or above $\ell_i$, and $\tail(m_i)$ are positioned on or below $\ell_i$, where $\ell_i$ is a horizontal line passing through $m_i$.
\end{lemma}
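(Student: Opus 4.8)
The plan is to prove this by induction on the number of steps in a trajectory from the initial configuration $c_0$ to the reachable configuration $c$. The base case is immediate: in $c_0$ every monomer sits on the $x$-axis, so for every $i$ the lines $\ell_i$ all coincide with the $x$-axis, and $\head(m_i)$ and $\tail(m_i)$ both lie on $\ell_i$, hence trivially on-or-above and on-or-below. For the inductive step, assume the claim holds for a reachable configuration $c$, and let $c \ra c'$ be a step arising from applying a turning rule $R(c,j)$ to some monomer $m_j$ with $s(m_j) > 0$ (all states are in $\{0,1,2,3\}$, so only anti-clockwise turns occur). I need to show the invariant still holds for every monomer $m_i$ in $c'$.

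The key geometric fact to isolate first is the per-step behaviour of a single monomer: when $m_j$ turns, $\head(m_j)$ is translated by $\vec d_{j'}$ where, using the tuple $\vec d = (\vx,\vy,\vw,-\vx,-\vy,-\vw)$ and the footnote's remark, this is the direction of $m_j$ rotated anti-clockwise by $2\pi/3$. Since $m_j$ points in one of the six grid directions, I want to observe that for any of these, rotating by $2\pi/3$ and then taking the unit step $\vec d_{j'}$ moves strictly upward in the $y$-coordinate \emph{relative to the line $\ell_j$} — more precisely, I should verify that whenever $m_j$ points in a direction that keeps $\head(m_j)$ on or above $\ell_j$ (as the induction hypothesis guarantees), the translate $\vec d_{j'}$ has a nonnegative $y$-component and in fact $\headp(m_j)$ lands strictly above $\ell_j$. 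This is exactly the fact flagged in the caption of Figure~\ref{fig:60}: $\headp(m_i)$ sits strictly above $\ell_i$. This is a finite case check over the six possible directions of $m_j$ combined with the constraint (from the hypothesis, or from the fact that states never exceed $3$) on which directions are actually attainable.

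Now I split into cases according to the position of $m_i$ relative to $m_j$ in the chain. If $i < j$, then $m_i$ does not move in the step (it is part of $\tail(m_j)$, which is fixed), so $\ell_i$ is unchanged; moreover $\tail(m_i) \subseteq \tail(m_j)$ is unchanged, so $\tail(m_i)$ stays on or below $\ell_i$. For $\head(m_i)$: it splits into the part $m_{i+1},\dots,m_j$ (unchanged, and on or above $\ell_i$ by hypothesis) and the part $m_{j+1},\dots,m_{n-1}$, which is translated by $\vec d_{j'}$. For the translated part I use the hypothesis applied to $m_j$: $\head(m_j)$ was on or above $\ell_j$, and $\ell_j$ itself is on or above $\ell_i$ (since $m_j \in \head(m_i)$, so $m_j$ is on or above $\ell_i$); after translating by $\vec d_{j'}$, whose $y$-component I have shown is $\geq 0$, these monomers remain on or above $\ell_j$, hence on or above $\ell_i$. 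The symmetric argument handles the case $i > j$: now $m_i$ is translated by $\vec d_{j'}$, so $\ell_i$ shifts up by the $y$-component of $\vec d_{j'}$; both $\head(m_i)$ and $\tail(m_i) \cap \head(m_j)$ shift by the same vector and so their relative position to $\ell_i$ is preserved, while the part of $\tail(m_i)$ lying in $\tail(m_j)$ does not move — but that part was on or below $\ell_j \leq \ell_i \leq \ell_i'$ (the new line), and since $\ell_i$ only moved up, it stays on or below. The remaining case $i = j$ uses the single-monomer fact from the previous paragraph directly: $\tail(m_j)$ is unchanged and was on or below $\ell_j$; $\head(m_j)$ becomes $\headp(m_j)$, which sits strictly above $\ell_j$, in particular on or above it.

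The main obstacle I anticipate is the bookkeeping in the $i \ne j$ cases: one has to carefully track that $\ell_i$ moves (or not), that $\head(m_i)$ and $\tail(m_i)$ each split into a moving and a non-moving part, and that the ordering of horizontal lines $\ell_i \leq \ell_j$ (or $\geq$) transfers correctly through the translation — all of which hinges on the sign of the $y$-component of $\vec d_{j'}$ being controlled. Getting that sign right, i.e.\ pinning down precisely which of the six directions $m_j$ can point in when $0 \le s(m_j) \le 3$ and checking that $\vec d_{j'}$ never points strictly downward in those cases, is the one genuinely model-specific computation; everything else is the propagation of a monotone invariant through a local move. It may be cleanest to first prove, as a standalone sublemma, that in any reachable configuration of $T^{\le 3}_n$ each monomer points in one of the directions $\{\vx, \vy, \vw, -\vx\}$ (the states $0,1,2,3$ correspond to cumulative turns of $0,\pi/3,2\pi/3,\pi$ from east), which then makes the finite case check transparent.
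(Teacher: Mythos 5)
Your closing suggestion --- first prove as a sublemma that in any reachable configuration of $T^{\le 3}_{n}$ every monomer points in one of the directions $\{\vx,\vy,\vw,-\vx\}$, i.e.\ at an angle in $\{0^\circ,60^\circ,120^\circ,180^\circ\}$ --- is in fact the paper's entire proof: each of these four directions has non-negative $y$-component, so walking forward along the chain from $m_i$ the $y$-coordinate is non-decreasing, and both halves of the lemma follow immediately, with no induction on trajectory steps at all. Had you led with that observation, you would be done.

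The inductive argument you actually develop, however, rests on a claim that is false. You assert that whenever $m_j$ turns, the translation $\vec d_{j'}$ applied to $\head(m_j)$ has non-negative $y$-component, and that $\headp(m_j)$ lands strictly above $\ell_j$. Carry out the finite case check you defer: the head translates in the current direction of $m_j$ rotated by $2\pi/3$. For $m_j$ pointing in direction $\vw$ (at $120^\circ$) --- which genuinely occurs, e.g.\ in $L^3_n$ a monomer in state $1$ that has already turned twice points in $\vw$ and may turn again --- the translation is $-\vy$, which points strictly \emph{downward}, and the first monomer of $\headp(m_j)$ lands exactly \emph{on} $\ell_j$, not strictly above it. (The ``strictly above'' remark in the caption of Figure~\ref{fig:60} concerns $L^1_{11}$, where only $\vx\to\vy$ turns occur.) Several steps of your case analysis lean on the false sign claim, e.g.\ ``after translating by $\vec d_{j'}$, whose $y$-component \ldots is $\geq 0$, these monomers remain on or above $\ell_j$'' and ``since $\ell_i$ only moved up, it stays on or below''; when the translation is $-\vy$ these inferences break, so the induction as written does not go through. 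The conclusion can be rescued --- when $m_j$ points in $\vw$, every monomer of $\head(m_j)$ sits at height at least $y(\pos(m_j))+1$ by the directional fact, so translating down by one unit still leaves it on or above $\ell_j$ --- but the clean repair is simply to discard the step-by-step induction and argue directly from the directional sublemma, as the paper does.
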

\begin{proof}
The claim follows from the fact that in any configuration of $\tml{n}{\leq 3}$, and for any $j\in\{0,1,\ldots,n-2\}$  
 the angle of the vector $\overrightarrow{ \pos(m_j) \pos(m_{j+1}) }$ (from monomer $m_j$ to $m_{i+1}$) is either $0^\circ$, 
$60^\circ$, $120^\circ$, or $180^\circ$ (and, in particular, is not  strictly between $180^\circ$ and  $360^\circ$).
\end{proof}

The notion of turn angle of a monomer is crucial to our analysis and is  illustrated in Figure~\ref{fig:state-chain}.\begin{definition}[turn angle]\label{def:oa}
Let $c$ be the configuration of an $n$-monomer Turning Machine and let $0\leq i < n-1$. 
The {\em turn angle $\alpha_i$ at monomer $m_i$} is the angle between $\overrightarrow{\pos(m_{i-1}) \pos(m_i)}$ and $\overrightarrow{\pos(m_{i}) \pos(m_{i+1})}$, and it is the positive counterclockwise angle if the points $\pos(m_{i-1}), \pos(m_i), \pos(m_{i+1})$ make a left turn\footnote{The notion of left or right turn along the three points $\pos(m_{i-1}), \pos(m_i), \pos(m_{i+1})$ can be formalised by considering the line $\ell_i$ running through $\pos(m_i)$, in the direction $\overrightarrow{\pos(m_{i-1}) \pos(m_i)}$, noting that $\ell_i$ cuts the plane in two, and defining the left- and right-hand side of the plane with respect to the vector along $\ell_i$.}, and the negative clockwise angle otherwise. 
\end{definition}

For a monomer $m_i$, the following definition gives a measure, $\Delta s(m_i)$, of how its state $s(m_i)$ has progressed since the initial configuration.  
\begin{definition}
Let $c$ be a reachable configuration of an $n$-monomer Turning Machine, 
let~$m_i$ be a monomer with state $s(m_i)$ in $c$  and  initial state  $s_0(m_i) \geq 0$. 
We define $\Delta s(m_i)$ to be the number of rule applications to (or, moves of) the monomer $m_i$ from the initial configuration to $c$.
That is, $\Delta s(m_i) = s_0(m_i) - s(m_i)$.
\end{definition}

\begin{lemma}[difference of State is $\leq 2$]\label{lem:s2} 
Let $n\in\Nset$, and let $c$ be any reachable configuration of an $n$-monomer Turning Machine $T_n$ with all monomers pointing in the same direction in its initial configuration, then
\[|\Delta s(m_i) - \Delta s(m_{i+1})| \leq  2\,,\] for all $0\le i<n-1$.
\end{lemma}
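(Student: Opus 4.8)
The plan is to track the quantity $\Delta s(m_i) - \Delta s(m_{i+1})$ along a trajectory and show it stays in $\{-2,-1,0,1,2\}$ by induction on the number of steps. In the initial configuration all monomers point in the same direction, so every turn angle $\alpha_i$ is $0$, and in particular $\Delta s(m_i) = \Delta s(m_{i+1}) = 0$ for all $i$, so the base case holds. For the inductive step, I would observe that a single rule application to some monomer $m_j$ changes $\Delta s(m_j)$ by exactly $1$ and leaves $\Delta s(m_k)$ unchanged for all $k \neq j$; hence the only pairs $(i,i+1)$ whose difference can change are $(j-1,j)$ and $(j,j+1)$, and each changes by exactly $1$. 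So the real content is: whenever $|\Delta s(m_{j-1}) - \Delta s(m_j)|$ or $|\Delta s(m_j) - \Delta s(m_{j+1})|$ is about to leave the interval $[-2,2]$, the rule that would do so is in fact blocked (not applicable), so that step never occurs.

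The key geometric link is between the difference $\Delta s(m_i) - \Delta s(m_{i+1})$ and the local shape of the chain near $m_{i+1}$. Since all monomers start pointing east, the direction of monomer $m_k$ after $\Delta s(m_k)$ moves is east rotated counterclockwise by $\Delta s(m_k)\cdot \pi/3$; consequently the turn angle at $m_{i+1}$ satisfies $\alpha_{i+1} = (\Delta s(m_i) - \Delta s(m_{i+1}))\cdot \pi/3 \pmod{2\pi}$. I would make this precise (it is essentially Definition~\ref{def:oa} together with the fact that each move rotates the pointing direction by $\pi/3$), and note that the turn angle is a genuine angle in $(-\pi,\pi]$ taking one of the six values $k\pi/3$. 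So $|\Delta s(m_i) - \Delta s(m_{i+1})| \le 2$ is equivalent to the turn angle at $m_{i+1}$ lying in $\{-2\pi/3,-\pi/3,0,\pi/3,2\pi/3\}$, i.e.\ the chain does not ``double back'' at $m_{i+1}$. The step that would violate the bound is precisely a move of $m_j$ (for $j = i$ or $j = i+1$) that pushes a turn angle from $\pm 2\pi/3$ to $\pm\pi$ (the $\pm\pi$ case is the forbidden configuration in which $m_{j-1}, m_j, m_{j+1}$ lie on a line with $m_{j+1}$ back on top of $m_{j-1}$), or from $\pm\pi$ further; but a turn angle of exactly $\pm\pi$ already means $\pos(m_{j+1}) = \pos(m_{j-1})$, a self-intersection, which never occurs in a reachable configuration. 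Equivalently, and cleaner to write: if $|\Delta s(m_j) - \Delta s(m_{j+1})| = 2$ already, then a further move of $m_j$ in the direction that increases this gap would translate $\head(m_j)$ so that $m_{j+1}$ lands on $m_{j-1} \in \tail(m_j)$, hence the rule is blocked; the symmetric statement handles the pair $(j-1,j)$.

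I expect the main obstacle to be handling the blocking condition carefully enough. The definition of blocking involves the whole of $\tail(m_j)$ and the translated head $\headp(m_j)$, so I need to argue that the specific overlap causing the bound violation — $m_{j+1}$ colliding with $m_{j-1}$ — is forced, rather than merely possible, whenever the relevant $|\Delta s|$ difference is already $2$ and the move would push it to $3$. This is exactly where the earlier tools help: one can first reduce to the local picture using that only states in a bounded range occur (the monomers never cumulatively turn past the point where the relevant three consecutive monomers would be collinear), and then the collision of $m_{j+1}$ with $m_{j-1}$ is immediate from the unit-step geometry of $G_\triangle$. A secondary (minor) point is bookkeeping at the endpoints $i = 0$ and $i = n-2$, where one of the two affected pairs does not exist; this only makes the induction easier. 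Once the blocked-step claim is established, the induction closes and the lemma follows.
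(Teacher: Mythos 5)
Your proof is correct and follows essentially the same route as the paper's: both observe that the difference starts at $0$, changes by at most $1$ per rule application, and can never equal $3$ because a difference of $3$ forces $\pos(m_{i+2})=\pos(m_i)$, a self-intersection that the model forbids. The paper simply invokes that reachable configurations are simple by definition rather than unwinding the blocking condition as you do (and note a small index swap in your collision bookkeeping: for the pair $(j,j+1)$ the forced collision is the translated $m_{j+2}$ landing on $m_j$, whereas $m_{j+1}$ landing on $m_{j-1}$ is the case for the pair $(j-1,j)$).
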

\begin{proof}
Let $m_k^t$, for $t\in\Nset$ and $k \in \{0,1,\ldots,n-1\}$, denote the $k^\mathrm{th}$ monomer in the $t^\mathrm{th}$ configuration~$c_t$.
Initially, $\Delta s(m^0_j)=0$ for all monomers $m_j$, and thus~${|\Delta s(m^0_i) - \Delta s(m^0_{i+1})|=0}$.

Observe, that $|\Delta s(m_i) - \Delta s(m_{i+1})| \neq 3$ because otherwise $\pos(m_i) = \pos(m_{i+2})$ making~$c$ a self-intersecting (non-simple) configuration, contradicting its definition.

By Equation~\ref{eq:r}, when a rule is applied to one of $m_i^t$ or $m_{i+1}^t$ the absolute value of its state decreases by~$1$ and its $\Delta s(\cdot)$ increases by~$1$.
Then $|\Delta s(m^t_i) - \Delta s(m^t_{i+1})| = |\Delta s(m^{t-1}_i) - \Delta s(m^{t-1}_{i+1})| \pm 1$.
When a rule is applied to  some other monomer $m_k$ with $i\neq k\neq j$, then $|\Delta s(m^t_i) - \Delta s(m^t_{i+1})| = |\Delta s(m^{t-1}_i) - \Delta s(m^{t-1}_{i+1})| \pm 0$.
Thus, after each rule application the value of $|\Delta s(m_i) - \Delta s(m_{i+1})|$ changes by at most $1$, and as it cannot be equal to $3$, we have that $|\Delta s(m_i) - \Delta s(m_{i+1})|\le 2$.
\end{proof}
We can now prove the following lemma, which gives a relation between the states of any two monomers of a Turning Machine and the geometry of the current configuration.
\begin{lemma}\label{lem:state-chain-general}
Let $c$ be any reachable configuration of an $n$-monomer Turning Machine $T_n$, whose initial configuration $c_0$ has all monomers pointing in the same direction, 
and let $m_i$ and $m_j$ be two monomers of $c$ such that $i<j<n-1$, then
\[
\Delta s(m_j)-\Delta s(m_i)
=\frac{3}{\pi}\sum_{k=i+1}^{j}\alpha_k\,,
\]
where $\alpha_k$ is the turn angle at monomer $m_k$.
\end{lemma}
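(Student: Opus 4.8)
The plan is to prove the identity by induction on $j-i$, using the local relationship between turn angle and state change at a single monomer as the base case. First I would establish the base case $j = i+1$: I claim that $\alpha_{i+1} = \frac{\pi}{3}(\Delta s(m_{i+1}) - \Delta s(m_i))$. To see this, note that the turn angle $\alpha_{i+1}$ measures the angle from $\overrightarrow{\pos(m_i)\pos(m_{i+1})}$ to $\overrightarrow{\pos(m_{i+1})\pos(m_{i+2})}$, i.e. the angular difference between the direction in which $m_i$ points and the direction in which $m_{i+1}$ points. Since all monomers start pointing in the same direction (turn angle $0$ everywhere initially), and since each rule application to a monomer rotates the direction it points by exactly $+\pi/3$ (a counterclockwise turn, as all initial states are in $\Nset^+$ so $\Delta s$ increments correspond to anticlockwise $\pi/3$ turns), the direction of $m_k$ has rotated by exactly $\Delta s(m_k)\cdot \pi/3$ from its initial direction. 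Hence the relative angle between consecutive monomers $m_i$ and $m_{i+1}$ is $(\Delta s(m_{i+1}) - \Delta s(m_i))\cdot \pi/3$, which is precisely $\alpha_{i+1}$ — here Lemma~\ref{lem:s2} guarantees $|\Delta s(m_{i+1}) - \Delta s(m_i)| \leq 2$, so this angle lies in $[-2\pi/3, 2\pi/3]$ and is genuinely the turn angle (no wraparound ambiguity).

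For the inductive step, I would write
\[
\Delta s(m_j) - \Delta s(m_i) = \bigl(\Delta s(m_j) - \Delta s(m_{j-1})\bigr) + \bigl(\Delta s(m_{j-1}) - \Delta s(m_i)\bigr),
\]
apply the base case to the first bracket to get $\frac{3}{\pi}\alpha_j$, and apply the induction hypothesis to the second bracket to get $\frac{3}{\pi}\sum_{k=i+1}^{j-1}\alpha_k$; summing gives $\frac{3}{\pi}\sum_{k=i+1}^{j}\alpha_k$, as desired. One technical point to handle carefully: the statement requires $j < n-1$, and likewise the base-case argument invokes the direction of $m_{j}$ (hence needs $m_{j}$ to point somewhere, and $m_{j+1}$ to exist), which is exactly why the index bound $j \le n-2$ appears — the last monomer $m_{n-1}$ has no direction and no turn angle. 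I should state the induction over $j$ ranging in $\{i+1, \ldots, n-2\}$ so that every $\alpha_k$ invoked is well defined.

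The main obstacle, and the step deserving the most care, is the base-case claim that the turn angle equals $\frac{\pi}{3}\,\Delta(\text{state})$ between adjacent monomers — specifically justifying that the cumulative effect of $\Delta s(m_k)$ rule applications on the \emph{pointing direction} of $m_k$ is a rotation by exactly $\Delta s(m_k)\cdot\pi/3$, and that this is the turn angle rather than that angle modulo $2\pi$. The first part follows from the blocking/movement rule definition: each application to $m_k$ advances its direction index by $+2 \bmod 6$ in the tuple $\vec d = (\vx,\vy,\vw,-\vx,-\vy,-\vw)$, i.e. a $+\pi/3$ rotation (two steps of $\pi/6$... more precisely, consecutive entries of $\vec d$ differ by $\pi/3$, wait — they differ by $2\pi/6 = \pi/3$; advancing by $2$ positions is $2\pi/3$; I need to recheck this against the footnote "head translates in the direction of $m_i$ rotated by $2\pi/3$", and reconcile with the local picture where the monomer itself rotates by $\pi/3$ per step — the resolution is that the \emph{pointing direction} of $m_i$ advances by one slot in $\vec d$, a $\pi/3$ rotation, since a step takes $m_i$ from pointing at $m_{i+1}$ to pointing at the new $m_{i+1}$). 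The second part (no $2\pi$ wraparound) is exactly where Lemma~\ref{lem:s2} does the work, bounding the per-edge angle into a range where it unambiguously equals the signed turn angle.
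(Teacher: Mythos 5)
Your proof is correct and follows essentially the same route as the paper's: establish the per-edge identity $\alpha_{k}=\frac{\pi}{3}\bigl(\Delta s(m_{k})-\Delta s(m_{k-1})\bigr)$, invoking Lemma~\ref{lem:s2} to keep the angle in $[-2\pi/3,2\pi/3]$ and so rule out wraparound, then telescope (your induction on $j-i$ is just the summation made explicit). The only differences are cosmetic --- you track absolute pointing directions where the paper tracks the incremental change to $\alpha_k$ per move, and your base case is phrased for positive states only, though with the signed quantity $\Delta s = s_0 - s$ the same computation covers clockwise (negative-state) moves as well.
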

\begin{proof}
For any intermediate configuration, the turn angle $\alpha_{i+1}$ between monomers $m_i$ and $m_{i+1}$ depends only on the number of moves each monomer has made.
Initially, $\alpha_{i+1}=0$.
It increases by $\pi/3$ each time monomer $m_i$ moves anti-clockwise or monomer $m_{i+1}$ moves clockwise, and it decreases by $\pi/3$ every time monomer $m_i$ moves clockwise or monomer $m_{i+1}$ moves anti-clockwise.
By Lemma~\ref{lem:s2}, for two consecutive monomers $m_i$ and $m_{i+1}$, in any configuration, $|\Delta s(m_i) - \Delta s(m_{i+1})|\le 2$.
Hence, for a pair of consecutive monomers $m_i$ and $m_{i+1}$, the turn angle $\alpha_{i+1}$ is in the range $[-2\frac{\pi}{3}, 2\frac{\pi}{3}]$,
and thus $\alpha_{i+1}=\frac{\pi}{3}(\Delta s(m_{i+1}) - \Delta s(m_{i}))$.
Summing over all $i$ gives the lemma conclusion. 
\end{proof}
The following technical lemma is used extensively for our main results. Intuitively, it tells us that high-state monomers are not blocked.
\begin{lemma}\label{lem:unblocked-states}
Let $T^{\le 5}_{n}$ be a Turning Machine with initial state values $0\le s(m_i)\le 5$ for all $0 \le i < n$. 
In any reachable configuration $c$ of $T^{\le 5}_{n}$ no monomer $m_i$ with $\Delta s(m_i) \le 1$ is blocked (neither temporarily blocked nor permanently blocked).
\end{lemma}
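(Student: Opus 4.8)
The plan is to show that a monomer $m_i$ with $\Delta s(m_i) \le 1$ can always move, by showing that the positions it would need to occupy after moving (i.e., the positions of $\headp(m_i)$) cannot intersect the positions of $\tail(m_i)$. The key geometric fact is Lemma~\ref{lem:180:li} applied in the right frame of reference: since all initial states are in $\{0,1,\ldots,5\}$ but we are looking locally near $m_i$ where $\Delta s(m_i)$ is small, the relevant sub-chain behaves like a $T^{\le 3}$ machine. More precisely, if $\Delta s(m_i) \le 1$, then $m_i$ has turned at most once from its initial east-pointing orientation, so $m_i$ points either east or at $60^\circ$ (direction $\vx$ or $\vy$). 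I would like to conclude that $\head(m_i)$ lies (weakly) on one side of a line through $\pos(m_i)$ while $\tail(m_i)$ lies (weakly) on the other side, and that the motion of $\head(m_i)$ pushes it strictly to the $\head$ side, so no collision with $\tail(m_i)$ is possible.

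First I would use Lemma~\ref{lem:state-chain-general} to control the turn angles: for any $k > i$, the partial sum $\frac{3}{\pi}\sum_{k'=i+1}^{k}\alpha_{k'} = \Delta s(m_k) - \Delta s(m_i)$. Combined with $\Delta s(m_i) \le 1$ and $\Delta s(m_k) \le s_0(m_k) \le 5$, this bounds how far the chain $\head(m_i)$ can wind relative to $m_i$'s current heading — it stays within a half-turn's worth of cumulative angle on the relevant side. Symmetrically, for $k < i$, $\Delta s(m_i) - \Delta s(m_k) \le 1$ (since $\Delta s(m_k) \ge 0$), so $\tail(m_i)$ winds very little relative to $m_i$. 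The upshot is a clean separating-line statement: letting $\ell$ be the line through $\pos(m_i)$ perpendicular to the bisector of $m_i$'s current direction and its post-move direction (both of which lie in a $60^\circ$ cone near east), $\tail(m_i)$ stays weakly on one closed side of $\ell$, $\head(m_i)$ stays weakly on the other, and after the move $\headp(m_i)$ is pushed to the strictly open side. Hence $\headp(m_i) \cap \tail(m_i) = \emptyset$ and the rule is applicable, so $m_i$ is not blocked (and in particular not temporarily blocked, since it can move right now in any reachable configuration $c$).

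The main obstacle I anticipate is making the ``separating line'' argument fully rigorous when $m_i$ and nearby monomers have winding that is not literally monotone — the turn angles $\alpha_k$ can be positive or negative along $\head(m_i)$, so the chain is not simply a convex arc, and I need the cumulative-angle bound from Lemma~\ref{lem:state-chain-general} to be genuinely strong enough to keep all of $\head(m_i)$ within a closed half-plane. The delicate point is the boundary case where $\Delta s(m_i) = 1$ and some later monomer has $\Delta s(m_k) = 5$: then the cumulative turn is $4\pi/3$, which already exceeds $\pi$, so a naive half-plane claim fails. I expect the resolution is that such large cumulative turns force the far part of $\head(m_i)$ to curl back toward $m_i$'s \emph{tail} side but, by Lemma~\ref{lem:180:li}-style reasoning applied at an appropriate intermediate monomer (and using that $\tail(m_i)$ has winding $\le \pi/3$ relative to $m_i$), those curled-back monomers still cannot reach the cells that $\headp(m_i)$ would occupy — one must track the actual grid positions, not just angles, using that each $\Delta s$ step is a unit translation. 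I would handle this by choosing the separating line relative to $m_i$'s current direction (not a fixed axis) and arguing that the one extra unit of translation from the move of $m_i$ always clears the chain, possibly splitting into the two cases $\Delta s(m_i) = 0$ and $\Delta s(m_i) = 1$.
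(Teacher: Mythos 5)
Your separating-line strategy is essentially the proof of Lemma~\ref{lem:180:li}, and it only works when every monomer's direction has non-negative $y$-component, i.e.\ for machines with states $\le 3$. For $T^{\le 5}_{n}$ the claim at the heart of your argument is false: a monomer $m_k$ in $\tail(m_i)$ (or in $\head(m_i)$) may have $\Delta s(m_k)=4$ or $5$ and hence point in direction $-\vy$ or $-\vw$, so neither $\tail(m_i)$ nor $\head(m_i)$ is confined to a closed half-plane through $\pos(m_i)$ for any choice of separating line. You correctly flag this (the cumulative turn can reach $4\pi/3>\pi$), but the proposed repair --- ``track the actual grid positions'' and argue that ``the one extra unit of translation always clears the chain'' --- is not an argument; it is precisely the hard case, and nothing in the proposal rules out a monomer of $\head(m_i)$ curling around and landing, after the translation, on a cell of $\tail(m_i)$. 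So there is a genuine gap: the case you defer is the whole content of the lemma.

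The paper closes it with a different, global idea. Assume $m_i$ is blocked, so some $m_j\in\head(m_i)$ would land on the position of some $m_k\in\tail(m_i)$. The chain from $\pos(m_k)$ to $\pos(m_j)$, closed by the unit edge back to $\pos(m_k)$, is a \emph{simple} polygon (configurations are non-self-intersecting), and the exterior angles of a simple polygon sum to exactly $\pm 2\pi$. Lemma~\ref{lem:state-chain-general} converts this $\pm 2\pi$ into a difference of $\pm 6$ between $\Delta s(m_{j-1})$ and $\Delta s(m_k)$, up to the two corner angles $\beta_j,\beta_k\in[-2\pi/3,2\pi/3]$ (each worth at most $2$ units), and the relation $\Delta s(m_k)=\Delta s(m_i)+2+\frac{3}{\pi}\beta_k$ ties everything back to $\Delta s(m_i)\le 1$. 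The two orientations of the polygon then force $\Delta s(m_{j-1})\ge 6$ or $\Delta s(m_{j-1})\le -1$, both impossible when all initial states lie in $\{0,\dots,5\}$. This quantization of the winding of a simple closed curve to $\pm 2\pi$ is exactly the ingredient your half-plane approach is missing; without it the hypothesis $s\le 5$ never enters the argument in the decisive way.
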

\begin{proof}
\begin{figure}[t]
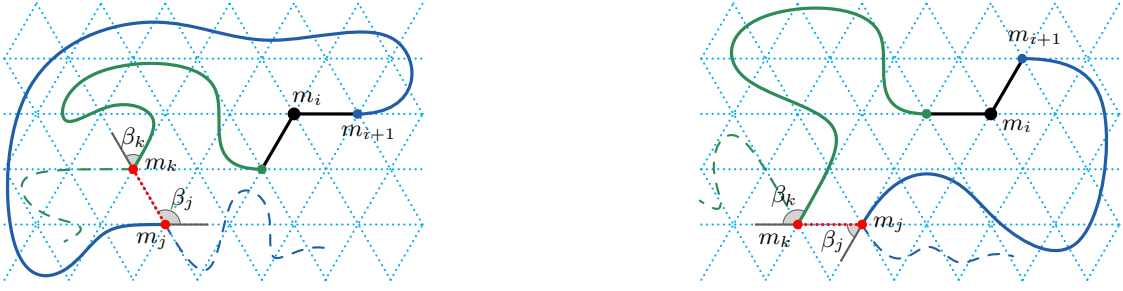

\centering
\includegraphics[page=7]{images/figures-new.pdf}
\hfill
\includegraphics[page=8]{images/figures-new.pdf}
\caption{Illustration for Lemma~\ref{lem:unblocked-states}.
Monomer $m_i$ is shown in black,  
$\head(m_i)$ is shown in blue and 
$\tail(m_i)$ is shown as the green curve plus the black monomer $m_i$.
Left: monomer $m_i$ is in its initial state ($\Delta s(m_i)=0$), and polygon $P$ is traversed anti-clockwise. 
Right: monomer $m_i$ has moved once ($\Delta s(m_i)=1$), and polygon $P$ is traversed clockwise.}
\label{fig:unblocked-states}
\end{figure}

Suppose, for the sake of contradiction, there is a blocked monomer $m_i$ with $\Delta s(m_i) \le~1$.
Then there exist two monomers $m_j\in \head(m_i)$ and $m_k \in \tail(m_i)$ such that $\pos(m_k)=\pos'(m_j)$, where $\pos'(m_j)$ is the position of $m_j$ in $\headp(m_i)$ (see Figure~\ref{fig:unblocked-states}). 

By definition of head and tail we know that $k\leq i < j$.
Consider the closed chain $P=\pos(m_k),\pos(m_{k+1}),\dots,\pos(m_{j-1}),\pos(m_{j}),\pos(m_k)$.
Since configurations are simple, $P$ defines a simple polygon.
The turn angles of a simple polygon sum to $2\pi$ if the polygon is traversed anticlockwise (interior of $P$ is on the left-hand side while traversing), or $-2\pi$ if the polygon is traversed clockwise (interior of $P$ is on the right-hand side). 
For $P$, this sum is defined as: 
\[
\alpha_P = \sum_{\ell=k+1}^{j-1}\alpha_{\ell} + \beta_j + \beta_k  = \pm 2\pi\,,
\]
where $\alpha_\ell$ is the turn angle at monomer $m_\ell$, and $\beta_j$ and $\beta_k$ are the turn angles of the polygon at vertices $\pos(m_j)$ and $\pos(m_k)$ respectively (see Figure~\ref{fig:unblocked-states}).
More precisely,
\[
\begin{aligned}
\alpha_\ell & = \angle(\overrightarrow{\pos(m_\ell)}-\overrightarrow{\pos(m_{\ell-1})},\overrightarrow{\pos(m_{\ell+1})}-\overrightarrow{\pos(m_\ell)})\,,\\
\beta_j & =\angle(\overrightarrow{\pos(m_j)}-\overrightarrow{\pos(m_{j-1})},\overrightarrow{\pos(m_k)}-\overrightarrow{\pos(m_j)})\,, \text{ and}\\
\beta_k & =\angle(\overrightarrow{\pos(m_k)}-\overrightarrow{\pos(m_j)},\overrightarrow{\pos(m_{k+1})}-\overrightarrow{\pos(m_k)})\,.
\end{aligned}
\]
Furthermore, by Lemma~\ref{lem:state-chain-general},
\[
\Delta s(m_{j-1})-\Delta s(m_k) = \frac{3}{\pi}\sum_{\ell=k+1}^{j-1}\alpha_{\ell}\,.
\]
Thus,
\[
\begin{split}
\Delta s(m_{j-1}) = \Delta s(m_k) + \frac{3}{\pi}\sum_{\ell=k+1}^{j-1}\alpha_{\ell} = \Delta s(m_k) + \frac{3}{\pi}(\pm 2\pi -\beta_j - \beta_k)\\ = \Delta s(m_k) \pm 6 - \frac{3}{\pi}(\beta_j + \beta_k) \,.
\end{split}
\]
Observe that when a monomer $m_i$ moves, its head translates in the direction corresponding to the current direction of $m_i$ rotated by angle $2\pi/3$.
Therefore, the state of $m_k$ can be represented as a function of the state of $m_i$ and the angle $\beta_k$, more precisely 
\[
\Delta s(m_k) = \Delta s(m_i) + 2 + \frac{3}{\pi}\beta_k \,.
\]
(See Figure~\ref{fig:unblocked-states} for an example.) Therefore, by the previous two equalities 
\[
\Delta s(m_{j-1}) = \Delta s(m_i) + 2 \pm 6 - \frac{3}{\pi}\beta_j \,.
\]
Recall, that the angle $\beta_j \in [-2\pi/3,2\pi/3]$, that $0 \leq \Delta s(m_i) \leq 1$ by the assumption of the lemma, and that $\Delta s(m_{j-1}) \leq s$.
If the polygon defined by $P$ is traversed anti-clockwise, then
\[
\Delta s(m_{j-1}) = \Delta s(m_i) + 8 - \frac{3}{\pi}\beta_j \ge 0 + 8 - 2 = 6\,,
\]
which implies that $s(m_{j-1})$ is out of the range of valid states, as $m_{j-1}$ must have moved more times as its initial state.
Else, if the polygon $P$ is traversed clockwise, then
\[
\Delta s(m_{j-1}) = \Delta s(m_i) - 4 - \frac{3}{\pi}\beta_j \le 1 - 4 + 2 = - 1\,,
\]
which again implies that $s(m_{j-1})$ is out of the range of valid states, as $m_{j-1}$ must have moved in the wrong direction.
In either case we contradict that the state $s(m_{j-1})$ is in the range of valid states, and, therefore, the monomer $m_i$ is not blocked.
\end{proof}

\begin{lemma}\label{lem:pred-TM}
Let $\tml{n}{s}$ be a line-rotating Turning Machine  with $s\le 5$. 
Let $c$ be a reachable configuration of $\tml{n}{s}$ where each monomer $m_i$ in $c$ has $s_c(m_i) < s$. 
Then the line-rotating Turning Machine $\tml{n}{s-1}$
has a reachable configuration $c'$ 
 such that for every $m_i$, $s_{c'}(m_i)=s_{c}(m_i)$ and 
 the geometry (chain of positions) of $c$ is equal to that of the rotation of $c'$ by $\pi/3$ around the origin.
\end{lemma}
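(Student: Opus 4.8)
The plan is to establish a bijection between trajectories of $\tml{n}{s}$ (restricted to those that stay in the regime where every monomer still has positive state, i.e.\ has not yet made its final move) and trajectories of $\tml{n}{s-1}$, where corresponding configurations differ by a rigid rotation of $\pi/3$ about the origin. The key observation is that, under the hypothesis $s_c(m_i) < s$ for all $i$, every monomer of $\tml{n}{s}$ has made at least one move, so $\Delta s(m_i) \geq 1$ for all $i$; symmetrically, in $\tml{n}{s-1}$ the initial state of every (non-terminal) monomer is $s-1$, so the "missing first move" of each monomer in $\tml{n}{s}$ corresponds exactly to the absence of that first unit of state.

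First I would make precise the claimed correspondence at the level of single configurations. Given a reachable $c$ of $\tml{n}{s}$ with all $s_c(m_i) < s$, define $c'$ by keeping the state sequence identical, $s_{c'}(m_i) = s_c(m_i)$, and setting the position sequence of $c'$ to be the image of the position sequence of $c$ under rotation by $-\pi/3$ about the origin. I need to check that this is a legal configuration of $\tml{n}{s-1}$: that $\pos(m_0) = (0,0)$ (immediate, since the origin is fixed by the rotation), that the chain remains a simple path on $G_\triangle$ (rotation by $\pi/3$ is a symmetry of the triangular grid and preserves simplicity), and that the local direction of each monomer is consistent with its accumulated number of moves. For the last point I would use Lemma~\ref{lem:state-chain-general}: the turn angle $\alpha_k$ is determined entirely by $\Delta s(m_{k-1})$ and $\Delta s(m_k)$, so once the direction of $m_0$ is fixed, the entire chain geometry is forced by the state sequence. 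In $c$ the first monomer $m_0$ has direction obtained from east by rotating $\Delta s_c(m_0)$ times by $\pi/3$; in $c'$ it should have direction obtained from east by rotating $\Delta s_{c'}(m_0) = (s-1) - s_c(m_0) = \Delta s_c(m_0) - 1$ times, which is exactly the direction of $m_0$ in $c$ rotated back by $\pi/3$. So the rotation by $-\pi/3$ sends the forced geometry of $c$ to the forced geometry of $c'$, confirming consistency.

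Next I would argue reachability of $c'$ by induction on the length of a trajectory $c_0 = (\tml{n}{s}\text{'s initial config}) \to c_1 \to \cdots \to c_k = c$. The subtlety is that intermediate configurations $c_t$ need not satisfy the global hypothesis $s_{c_t}(m_i) < s$ for all $i$; so instead I would work with a trajectory that first brings every monomer down by one move (possible because, by Lemma~\ref{lem:unblocked-states}, any monomer with $\Delta s(m_i) \le 1$ is unblocked, so we can always fire a first move on any monomer that has not yet moved), and then argue that the remaining moves can be replayed. More cleanly: I would show directly that if $(c_t, c_{t+1})$ is a step of $\tml{n}{s}$ in which the moved monomer $m_i$ has $\Delta s_{c_t}(m_i) \ge 1$ both before and (hence) in a regime matched by $\tml{n}{s-1}$, then the rotated images $c_t', c_{t+1}'$ form a valid step of $\tml{n}{s-1}$ — because the applicability of a turning rule depends only on the state of $m_i$ being nonzero and on a geometric non-intersection condition between $\tail(m_i)$ and $\headp(m_i)$, and both the state (identical) and the intersection condition (rigid-motion invariant) are preserved. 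Running this over a suitably reordered trajectory that moves each monomer once first and then follows the moves of the given trajectory for $c$ yields a trajectory of $\tml{n}{s-1}$ ending at $c'$.

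The main obstacle I anticipate is exactly this reordering/replay step: a priori the trajectory reaching $c$ in $\tml{n}{s}$ might move some monomer a second time before some other monomer moves at all, so there is no guarantee that the naive ``delete one move per monomer'' map sends a valid trajectory to a valid trajectory step-by-step. The fix is to exploit that the set of moves applied to reach $c$ is determined (each $m_i$ moved exactly $\Delta s_c(m_i)$ times) and to build a fresh trajectory of $\tml{n}{s-1}$ from scratch: use Lemma~\ref{lem:unblocked-states} to justify that one can always make progress on a low-$\Delta s$ monomer, and use the invariance of the blocking condition under rigid motion together with Lemma~\ref{lem:state-chain-general} (geometry is a function of states) to certify that the configuration reached after applying the ``$c$-minus-one-per-monomer'' multiset of moves in some valid order is forced to equal $c'$. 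I expect the bookkeeping — carefully choosing an order in which the reduced multiset of moves of $\tml{n}{s-1}$ is legal, and invoking Lemma~\ref{lem:unblocked-states} at each step — to be the part requiring the most care, while the geometric core (rotation by $\pi/3$ is a grid symmetry, states determine geometry, blocking is rigid-motion invariant) is routine.
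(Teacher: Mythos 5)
Your setup is right and you have correctly located the crux, but the crux is not resolved. Up to the reordering step your plan coincides with the paper's: define $c'$ by rotating and keeping states, observe that states determine geometry (Lemma~\ref{lem:state-chain-general} plus the direction of $m_0$) so that \emph{if} some valid order of the reduced multiset of moves exists then its endpoint is forced to be $c'$, and note that the blocking condition is rigid-motion invariant. What is missing is the existence of that valid order. Lemma~\ref{lem:unblocked-states} only guarantees that monomers with $\Delta s\le 1$ are never blocked; it lets you fire the ``first move'' of every monomer, but says nothing about how to schedule the remaining moves of monomers that must still move two, three or four more times --- and those are exactly the moves for which blocking can occur. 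Saying you will ``carefully choose an order'' and that ``the bookkeeping requires the most care'' leaves unproven precisely what the lemma asserts: a priori, $\tml{n}{s-1}$ could reach a permanently blocked configuration before completing the reduced multiset. (You also cannot appeal to Theorem~\ref{thm:line-rotation-computes} for $\tml{n}{s-1}$ here: that theorem is proved by induction \emph{using} this lemma, and in any case it only says the target is reached, not that the specific intermediate configuration $c'$ is reachable.)

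The missing ingredient is a commutation (swap) argument, which is how the paper closes this gap. Take the given valid trajectory of $\tml{n}{s}$ reaching $c$ and repeatedly perform adjacent transpositions that move every ``first move'' (a move applied to a monomer still in state $s$) earlier: if two consecutive moves are applied to $m_i$ (arbitrary) and then to $m_j$ with $s(m_j)=s$, then doing $m_j$'s move first is legal by Lemma~\ref{lem:unblocked-states} (since $\Delta s(m_j)=0$), and doing $m_i$'s move second is legal because the configuration after both moves is identical in either order and blocking is exactly the condition that the resulting configuration self-intersects. Iterating brings all $n-1$ first moves to the front, after which $\tml{n}{s}$ sits on its initial line rotated by $\pi/3$ with all states $s-1$; the suffix of the reordered trajectory, transported by the rotation, is then verbatim a valid trajectory of $\tml{n}{s-1}$ ending at $c'$. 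Without this (or an equivalent) argument for reordering, your proposal does not go through.
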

\begin{proof} 
Consider the sequence $\rho_c$ rule applications (moves) that brings the initial configuration of $\tml{n}{s}$ to configuration $c$.
We claim that $\rho_c$ can be converted into another sequence $\rho_{c'}$, of the same length, in which the first $n-1$ moves are by monomers in state $s$.

First, we claim: 
for any two consecutive moves, where the second move is applied to a monomer in state $s$, swapping the two moves results in a valid sequence of moves transforming the Turning Machine into the same configuration.
Let the first move be applied to monomer $m_i$ which transitions from state $s'$ to $s'-1$, and the second move be applied to monomer $m_j$ which transitions from state $s$ to $s-1$.
Suppose for the sake of contradiction that swapping the moves results in at least one of the monomers $m_i$ or $m_j$ being blocked.
We begin by attempting to apply the move to monomer $m_j$, but, by Lemma~\ref{lem:unblocked-states}, that move is not blocked.
Then we attempt to apply a move to monomer $m_i$, but that is not blocked either since the coordinates of all monomers before and after swapping the two moves are exactly the same; i.e. the resulting configuration is a valid (non-self-intersecting) configuration in both cases. 
Hence neither monomer is blocked.

Thus, the original sequence of moves resulting in configuration $c$, can be converted into another sequence where the first $n-1$ moves are applied to monomers in state $s$. 
Then, after the first $n-1$ moves the configuration of $\tml{n}{s}$ is equivalent to the initial configuration of $\tml{n}{s}$ but rotated by $\pi/3$ and with all monomers in state $s-1$. Hence equivalent to the initial configuration of $\tml{n}{s-1}$ rotated by $\pi/3$. 

Applying the remaining moves to $\tml{n}{s-1}$  transforms it into configuration~$c'$. 
\end{proof}

\section{Line rotation: \texorpdfstring{$5\pi/3$}{5π/3} possible and fast, \texorpdfstring{$2\pi$}{2π} impossible}\label{sec:lines}
\subsection{Line rotation to \texorpdfstring{$5\pi/3$}{5π/3}}
In this section we show that for $1\leq s \leq 5$ the line-rotation Turning Machine $\tml{n}{s}$ computes its target configuration of a $s\pi/3$ rotated line (Theorem~\ref{thm:line-rotation-computes}), 
and does so in expected time $O(\log n)$ (Theorem~\ref{thm:s5fast}).
In addition to those results for any state $s \leq 5$, in Appendix~\ref{app:sec:rot} we include stand-alone proofs for each of $s=1$, $s=3$,  and $s=4$ which showcase a variety of geometric techniques for analysing Turning Machine movement, but are not needed to prove our main results. 
Also, the cases of $s=1$ and $s=3$ are illustrated in Figures~\ref{fig:60} and~\ref{fig:pi}.

\begin{figure}[t]
  \centering
    \includegraphics[page=4]{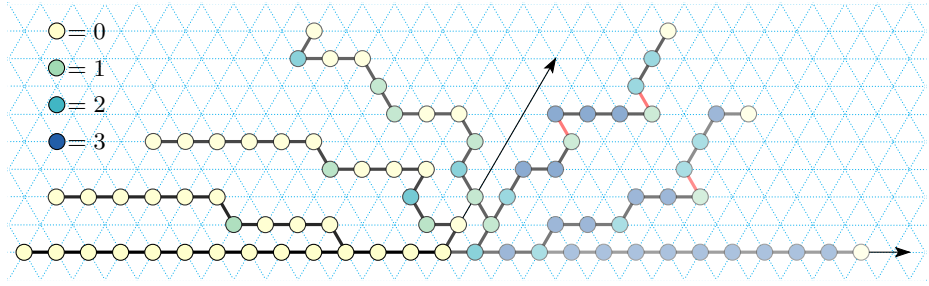}
  \caption{Example trajectory of the Turning Machine $\tml{n}{3}$  that rotates a line of east-pointing monomers by an angle of $\pi$. Illustration for 
  Theorem~\ref{thm:s5fast} with $s=3$
  (and for Lemma~\ref{lem:pi:blocking} and Theorem~\ref{thm:pi rotation} in Appendix~\ref{app:sec:rot}). 
  Seven configurations are shown, the initial configuration has all monomers in state 3 (blue), final in state 0 (yellow). 
  Darker shading indicates later in time. 
  A red bond (edge) indicates a blocked monomer. 
  The proof of Lemma~\ref{lem:pi:blocking} shows that only monomers in state 1 are ever blocked and only when they are adjacent to a monomer in state~3, and that all such blockings are temporary -- if we wait long enough they become unblocked.
  }
    \label{fig:pi}
\end{figure}

\newcommand{\tm}[2]{T_{#1}^{\leq #2}}

\begin{theorem}\label{thm:line-rotation-computes}
For each $n\in\Nset$ and $1 \le s\le 5$, the line-rotation Turning Machine $\tml{n}{s}$ computes its target configuration.
\end{theorem}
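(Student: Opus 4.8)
The plan is to induct on $s$, having first reduced the statement to the claim that no reachable configuration of $\tml{n}{s}$ is ever permanently blocked. First I would note that every trajectory of $\tml{n}{s}$ is finite: each rule application increments $\sum_i \Delta s(m_i)$ by exactly one, and this sum never exceeds $(n-1)s$. So every trajectory extends to a maximal one whose final configuration either is permanently blocked or has all monomers in state $0$; in the latter case $\Delta s(m_i)=s$ for all $i\le n-2$, so applying Lemma~\ref{lem:state-chain-general} to consecutive monomers forces every turn angle to vanish, the chain is straight, and it coincides with the target $c_t$ (the ray from the origin at angle $s\pi/3$). Hence it suffices to rule out permanent blocking.

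For the induction the base case $s=1$ is exactly Lemma~\ref{lem:pi/3 rotation} (or one can argue directly: a non-target configuration of $\tml{n}{1}$ must contain some $m_i$ with $i\le n-2$ and $\Delta s(m_i)=0$, which is unblocked by Lemma~\ref{lem:unblocked-states}). For the inductive step I would fix $2\le s\le 5$, assume $\tml{n}{s-1}$ computes its target, and take any reachable non-target configuration $c$ of $\tml{n}{s}$, splitting into two cases. If some monomer $m_i$ with $i\le n-2$ has not yet moved, then $s_c(m_i)=s\ge 1$ and $\Delta s(m_i)=0\le 1$, so Lemma~\ref{lem:unblocked-states} gives that $m_i$ is not blocked. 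Otherwise every monomer has moved at least once; since also $s_c(m_{n-1})=0<s$, we get $s_c(m_i)<s$ for \emph{every} monomer of $c$, so Lemma~\ref{lem:pred-TM} applies and yields a reachable configuration $c'$ of $\tml{n}{s-1}$ with the same states as $c$ and whose chain of positions, rotated by $\pi/3$ about the origin, is that of $c$.

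To finish I would transfer unblockedness from $c'$ to $c$. As $c$ is not the target, some state of $c$, hence of $c'$, is nonzero, so $c'$ is not the target of $\tml{n}{s-1}$; by the inductive hypothesis $c'$ is not permanently blocked, so some turning rule is applicable, say at $m_j$. The key observation is that rule applicability is invariant under the $\pi/3$ rotation relating $c'$ and $c$: rotation by $60^\circ$ about the origin is a symmetry of $G_\triangle$ that cyclically shifts the six unit directions $(\vx,\vy,\vw,-\vx,-\vy,-\vw)$, and since a move displaces $\head(m_j)$ by the direction of $m_j$ turned through $2\pi/3$, this displacement transforms covariantly under the rotation. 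Hence $\headp(m_j)$ and $\tail(m_j)$ in $c$ are the rotated images of their counterparts in $c'$, so their position sets intersect in $c$ iff they do in $c'$; together with $s_c(m_j)=s_{c'}(m_j)\ne 0$ this shows the same rule is applicable to $m_j$ in $c$, so $c$ is not permanently blocked. This closes the induction.

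The hard part will be this last step: pinning down that the blocking predicate commutes with the global $\pi/3$ rotation, which hinges on the $60^\circ$ rotational symmetry of the triangular grid and on the move-induced displacement being the monomer's direction rotated by $2\pi/3$ (so that it rotates along with everything else). The only other point needing a little care is checking, in the second case above, that \emph{all} monomers of $c$ --- including $m_{n-1}$, which stays in state $0$ throughout --- satisfy $s_c(m_i)<s$, so that Lemma~\ref{lem:pred-TM} is legitimately applicable; the rest is routine bookkeeping.
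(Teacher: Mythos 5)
Your proposal is correct and follows essentially the same route as the paper's proof: induction on $s$, with the base case and the "some monomer still in state $s$" case handled by Lemma~\ref{lem:unblocked-states}, and the remaining case handled by Lemma~\ref{lem:pred-TM} together with the transfer of non-blockedness across the $\pi/3$ rotation. The extra details you supply --- termination plus the Lemma~\ref{lem:state-chain-general} argument that an all-zero configuration must be the target, and the explicit check that the blocking predicate commutes with the global rotation --- are points the paper's proof leaves implicit, and they are argued correctly.
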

\begin{proof}
We prove by induction on $1\leq s \leq 5$ that any reachable configuration $c$ of $\tml{n}{s}$ is not permanently blocked.

Base case $s=1$. In any configuration reachable by $\tml{n}{1}$, monomers have either state $s=1$ or $0$. Monomers in state $s=1$ cannot be permanently blocked by Lemma~\ref{lem:unblocked-states}. Thus, any non-final configuration is not permanently blocked.

Assume for $s-1$ the claim is true, i.e.\ it holds for $\tml{n}{s-1}$. We will prove that for $s$ it is also true, i.e.\ it holds for $\tml{n}{s}$.
Suppose, for the sake of contradiction, there is a permanently blocked configuration $c$ of  $\tml{n}{s}$ for some $n\in\Nset$ and $s \leq 5$.
If there is no monomer in $c$ in state $s$, then by Lemma~\ref{lem:pred-TM} there exists a corresponding configuration $c'$ in $\tml{n}{s-1}$ with monomers $m'_0,m'_1,\dots,m'_{n-1}$, such that, for any monomer $m_i$ in $c$ with state $s_i < s$ the corresponding monomer $m'_i$ in $c'$ has the same state $s_i$.
Configurations $c$ and $c'$ form chains equal up to rotation by angle $\pi/3$.
Configuration $c'$ is not blocked by the induction hypothesis, thus configuration $c$ cannot be blocked either.

On the other hand, if there is a monomer $m_i$ in configuration $c$ in state $s$, then by Lemma~\ref{lem:unblocked-states} it is unblocked, and configuration $c$, again, is not blocked.

Hence the induction hypothesis holds for $s$, and $\tml{n}{s}$ does not have a reachable permanently blocked configuration.
\end{proof}

\begin{theorem}\label{thm:s5fast}
For each $n\in\Nset$ and $1 \le s\le 5$, the line-rotation Turning Machine $\tml{n}{s}$ computes its target configuration in expected  time $O(\log n)$.
\end{theorem}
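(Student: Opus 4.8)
The plan is to prove this by induction on $s$, peeling off one ``wave'' of moves at each level of the recursion. The stronger statement I would actually prove is: for each $1\le s\le 5$, the expected time to reach the target configuration starting from \emph{any} reachable configuration of $\tml{n}{s}$ (not only the initial one) is $O(\log n)$, with an absolute constant since $s$ is bounded by $5$. The point is that from an arbitrary reachable configuration one can first drive every monomer that still has $\Delta s=0$ to make its first move, and this first wave provably experiences no blocking by Lemma~\ref{lem:unblocked-states}; once all monomers have $\Delta s\ge 1$, Lemma~\ref{lem:pred-TM} lets us reduce the rest of the evolution to that of $\tml{n}{s-1}$, to which the induction hypothesis applies.

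For the base case $s=1$: in any reachable configuration of $\tml{n}{1}$ each of $m_0,\dots,m_{n-2}$ is in state $1$ or $0$; a monomer still in state $1$ has $\Delta s=0$, is never blocked by Lemma~\ref{lem:unblocked-states}, and stays unblocked (its rule stays applicable) until it makes its single remaining move. Using the standard representation of the continuous-time Markov chain by independent rate-$1$ exponential clocks — one per applicable transition, with memorylessness ensuring that the clock of a transition that remains applicable effectively persists — each such monomer $m_i$ moves at an independent time $X_i\sim\mathrm{Exp}(1)$, so the all-zero configuration is reached by time $\max_i X_i$ taken over the at most $n-1$ still-active monomers, and $\E[\max_i X_i]\le\sum_{j=1}^{n-1}1/j=O(\log n)$. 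This is exactly the argument behind Lemma~\ref{lem:pi/3 rotation}, run from an arbitrary reachable start.

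For the inductive step, assume the statement for $s-1$ and let $c$ be any reachable configuration of $\tml{n}{s}$. The monomers of $c$ with $\Delta s=0$ are, again, never blocked (Lemma~\ref{lem:unblocked-states}) and remain unblocked until each has moved once, so the clock argument above shows that within expected time $\le\sum_{j=1}^{n-1}1/j=O(\log n)$ the system reaches a configuration $c^\ast$ in which every monomer has $\Delta s\ge 1$, i.e.\ $s_{c^\ast}(m_i)<s$. By Lemma~\ref{lem:pred-TM} there is a reachable configuration $c'$ of $\tml{n}{s-1}$ with the same monomer states whose chain is the $\pi/3$-rotation of that of $c^\ast$; moreover, by the same reasoning as in that lemma's proof, this correspondence is \emph{dynamical} — applicability of a rule depends only on the chain geometry and on a state being non-zero, both preserved by the rotate-and-shift-states map, and monomers hit state $0$ simultaneously on the two sides — so the two continuous-time Markov chains (restricted, respectively, to configurations of $\tml{n}{s}$ with all $\Delta s\ge 1$, and to all configurations of $\tml{n}{s-1}$) are isomorphic and take target to target. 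Hence the expected completion time from $c^\ast$ equals the expected completion time of $\tml{n}{s-1}$ from the reachable configuration $c'$, which is $O(\log n)$ by the induction hypothesis. Adding the two phases gives $O(\log n)$, and since the recursion has at most five levels the additive constant stays absolute.

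The main obstacle is the probabilistic step used in both the base case and the inductive step: making it rigorous that the moves of the currently-active, never-blocked monomers can be treated as independent rate-$1$ exponential clocks, so that a wave of at most $n-1$ such moves completes in expected time $\sum_{j=1}^{n-1}1/j$. Secondary care is needed to upgrade Lemma~\ref{lem:pred-TM} from a correspondence of single configurations to an isomorphism of the two Markov processes, so that expected hitting times transfer across the reduction. The remaining ingredients — that the hypotheses of Lemmas~\ref{lem:unblocked-states} and~\ref{lem:pred-TM} (states $\le 5$) hold at every level of the recursion because $s\le 5$, and that a bounded number of additive $O(\log n)$ terms is still $O(\log n)$ — are routine.
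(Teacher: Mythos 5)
Your proof is correct and takes essentially the same route as the paper: peel off one state level at a time, use Lemma~\ref{lem:unblocked-states} together with Lemma~\ref{lem:pred-TM} to argue the frontier monomers are never blocked, bound each of the at most five phases by the expected maximum of independent rate-$1$ exponential clocks ($\sum_{k=1}^{n}1/k=O(\log n)$), and sum by linearity of expectation. The only structural difference is cosmetic: the paper inducts on the current maximum state value inside $\tml{n}{s}$ and uses Lemma~\ref{lem:pred-TM} only statically (to transfer non-blockedness of the top-state monomers), whereas you induct on $s$ with a ``from any reachable configuration'' strengthening and upgrade Lemma~\ref{lem:pred-TM} to an isomorphism of the two Markov processes --- both are sound.
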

\begin{proof}
By Theorem~\ref{thm:line-rotation-computes}, $\tml{n}{s}$ computes its target configuration. 
For the time analysis we use a proof by induction on $ u \in \{ 0,1,\ldots, s\}$, in decreasing order.

The induction hypothesis is that for a reachable configuration $c_u$ of $\tml{n}{s}$ with maximum state value $u$ (there may be states $< u$ in the configuration), the expected time to reach a configuration $c_{u-1}$ with maximum state $u-1$ is $O(\log n)$.

For the base case we let $u=s$ and assume $c$ is such that all monomers are in state $u$. 
Hence $c$ is an initial configuration and hence, by definition, is reachable.
By Lemma~\ref{lem:unblocked-states}, monomers in state $s$ are never blocked and 
hence we claim that the first configuration with maximum state $u-1$ appears after expected time  $O(\log n)$.
To see this claim, note that for each monomer $m_i$  in state $s(m_i)=u$  the rule application that sends $m_i$ to state $u-1$ occurs 
at rate 1, independently of the states and positions of the other monomers (by Lemma~\ref{lem:unblocked-states}, there is no blocking of a monomer in state $u=s$).  Since there are $n$ monomers in state $u$, the expected time for all $n$ to transition to $u-1$ is~\cite{graham1989concrete}:
\begin{equation}\label{eq:et}
 \sum_{k=1}^{n} \frac{1}{k}  = O(\log n)  \, .
\end{equation}

We assume the inductive hypothesis is true for $0< u+1 \le s$, and we will prove it holds for~$u$.
Thus, there exists a reachable configuration $c_u$ where the maximum state value is $u \leq s$, which is reachable from $c_{u+1}$ in expected $O(\log n)$ time.
Let there be $n' \leq n$ monomers in state $u$ in $c_u$.
By Lemma~\ref{lem:pred-TM}, there is a line-rotating Turning Machine $\tml{n}{u}$ that has a reachable configuration $c'_u$  such that for every $m_i$ in $c_u$, $s_{c'_u}(m_i)=s_{c_u}(m_i)$ and the positioning of $c_u$ is equal to the rotation of $c'_u$ by $\pi/3$ around the origin.
By Lemma~\ref{lem:unblocked-states} monomers in state $u$ in $\tml{n}{u}$ are never blocked, hence monomers in state $u$ in $c_u$ are not blocked either.  
Setting $n=n'$ in Equation~\eqref{eq:et}, and noting that $O(\log n')=O(\log n)$, proves the inductive hypothesis for $u$.

Since we need to apply the inductive argument at most $s\leq 5$ times, by linearity of expectation, the expected finishing time for the $s$ processes is their sum,  $5 \cdot O(\log n) = O(\log n)$. 
\end{proof}

\subsection{Negative result: Line rotation to \texorpdfstring{$2\pi$}{2π} is impossible}\label{sec:negative line rotation}
\begin{figure}[t]
\centering
\includegraphics[page=14,width=0.75\textwidth]{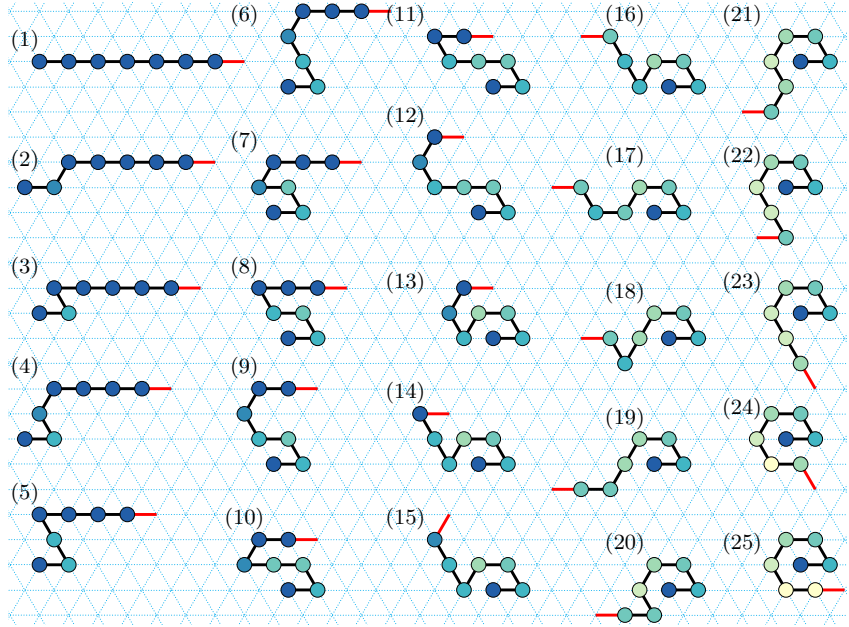} 
\caption{Impossibility of $360^\circ$ line rotation (Theorem~\ref{thm:no 2pi rotation}), by showing that for all $n\in\Nset$, the line-rotation Turning Machine $\tml{n}{6}$ has a reachable but permanently blocked configuration.   
Looking at the evolution of the first seven monomers (i.e.~ignore the rotation of the red line segment) 
we see one trajectory of the Turning Machine that exhibits permanent blocking in the final (bottom-right) configuration, which has respective states 6,4,3,2,1,0,0.
We imagine the red line segment as representing an arbitrary long sequence of monomers running collinear with it, and transitions 14--16, 22--23, and 24--25, each representing the (many step) rotation of the red line by respective angles of 180$^\circ$, 120$^\circ$ and $60^\circ$.
These rotations of the red line can proceed by two applications of Theorem~\ref{thm:line-rotation-computes} (first with $s=3$, then with $s=1$) and the fact that the first monomer of the red line is strictly above, or below, the first seven monomers. Hence the final, permanently blocked, configuration is reachable no matter what length the red line~is.}
\label{fig:T7}
\end{figure}
\begin{theorem}\label{thm:no 2pi rotation}
For all $n\in\Nset, n \geq 7$, the line-rotating Turning Machine $\tml{n}{6}$ does not compute its target configuration. 
In other words, there is a permanently blocked  reachable configuration. 
\end{theorem}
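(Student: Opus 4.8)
The plan is to exhibit, for every $n \geq 7$, a single concrete trajectory of $\tml{n}{6}$ that terminates in a permanently blocked configuration, as suggested by Figure~\ref{fig:T7}. The key observation is a decomposition of the monomer chain into a fixed-size ``head gadget'' consisting of the first seven monomers $m_0,\dots,m_6$, and a ``tail'' $m_7,\dots,m_{n-1}$ whose internal geometry we can treat abstractly as a rigid (or rigidifiable) line segment once we have driven all its monomers into a common low state. The crux is that the head gadget alone can be maneuvered into a locked configuration with states $6,4,3,2,1,0,0$ (reading $m_0$ through $m_6$), and that this locking is insensitive to how long the tail is.

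First I would construct the trajectory on the seven-monomer prefix explicitly: a finite, hand-checked sequence of turning-rule applications taking the all-$6$'s initial segment to the target states $6,4,3,2,1,0,0$, verifying at each step that the moved monomer is unblocked and that the resulting configuration is simple. Since $m_0,\dots,m_6$ is a bounded object, this is a finite case analysis. Second, I would argue that in the final configuration of this sequence no monomer among $m_0,\dots,m_6$ has an applicable rule: the three monomers in state $0$ are inert, and for each of $m_0,\dots,m_4$ one checks directly that the prescribed move would force a self-intersection within the gadget (this is exactly the blocking picture in Figure~\ref{fig:model}(d), scaled up). Crucially, the head monomer of the tail, $m_7$, must be placed strictly above (or strictly below) all of $m_0,\dots,m_6$ in this final configuration, so that the tail occupies a half-plane disjoint from the head gadget.

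Third, I would handle the tail. Before running the head sequence to completion, interleave the (many-step) rotations of the tail: treat $m_7,\dots,m_{n-1}$, which all start in state $6$, as a line-rotation sub-Turning-Machine. Using Theorem~\ref{thm:line-rotation-computes} with $s=3$ and then with $s=1$ (corresponding to transitions 14--16, 22--23, 24--25 in the figure, i.e.\ rotations of $180^\circ$, $120^\circ$, $60^\circ$), we drive the tail down to a collinear line segment in a common state, and these sub-rotations are guaranteed to complete because at each stage the first tail monomer $m_7$ stays strictly on one side of the head gadget, so the tail never collides with the head. The point is that the final tail state and the final tail geometry (a straight segment emanating from $\pos(m_6)$) are independent of $n$; only the \emph{length} of that segment grows. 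Then I would verify that in the combined final configuration the tail is also frozen (all tail monomers in state $0$, hence inert) and that $m_6$ — the junction — is likewise blocked, completing the claim that the configuration is permanently blocked for every $n\geq 7$.

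The main obstacle I anticipate is the bookkeeping in the second and third steps: choosing the interleaving of head-moves and tail-rotations so that (a) every intermediate configuration is simple, (b) the tail's first monomer is always on the correct side of the head gadget when the tail rotations are invoked (this is what lets us cite Theorem~\ref{thm:line-rotation-computes} as a black box rather than re-proving non-blocking in the presence of the head), and (c) the \emph{final} head configuration genuinely admits no applicable rule — including checking that the now-long tail, sitting in its half-plane, does not accidentally give some head monomer room to move that it lacked in the seven-monomer-only analysis. Pinning down the exact seven-monomer locked configuration with states $6,4,3,2,1,0,0$ and confirming its blocked-ness is the geometric heart of the argument; everything else is assembling it with the tail via the already-established line-rotation machinery.
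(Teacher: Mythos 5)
Your proposal is correct and takes essentially the same route as the paper: the paper's proof exhibits exactly this trajectory (Figure~\ref{fig:T7}), hand-verifying that the seven-monomer prefix can be driven into a permanently blocked configuration with states $6,4,3,2,1,0,0$, while the remaining $n-7$ monomers are treated as a collinear tail rotated by $180^\circ$, $120^\circ$ and $60^\circ$ via Theorem~\ref{thm:line-rotation-computes} (with $s=3$ and $s=1$), justified by the tail's base monomer lying strictly above, respectively below, the rest of the prefix at the relevant moments. The decomposition, the locked state sequence, the side-of-half-plane argument, and the black-box use of the line-rotation theorem all match the paper's argument.
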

\begin{proof}
Figure~\ref{fig:T7}, looking only at the first 7 (initially blue) monomers, shows a valid trajectory of $\tml{7}{6}$ , then ends in a permanently blocked configuration, hence the lemma holds for $n=7$.

Let $n>7$, and in Figure~\ref{fig:T7} let the red line segment denote a straight line $\ell_{n-7}$ of $n-7$ monomers co-linear with the red line segment. 
By inspection, it  can be verified that (a)~in all 25 configurations the line $\ell$ does not intersect any blue monomer, and moreover
(b)~the transitions from configurations  
 1 through 14, configurations 17 through 23, and configuration 24 to 25 are all valid, meaning that the length $n-7$ line $\ell_{n-7}$ does not block the transition. 
The transitions for configurations 14 through 17 are valid by Theorem~\ref{thm:line-rotation-computes} (with $s=3$)  and the fact that the last blue monomer (the origin of $\ell_{n-7}$) is strictly  above all other blue monomers (hence the $180^\circ$ rotation of $\ell_{n-7}$ proceeds without permanent blocking by blue monomers). 
The transition for configuration 23 to 24 is valid by applying Lemma~\ref{lem:pi/3 rotation} (or Theorem~\ref{thm:line-rotation-computes}, with $s=1$)
reflected through a horizontal line that runs through  the last blue monomer, 
and the fact that the last blue monomer (the origin of $\ell_{n-7}$) is strictly below all other blue monomers (hence the $60^\circ$ rotation of $\ell_{n-7}$ proceeds without permanent blocking). 
Thus all transitions are valid and the permanently blocked configuration is reachable, giving the lemma statement.  
\end{proof}

\section{Folding shapes}\label{sec:folding shapes}\label{sec:shapes}

In this section we show how to fold certain kinds of shapes with Turning Machines, and we show that other kinds of shapes are impossible to fold, and that
others are not foldable but can be approximated. 

\begin{definition}[shape]\label{def:shape}
A \emph{shape} is a set of points in the grid $G_\triangle$ such that its induced graph is connected (using unit length edges on  $G_\triangle$).
\end{definition}

\begin{definition}[$xy$-connected shape]\label{def:xy-shape}
An \emph{$xy$-connected shape} $S$ is a set of points in the grid $G_\triangle$ such that if we remove all the edges parallel to the direction $\vec{w}$ from the induced graph of $S$, it remains connected.
\end{definition}

For example squares are a classic benchmark shape in self-assembly: 
\begin{definition}[$n \times n$  square]\label{def:square}
For $n\in\Nset$ the $n \times n$ square is the set of  points $(x,y) \in \Nset^2$ such that $0 \leq x,y < n$.
\end{definition}

Typically, we are interested in folding a shape with a Turning Machine, and ideally we'd like each monomer to sit on one point of the shape, but in the non-ideal case there are errors:  

\begin{definition}[error of folding a shape]\label{def:error}
Given a shape $S$ and Turning Machine configuration $c$,
the {\em error of folding} of $c$ with respect to $S$ is the size of the symmetric difference of the shape and the positions of $c$ 
(i.e., the number of positions of $c$ that are not in $S$, plus the number of positions $S$ not in $c$). 
\end{definition}

\subsection{Folding zig-zag paths and $n \times n $ squares}\label{sec:zz}

Recall that a path in $\mathbb{Z}^2$ is simple, connected and directed, and its length is the number of its points. Intuitively, a zig-zag path is a path that winds over and back parallel to the $x$ axis, while heading off in one direction (positive or negative) along the other two axes. 
For example, Figure~\ref{fig:zz} shows a positive zig-zag path that rasters an $8\times8$ square (traced out by a Turning Machine configuration). 

In this section, we apply our techniques from previous sections to help us prove Theorem~\ref{thm:zz}, which states that zig-zag paths (Definition~\ref{def:zz}) are  foldable by Turning Machines in expected time logarithmic in path length. 
Since for any $n \in \mathbb{N}$, an $n \times n$ square can be rastered (traced out) using a zig-zag path, we then get that they too are efficiently foldable by Turning Machines.

\begin{definition}[Zig-zag path]\label{def:zz} 
A {\em positive zig-zag path}, or simply {\em zig-zag path}, is a path in $\mathbb{Z}^2$ composed of unit length line segments that run along directions $\pm\vx$, $+\vy$ and $+\vw$. 
A {\em negative zig-zag path}, has unit length line segments that run  along 
$\pm\vx$, $-\vy$ and $-\vw$. 
\end{definition}
The previous definition captures the intuition of a path that zig and zags over and back (e.g. Figure~\ref{fig:zz}) but is much more general since a positive, respectively negative, zig-zag path is any path that is monotone in $y$ (and hence in $w$), respectively  $-y$ (and hence in $-w$).

\newcommand{\dir}{\textrm{direction}}
\newcommand{\sL}{restricted-$L^3$-system\xspace}

The following lemma will find future use in showing that a Turning Machine folds a positive zig-zag path. 
\begin{lemma}\label{lem:zz}
Let $c$ be any configuration reachable by a line-rotating Turning Machine~$L^3_n$, then there is a Turning Machine $T_n$ that has $c$ as its target configuration. Moreover,  $T_n$ runs in expected time $O(\log n)$.
\end{lemma}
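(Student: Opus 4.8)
The goal is to show that any configuration $c$ reachable by $L^3_n$ is itself the target configuration of \emph{some} Turning Machine $T_n$, and that $T_n$ folds $c$ in expected time $O(\log n)$. The natural candidate for $T_n$ is the Turning Machine whose initial configuration is the east-pointing line of $n$ monomers on the $x$-axis, but where monomer $m_i$ is given initial state $s_0(m_i) = \Delta s(m_i)$, the number of moves $m_i$ made along the trajectory of $L^3_n$ that produced $c$. Since every monomer of $L^3_n$ starts in state $3$ and states only decrement, we have $0 \le \Delta s(m_i) \le 3$ for all $i$, so $T_n$ has state set contained in $\{0,1,2,3\}$ and in particular is a $T_n^{\le 3}$ in the sense of Lemma~\ref{lem:180:li}. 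By construction, the final configuration of $T_n$ (all monomers in state $0$, having each moved exactly $\Delta s(m_i)$ times) has exactly the same chain geometry as $c$, provided $T_n$ actually reaches that final configuration; so the crux is to show $T_n$ computes its target configuration and does so quickly.

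\textbf{Key steps.} First I would make precise the claim that $T_n$'s target configuration coincides with $c$: the turn angle at each monomer is determined by the $\Delta s$ values of its two incident monomers (Lemma~\ref{lem:state-chain-general}), and the absolute position of the chain is pinned by $\pos(m_0) = (0,0)$ together with the direction of $m_0$, which in turn is determined by $\Delta s(m_0) \bmod 6$; since both $c$ and $T_n$'s final configuration arise from east-pointing lines with the same per-monomer move counts, their geometries agree. Second, I would observe that $T_n$ is not literally a line-rotation Turning Machine (the $\tml{n}{\sigma}$ of Definition~\ref{def:line}) because its initial states are not all equal, so Theorem~\ref{thm:line-rotation-computes} does not apply verbatim. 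However, the proof technique of Lemma~\ref{lem:unblocked-states} only uses that initial states lie in a bounded range (there, $\le 5$; here $\le 3$), so it carries over: no monomer $m_i$ with $\Delta s(m_i) \le 1$ (i.e.\ that has moved at most once) is ever blocked in any reachable configuration of $T_n$. Third, I would run essentially the induction of Theorems~\ref{thm:line-rotation-computes} and~\ref{thm:s5fast}: peel off the rounds of moves by highest-state monomers; by the analogue of Lemma~\ref{lem:pred-TM}, once a monomer has moved, the residual system behaves like a rotated copy of a Turning Machine with strictly smaller maximum state, whose high-state monomers are again unblocked by the Lemma~\ref{lem:unblocked-states} argument, so no permanent blocking ever arises; and each of the at most $3$ peeling rounds costs expected time $\sum_{k=1}^{n} 1/k = O(\log n)$, so the total is $3 \cdot O(\log n) = O(\log n)$.

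\textbf{Main obstacle.} The genuine work is in steps two and three: Lemma~\ref{lem:unblocked-states} and Lemma~\ref{lem:pred-TM} are stated for line-rotating Turning Machines $\tml{n}{s}$ with \emph{uniform} initial state $s$, whereas $T_n$ has a non-uniform initial state vector $(\Delta s(m_i))_i$. I expect to need a mild generalization of these two lemmas to Turning Machines with arbitrary initial states bounded by $3$ (or by $5$) and all monomers initially pointing east — the proofs should go through essentially unchanged since they only invoke the state bound, the simple-polygon turn-angle identity, and the bound $|\Delta s(m_i) - \Delta s(m_{i+1})| \le 2$ from Lemma~\ref{lem:s2} (which itself holds for any Turning Machine with all monomers initially pointing in the same direction). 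The one subtlety to check carefully is that in the peeling induction the "residual" Turning Machine after one round of high-state moves really is again of this bounded, same-initial-direction type up to a global $\pi/3$ rotation, so that the non-blocking and timing arguments can be reapplied; once that is in hand, linearity of expectation over the constantly many rounds closes the proof.
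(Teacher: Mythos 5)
Your setup coincides with the paper's: you define $T_n$ by giving monomer $m_i$ the initial state $\Delta s(m_i)=3-s_{i,c}$, and you correctly identify that the entire difficulty is showing that no trajectory of $T_n$ permanently blocks. (One point in your favour: Lemma~\ref{lem:unblocked-states} is already stated for arbitrary initial state vectors bounded by $5$ with all monomers initially pointing the same way, so no generalization is needed there.) The gap is in your third step. The peeling induction of Lemma~\ref{lem:pred-TM} and Theorems~\ref{thm:line-rotation-computes} and~\ref{thm:s5fast} hinges on the fact that for a \emph{uniform} initial state vector, after every monomer has moved once the configuration is the initial east-pointing line rotated by $\pi/3$ with all states decremented, i.e.\ a genuine copy of $\tml{n}{s-1}$; that is what allows Lemma~\ref{lem:unblocked-states} to be re-applied at the next level. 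For your $T_n$ this fails: monomers with initial state $0$ (in a zig-zag target, most monomers) never move, so after a ``round'' of moves by the highest-state monomers the chain is bent, not a rotated line, and there is no residual Turning Machine of the required ``all monomers initially pointing in one direction'' form on which to recurse. You flagged exactly this as ``the one subtlety to check carefully'' --- and the check fails, so the induction does not close. In particular, your argument never establishes that a monomer with initial state $2$ or $3$ making its final ($1\to 0$) move --- where $\Delta s(m_i)\ge 2$ and Lemma~\ref{lem:unblocked-states} is silent --- is only temporarily blocked, which is the crux of the lemma.

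The paper closes this gap by a coupling rather than an induction: it defines a ``suppressed'' version of $L^3_n$ in which each monomer's rules are simply ignored once it reaches state $3-s_{i,c}$, observes that this system has exactly the same set of trajectories as $T_n$, and then imports Conclusion~\ref{concl:only state 1} of Lemma~\ref{lem:pi:blocking}: in $\tml{n}{3}$ the only blocked monomers are state-$1$ monomers adjacent to a state-$3$ monomer, and in the suppressed system such a neighbour $m_j$ is never suppressed while still in state $3$ (since $s_{j,c}<3$ by simplicity of $c$), so it eventually moves and the blocking is temporary. The $O(\log n)$ bound then follows because those state-$3$ neighbours are themselves never blocked and all leave state $3$ in expected time $O(\log n)$, after which no blocking remains. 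To repair your write-up, replace the peeling induction by this suppression/coupling argument (or some other device that rides on the known blocking structure of $\tml{n}{3}$); the rest of your outline stands.
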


\begin{proof}
We first define $T_n$ as having the initial configuration $c_0$, of length $n\in\mathbb{N}$, where for $0 \leq i \leq n-1$ monomer $m_i$ in $c_0$ has state $3-s_{i,c}$, where $s_{i,c}$ is the state of monomer $m_i$ in $c$. 
By hypothesis, $c$ is reachable in $L^3_n$, which means there is a trajectory (sequence of rule applications) from the initial configuration of $L^3_n$ to $c$, thus  applying the same sequence of moves in $T_n$, starting with $c_0$, also yields~$c$. 
We need to show that $c$ is the target of $T_n$ (meaning all trajectories from  $c_0$ reach $c$).

We will define a ``suppressed-$L^3$-system''  to be a Turning Machine-like system\footnote{We are using a technique here that leverages the statement of Lemma~\ref{lem:pi:blocking} from the appendix. We could alternatively use the same technique as Lemma~\ref{lem:pi:blocking}, which more directly reasons about rule applications and blocking, without needing to define the \sL.} that acts like $L^3_n$ in all ways, except that for each $i$,
after monomer $m_i$ reaches state $3-s_{i,c}$ any rule application to $m_i$ is ``suppressed'' (not applied).\footnote{This notion of ``suppression'' should not be confused with blocking, for the sake of analysis, we are simply defining a new model where we get to arbitrarily choose which rules are suppressed.}  
By Lemma~\ref{lem:unblocked-states}, for any~$i$ where $s_{i,c} \in \{ 1,2,3 \}$, monomer $m_i$ in the suppressed-$L^3$-system is never blocked since to experience blocking a monomer needs be in the process of transitioning from state 1 to 0, which never happens for  $s_{i,c} \in \{ 1,2,3 \}$).
In other words, we only expect blocking to affect monomers $m_i$ for which $s_{i,c}=0$.  
Consider such a monomer $m_i$.  
By applying Conclusion~\ref{concl:only state 1} of Lemma~\ref{lem:pi:blocking} to $m_i$ we note that blocking only occurs if $m_i$ is in state~1 and is adjacent to a monomer in state~3. 
However, such blocking is temporary, which follows from the claim that that adjacent monomer~$m_j$ eventually leaves state 3: to see the claim note that: 
(a)~$L^3_n$ does not experience permanent blocking (hence $m_j$ does not) and (b)~$s_{j,c} < 3$, since otherwise two monomers would occupy the position in $c$ which contradicts the positions of $c$ tracing out a path (i.e.~non-self-intersecting).
Therefore, since there is no permanent blocking in the suppressed-$L^3$-system it always reaches $c$. 

Both $T_n$ and the \sL have the same set of trajectories because (a)  suppressed rules in the \sL make no change to a configuration, and (b) for any configuration $c'$ it has the same set of applicable (non-suppressed) rules in the \sL and in $T_n$.

For the time analysis, we analyse the expected time for the \sL to reach $c$. 
To do that, we consider two processes (two Markov chains): 
the process where rules are applied/blocked and the process where rules are  suppressed. Observe that suppressed rules do not change a configuration and do not change the rate of applications of  rules, thus from now on in our analysis, we ignore the latter process. 
For the \sL to reach $c$, applicable rules are either unblocked (rate 1 per rule application) or temporarily blocked, and the latter case only occurs when a monomer $m_i$ in state 1 is adjacent to a monomer $m_j \in \{m_{i-1}, m_{i+1} \} $ where $m_j$ is in state 3 and is unblocked (as argued above). Since all such $m_j$ are unblocked, the expected time until they are all in a state $< 3$ is $O(\log n)$, after which there are no blocked monomers in the resulting configuration. The rule application process then completes in expected time $O(\log n)$, giving an overall expected time of $O(\log n)$.
\end{proof}

\begin{theorem}\label{thm:zz}
A positive, or negative, zig-zag path $P$, of length $n$, is foldable by a Turning Machine in expected time $O(\log n)$.
\end{theorem}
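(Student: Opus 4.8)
The plan is to reduce Theorem~\ref{thm:zz} to Lemma~\ref{lem:zz} by exhibiting (the point set of) the zig-zag path $P$ as the set of positions of some configuration $c$ that is reachable by the line-rotation Turning Machine $\tml{n}{3}$; Lemma~\ref{lem:zz} then supplies a Turning Machine $T_n$ whose unique target configuration is $c$, which therefore folds $P$ with zero error in expected time $O(\log n)$. The negative-zig-zag case follows by reflecting the whole construction through the $x$-axis and using the mirror ($\sigma=-3$) analogues of the lemmas, which hold by the symmetry of the model.

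Write $P=p_0,\dots,p_{n-1}$ (translated so $p_0$ is the origin). A positive zig-zag path uses exactly the edge directions $\{+\vx,+\vy,+\vw,-\vx\}$, i.e.\ the angles $\{0,\tfrac{\pi}{3},\tfrac{2\pi}{3},\pi\}$, which are precisely the directions a monomer can point in a reachable configuration of $\tml{n}{3}$. Let $\theta_i\in\{0,\tfrac{\pi}{3},\tfrac{2\pi}{3},\pi\}$ be the direction of $\overrightarrow{p_ip_{i+1}}$ and put $\delta_i:=3\theta_i/\pi\in\{0,1,2,3\}$; since each move of a monomer turns its pointing direction by $\pi/3$ while moves of other monomers leave it unchanged, $\delta_i$ is exactly the number of moves $m_i$ must make, and the sequence $(\delta_0,\dots,\delta_{n-2})$ together with $\pos(m_0)=(0,0)$ determines the position of every monomer. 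Take $c$ to be the configuration on the points of $P$ with $s(m_i)=3-\delta_i$ for $i\le n-2$ and $s(m_{n-1})=0$.

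It remains to show $c$ is reachable by $\tml{n}{3}$. First, $|\delta_i-\delta_{i+1}|\le 2$ for all $i$: for $i\le n-3$ a difference of $3$ would make $\overrightarrow{p_ip_{i+1}}$ and $\overrightarrow{p_{i+1}p_{i+2}}$ antiparallel, forcing $p_{i+2}=p_i$ against simplicity; the single boundary pair $i=n-2$ requires a small amount of care about the orientation of $P$ (equivalently, that its last edge may be taken not to be $-\vx$). Now run $\tml{n}{3}$ greedily: while the configuration is not $c$, some monomer $m_i$ ($i\le n-2$) is \emph{needy}, $\Delta s(m_i)<\delta_i$, and I claim a needy monomer is always unblocked. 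By the blocking characterisation of $\tml{n}{3}$ (Lemma~\ref{lem:pi:blocking}: a monomer can be blocked only when in state $1$ and adjacent to a state-$3$ monomer), a blocked needy monomer is in state $1$, so $\Delta s=2$, hence $\delta=3$, and it has a neighbour $m_j$ in state $3$ (so $\Delta s(m_j)=0$); by the gap bound $\delta_j\neq 0$, so $m_j$ is also needy, and since $\Delta s(m_j)=0\le 1$ it is unblocked by Lemma~\ref{lem:unblocked-states}. Hence not all needy monomers are blocked; move an unblocked one. Each such move increases $\sum_i\Delta s(m_i)$ by $1$ without overshooting any $\delta_i$ and keeps the configuration simple (unblocked moves are non-self-intersecting), so after $\sum_i\delta_i$ moves we reach the configuration with profile $(\delta_i)$, which is $c$.

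Applying Lemma~\ref{lem:zz} to this reachable $c$ gives $T_n$ with unique target $c$, so every trajectory of $T_n$ realises the point set of $P$ with zero error in expected time $O(\log n)$; the negative case is symmetric. I expect the crux to be precisely the reachability of $c$ in $\tml{n}{3}$ — the greedy no-blocking argument above, together with a clean treatment of the $i=n-2$ boundary case so that the gap bound $|\delta_i-\delta_{i+1}|\le2$ holds everywhere. Once $c$ is known reachable, the theorem is an immediate consequence of Lemma~\ref{lem:zz}.
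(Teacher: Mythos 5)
Your proposal is correct and takes essentially the same route as the paper: assign each monomer the turning number determined by its edge direction in $P$, show that the configuration tracing $P$ is reachable by the line-rotating machine $\tml{n}{3}$, and then invoke Lemma~\ref{lem:zz} (with the negative case handled by mirror symmetry). The only divergence is in the reachability step---the paper exhibits an explicit three-sweep trajectory (left-to-right, right-to-left, left-to-right) where you give a greedy always-progress argument via Lemmas~\ref{lem:pi:blocking} and~\ref{lem:unblocked-states} together with the bound $|\delta_i-\delta_{i+1}|\le 2$; this works, and your flagged worry about the boundary pair $i=n-2$ is in fact vacuous, since a blocking state-$3$ neighbour can never be $m_{n-1}$ (it starts in state $0$), so the gap bound is only ever needed between two genuine edge directions of $P$, which your simplicity argument already covers.
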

\begin{proof}
Let $P$ be a positive zig-zag path. 
We begin by defining a length-$n$ Turning Machine $\tzz$ whose {\em target}\footnote{As discussed in Remark~\ref{rem:targetconfig}, a {\em target} configuration is one where all states are 0, in other words the intended final configuration of the Turning Machine. Note that we've not yet proven that such a target is always reachable---that's our goal in fact.} configuration $c$ traces out the zig-zag path $P$.
By the definition of $P$, each monomer in $c$ points in one of the directions $\pm \vec x, +\vec y$ and $+ \vec w$. 
For any configuration $c'$, and $0\leq i \leq n-1$, let $m_{i,c'}$ denote monomer $m_i$ in configuration $c'$.   
In the initial configuration $c_0$ of $\tzz$, for $i\in\{0,1,\ldots,n-2 \}$ we define monomer $m_{i,c_0}$ to have the state
\begin{equation}\label{eqn:zz states}
s(m_{i,c_0}) =
\begin{cases}
0 & \text{ if } \dir(m_{i,c})=+\vx \\
1 & \text{ if } \dir(m_{i,c})=+\vy \\
2 &\text{ if } \dir(m_{i,c})=+\vw\\
3 &\text{ if } \dir(m_{i,c})=-\vx
\end{cases}
\end{equation}
and $s(m_{n-1,c_0})=0$, 
where $\dir(m_i)= \pos (m_{i+1}) - \pos(m_i) $, i.e.  the direction of $m_i$ as defined in Section~\ref{sec:def}. 
Thus the initial configuration has all states $\leq 3$.

Let $s_i$ denote the initial state of monomer $m_i$. Consider the line-rotating Turning Machine~$L^3_n$ in its initial configuration and consider the following trajectory. We begin by moving sequentially along the chain of monomers from left to right starting at $m_0$, and for each monomer $m_i$ with $s_i>0$ we apply a turning rule to $m_i$ once. After we reach monomer $m_{n-1}$, we move back along the chain and for each monomer $m_i$ such that $s_i>1$ we apply a turning rule to it. After we get back to $m_0$, we once again move left to right and for each $m_i$ with $s_i>2$ we apply a turning rule to it. Once we reach $m_{n-1}$ again we are done. At this point each monomer $m_i$ has had $s_i$ turning rules applied and thus we are in a configuration which traces out the path $P$. Since target configuration $c$ of $\tzz$ traces out the path $P$, $c$ is reachable by $L^3_n$. Thus by Lemma~\ref{lem:zz}, $\tzz$ folds $P$ in $O(\log n)$ expected time. 

For a negative zig-zag path $P$, we use the same proof but mirror-flipped around the $x$-axis, and using negative (instead of positive) turning numbers.
\end{proof}

An example of an $n\times n$ square
$n\in\Nset$ (Definition~\ref{def:square}), with $n=8$ in the example, is traced out by the path in Figure~\ref{fig:zz} (left). 
Since for any $n$ the $n\times n$ square is traced out by a positive zig-zag path,  by Theorem~\ref{thm:zz} we immediately get the following corollary:

\begin{corollary}\label{cor:square}
For any $n\in\Nset$, the $n \times n$  square is foldable with zero error and in expected time $O(\log n)$.
\end{corollary}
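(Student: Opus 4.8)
The plan is to exhibit one explicit positive zig-zag path that visits every point of the $n\times n$ square exactly once, and then invoke Theorem~\ref{thm:zz} as a black box. First I would describe the ``snake'' (boustrophedon) rastering path $P$: starting at $(0,0)$, traverse row $y=0$ from left to right by $n-1$ steps in direction $+\vx$, then take one step in direction $+\vy$ to $(n-1,1)$, then traverse row $y=1$ from right to left by $n-1$ steps in direction $-\vx$, then one step $+\vy$ to $(0,2)$, and continue in this fashion, reversing the horizontal direction on each successive row, until all $n$ rows are covered. This is a simple, connected, directed path visiting each of the $n^2$ points $\{(x,y)\in\Nset^2 : 0\le x,y<n\}$ exactly once, and every one of its unit segments runs along $+\vx$, $-\vx$, or $+\vy$; in particular each segment lies along one of the directions $\pm\vx$, $+\vy$, $+\vw$ allowed in Definition~\ref{def:zz}. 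Hence $P$ is a positive zig-zag path of length $n^2$ (Figure~\ref{fig:zz} depicts the $n=8$ instance).

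Next I would simply apply Theorem~\ref{thm:zz} to $P$: there is a Turning Machine that folds $P$, i.e.\ reaches (on every trajectory) a final configuration whose positions trace out $P$, in expected time $O(\log(n^2))=O(\log n)$. By construction the point set of that final configuration is exactly the point set of the $n\times n$ square, so the symmetric difference between the shape and the folded configuration is empty; that is, the error of folding is $0$ in the sense of Definition~\ref{def:error}. This gives the corollary.

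I do not expect any real obstacle here, since the statement is a direct specialisation of Theorem~\ref{thm:zz}. The only two points that warrant a sentence of justification are (i) that the snake path genuinely satisfies the (deliberately permissive) definition of a positive zig-zag path, which holds because it never uses a $-\vy$ or a $\pm\vw$ step, and (ii) that a path of length $n^2$ still folds within the claimed bound, which is immediate because $\log(n^2)=2\log n=O(\log n)$.
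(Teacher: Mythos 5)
Your proposal is correct and is essentially the paper's own argument: the paper likewise observes that the $n\times n$ square is traced out by a positive zig-zag path (the snake raster of Figure~\ref{fig:zz}, left) and then invokes Theorem~\ref{thm:zz} directly. You have merely made explicit the two routine checks (that the snake path uses only $\pm\vx$ and $+\vy$ steps, and that $\log(n^2)=O(\log n)$) that the paper leaves implicit.
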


\begin{figure}[t]
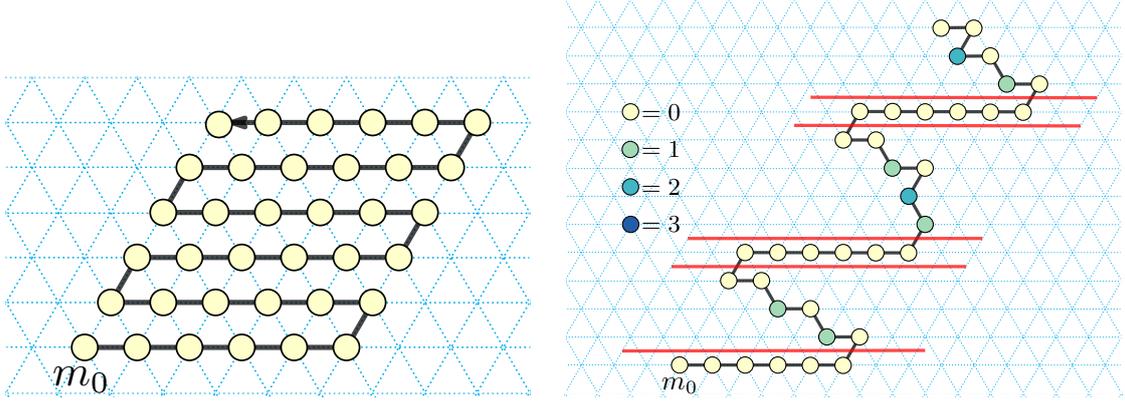

\centering
\includegraphics[page=11,width=0.47\textwidth]{images/figures-new.pdf}
\hfill
\includegraphics[page=12,width=0.5\textwidth]{images/figures-new.pdf}
\caption{Left: The yellow points define an $n\times n$ zig-zag square for $n=8$ (Definition~\ref{def:square}).
Corollary~\ref{cor:square} states that, for any $n$, such a square is foldable, and the proof works by showing there is a Turning Machine $\tzz$ that traces the positive zig-zag path shown (with monomer $m_0$ at the origin).
Right: An intermediate configuration of \tzz:  all initially-non-0-monomers have moved. The horizontal lines (in red) subdivide $\tzz_n$ into $n$ independent subchains each equivalent to a separate line-rotating Turning Machine $\tml{n}{3}$, which gives some intuition that the target configuration is safely reached on all trajectories without permanent blocking (although our actual proof proceeds by a different argument).}
\label{fig:zz}
\end{figure}

\subsection{Folding \texorpdfstring{$y$}{y}-monotone shapes, but with error}\label{sec:y-monotone}

If we permit error in a folding, by which we mean that some points of the folding are not in the shape (or vice-versa) then any shape is foldable. In particular, for any shape $S$ there is a (boring) Turning Machine that folds a zig-zag traversal of the bounding rectangle of $S$. The error will be bounded by the area of $S$. Can we do any better? 
In this section, we show that, yes, the class of $y$-monotone shapes are foldable with error bounded merely by the perimeter of $S$, and  if we allow spatial scaling for $y$-monotone shapes with the $yw$-separator property (see Definition~\ref{def:yw-separator}) we have zero error.

\begin{definition}
We say that a shape is  $y$-monotone if the points along each of its y-coordinates form a line segment. 
\end{definition}

The {\em perimeter} of a shape is defined as the set of points in the shape that are adjacent (in $\Zset^2$) to point(s) not in the shape, and the  perimeter length is the number of points in the perimeter. 
A {\em traversal} is a simple path in $\Zset^2$.  
A {\em zig-zag traversal} is a traversal that traces a positive (or negative) zig-zag path (Definition~\ref{def:zz}) such as the $8 \times 8$ square shown in Figure~\ref{fig:zz} (Left).

 \begin{theorem}\label{thm:y-monotone-scale1} 
 Any y-monotone shape $S$ can be folded with error no more than the perimeter length of $S$, and no more than the perimeter of the shape induced by the folding. Moreover, $S$ is folded with error in expected time $O(\log n)$. 
 \end{theorem}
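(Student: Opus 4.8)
The plan is to reduce the folding of a $y$-monotone shape $S$ to the folding of a zig-zag path, and then invoke Theorem~\ref{thm:zz}. First I would construct, from $S$, a positive (or negative) zig-zag path $P$ that "covers" $S$ as tightly as possible: process the $y$-coordinates of $S$ in increasing order, and for each such coordinate $y$, traverse the horizontal line segment $\{(x,y) : x \in [a_y, b_y]\}$ that constitutes the slice of $S$ at height $y$ (this is a single segment precisely because $S$ is $y$-monotone). To connect the slice at height $y$ to the slice at height $y+1$ we need a single step in direction $+\vy$ or $+\vw$ from some endpoint of one slice into a point of the next slice. Since $S$ is a connected shape and $y$-monotone, consecutive slices overlap in $x$-range enough that at least one such connecting step lands inside $S$ (I should check this carefully — see the obstacle paragraph). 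Traversing each slice left-to-right on even rows and right-to-left on odd rows, with the connecting steps, yields a path $P$ using only directions $\pm\vx, +\vy, +\vw$, i.e.\ a positive zig-zag path by Definition~\ref{def:zz}.

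Next I would bound the error. The path $P$ visits every point of $S$ except possibly: (i) on some rows, one connecting step is forced to leave $S$ (when no in-$S$ connector exists from the current endpoint), in which case we allow $P$ to make a short detour through points not in $S$ along the boundary; (ii) since $P$ is a simple path, it cannot revisit a point, so in the rare case the only available connector from an endpoint would require reusing a point, a one- or two-point detour outside $S$ is taken. In all cases, each "defect" is localised near the boundary between two consecutive slices, i.e.\ near the perimeter of $S$, and the number of points of $P$ not in $S$ is at most a constant times the number of rows, which is at most the perimeter length of $S$; symmetrically, the number of points of $S$ not hit by $P$ is similarly bounded. A cleaner accounting: charge each point of $P \setminus S$ and each point of $S \setminus P$ to a nearby perimeter point of $S$ (respectively of the folded shape), with $O(1)$ charge per perimeter point, giving error $\le$ perimeter length of $S$ and $\le$ perimeter length of the folding. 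Then, since $P$ is a length-$n$ zig-zag path (where $n \le |S| + O(\text{perimeter})$, which is $O(|S|)$), Theorem~\ref{thm:zz} gives a Turning Machine that folds $P$ (hence folds $S$ with the stated error) in expected time $O(\log n)$.

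The main obstacle is establishing that a zig-zag path $P$ covering $S$ up to perimeter-bounded error genuinely exists — that is, the combinatorial-geometric claim that the slices of a connected $y$-monotone shape can be stitched together by single $+\vy$ or $+\vw$ steps with only $O(\text{perimeter})$ total slippage. The subtlety is threefold: consecutive slices might share only one column, forcing the connector to emanate from a specific endpoint and possibly conflict with the direction in which the previous slice was traversed; the shape may "jut out" so that the natural connector leaves $S$; and because the traversal must be a \emph{simple} path, we must ensure the stitching never needs to reuse a point. I would handle this by a careful case analysis on the relative $x$-intervals $[a_y,b_y]$ and $[a_{y+1},b_{y+1}]$ of consecutive slices (four cases by which endpoints are nested/disjoint), in each case exhibiting an explicit connector with at most $O(1)$ out-of-$S$ points, and arguing that the parity of the traversal direction can always be chosen consistently; the sum of these $O(1)$ penalties over the $\le$ (perimeter length) many rows gives the bound. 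The rest — invoking Theorem~\ref{thm:zz} and the error and time bookkeeping — is routine.
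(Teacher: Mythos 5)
Your overall strategy is the paper's: traverse the rows of $S$ boustrophedon to get a positive zig-zag path, fold it via Theorem~\ref{thm:zz}, and charge the out-of-shape points to perimeter points. However, the step you yourself single out as the main obstacle is resolved incorrectly. You claim each row transition can be realised by a ``one- or two-point detour,'' i.e.\ $O(1)$ out-of-$S$ points per connector, so that the error is $O(\text{number of rows})$. This is false. Because a positive zig-zag path is $y$-monotone, it can never revisit a row, so it must finish row $i$ at one of its endpoints before entering row $i+1$; if consecutive rows are offset (e.g.\ row $i$ spans $x\in[0,m]$ and row $i+1$ spans $x\in[-m,0]$, connected only at $x=0$), the path exits row $i$ at $(m,y_i)$ and must then walk roughly $m$ grid points at height $y_{i+1}$, all outside $S$, before reaching $\sigma_{i+1}$. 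Stacking such rows alternately shows that a single transition can cost $\Theta(\text{row length})$ out-of-shape points, so the proposed case analysis ``exhibiting an explicit connector with at most $O(1)$ out-of-$S$ points'' cannot succeed, and the bound ``constant times the number of rows'' does not hold.

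The theorem survives because of the charging argument you mention only as an alternative ``cleaner accounting'' --- in fact it is the only correct accounting, and the construction must be arranged to support it. The paper's proof extends each row segment $\sigma_i$ to a longer segment $\Sigma_i$ by adjoining exactly those grid points lying directly below a point of $\sigma_{i+1}$ (on the exit side) or directly above a point of $\sigma_{i-1}$ (on the entry side); this forces the last point of $\Sigma_i$ to sit one $+\vy$ step below the first point of $\Sigma_{i+1}$, makes simplicity and connectivity of the concatenated path automatic, guarantees $\Sigma$ covers all of $S$ (so the symmetric difference is just $\Sigma\setminus S$), and ensures every error point is adjacent to a distinct perimeter point of $S$ --- whence error $\le$ perimeter length even though a single row may contribute many error points. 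To repair your proof you should drop the $O(1)$-per-row claim, adopt this extension-based construction (or an equivalent one), and verify injectivity of the charging; the reduction to Theorem~\ref{thm:zz} and the $O(\log n)$ time bound then go through as you describe.
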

\begin{proof}

We will give a zig-zag traversal that covers all points of the shape, as well as possibly a number of points outside the shape, but within the stated error bound. 
We will then show that the traversal is foldable using the techniques from Section~\ref{sec:zz}.

For each $y$-coordinate $y_i$  in the shape, i.e. $y_0 < y_1 < \cdots < y_{H-1} $ where the shape is of height (or span) $H$ along the y-axis, let  $\sigma_i$ denote the set of points along~$y$.  The set $\sigma_i$ is a line segment by y-monotonicity of the shape $S$.   

We define a zig-zag traversal $\Sigma$ that includes all points of $S$:  If $i$ is an even index such that $y_i$ is a $y$-coordinate of $S$, then let $\Sigma_i$ include all points of $\sigma_i$, plus any points $p$ to the right of $\sigma_i$ such that there is a point $p+(0,1) \in \sigma_{i+1}$, and any points $p$ to the left of  $\sigma_i$ such that there is
$p+(0,-1) \in \sigma_{i-1}$. 
Else if $i$ is an odd index such that $y_i$ is a $y$-coordinate of $S$, then let $\Sigma_i$ include all points of $\sigma_i$, plus any points $p$ to the left of $\sigma_i$ such that there is a point $p+(0,1) \in \sigma_{i+1}$, and any points $p$ to the right of  $\sigma_i$ such that there is
$p - (0,1) \in \sigma_{i-1}$.

In either case $\Sigma_i$ is a line segment, this follows from the fact that $S$ is connected and y-monotone. 
For $j\in \{0,1,\ldots , |\Sigma_i|-1 \}$ we write $\Sigma_i(j)$ to mean the $j$th, where for even $i$ we index from the left hand side (from smallest to largest $x$-coordinate), and for odd $i$ we index from the right hand side   (from largest to smallest $x$-coordinate) of the line segment $\Sigma_i$. 

We next claim that the sequence $\Sigma$ is a path that includes all points of $S$. Writing $\Sigma$ point-by-point:
\begin{align*}
\Sigma =\,& \Sigma_0(0), \Sigma_0(1), \ldots, \Sigma_0(|\Sigma_0|-1),\\ 
& \Sigma_1(0), \Sigma_1(1), \ldots, \Sigma_1(|\Sigma_1|-1),\\
& \vdots \\
& \Sigma_{H-1}(0), \Sigma_{H-1}(1), \ldots, \Sigma_{H-1}(|\Sigma_{H-1}|-1)
\end{align*}
First, $\Sigma$ includes all points of $S$ because for each $i$ where $y_i$ is a $y$-coordinate of $S$, the set of points $\cup_i \Sigma_i$ includes all points at y-coordinate $y_i$ of $S$ (because $\cup_i \sigma_i$ does and $\cup_i \sigma_i \subset \cup_j \Sigma_j$).
Second, we need to show that $\Sigma$ is a simple path. 
$\Sigma$ is composed of $H-1$ non-intersecting line segments therefore is simple. To see that $\Sigma$ is connected it suffices to observe that for each $j\geq 0$, $\pos{(\Sigma_j(|\Sigma_j|-1))} = \pos{(\Sigma_{j+1}(0))} - (0,1)$.
This completes the claim that $\Sigma$ is a path that includes all of $S$. 

For the error, any point in $\Sigma$ that is not in $S$ is adjacent to a perimeter point of $S$, this follows from the definition of $\Sigma_i$, i.e. points in
$\Sigma_i \setminus \sigma_i$ are not in $S$ but are adjacent to points in $(\sigma_{i-1} \cup \sigma_{i+1}) \subset S$. 

Also, there is no perimeter point $p$ of $S$ such that there are $\geq2$ points of $\Sigma$  adjacent to $p$ (if there were, this would contradict $\Sigma$ being simple, or  $S$  being  connected/y-monotone).

Since $\Sigma$ is a zig-zag path, it is foldable in $O(\log n)$ expected time by Theorem~\ref{thm:zz}. 
\end{proof}

\noindent Theorem~\ref{thm:y-monotone-scale1} states that  all $y$-monotone shapes are foldable, albeit with some error.  
One might ask: Is it possible to fold $y$-monotone shapes without error?  
The answer is no, via a straightforward argument: 
\begin{theorem}\label{thm:cross} 
There are $y$-monotone shapes that require folding error $ > 0$. 
\end{theorem}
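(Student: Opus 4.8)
The plan is to exhibit a single small $y$-monotone shape $S$ and argue that no Turning Machine folding $S$ can have zero error. The natural candidate is a shape that, at some $y$-coordinate, has a line segment strictly longer than both the segment above it and the segment below it — for instance a ``plus/diamond''-like shape, or even simpler, a shape consisting of a single point at $y=0$, a horizontal segment of length $3$ at $y=1$, and a single point at $y=2$, positioned so the whole thing is connected and $y$-monotone (each row is a contiguous segment). The key structural obstruction is that a zero-error folding is a Turning Machine configuration whose set of positions equals $S$ exactly, so the underlying chain must be a Hamiltonian path of the induced graph of $S$, and moreover that path must be realizable as the target of a Turning Machine starting from an east-pointing line on the $x$-axis.

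First I would record the two facts a zero-error folding must satisfy: (i) the positions of the final configuration $c_t$ are exactly the points of $S$, so $|c_t| = |S|$ and $\sigma(c_t)$ traces a Hamiltonian path in the induced graph of $S$; and (ii) $c_t$ is reachable from the initial line configuration $c_0$ with $\pos(m_0)=(0,0)$, and in particular it is reachable at all, i.e.\ $c_0$ is not permanently blocked on the trajectory reaching $c_t$. Second, for the chosen $S$ I would enumerate (it is a finite check, since $S$ has only a handful of points) the Hamiltonian paths of its induced graph, up to the symmetry that the chain can be traced from either endpoint. Third, for each such candidate path I would show it cannot be the target of any Turning Machine: either because the path is simply not Hamiltonian-realizable as a final configuration (e.g.\ it would force $m_0$ to an interior point, contradicting $\pos(m_0)=(0,0)$ if we also insist the origin lies in a fixed spot — but more robustly), or, the cleaner route, because along any trajectory that would produce such a $c_t$ the system must pass through a configuration that is permanently blocked, or $c_t$ itself is not even a legal simple chain embedding of the points. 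The cleanest single argument: show that any Hamiltonian traversal of this $S$ must, at the ``wide'' middle row, contain a point whose two chain-neighbours lie one above and one below it (a row of length $3$ flanked by single points forces the path to enter the middle row, go to one end, come back, and exit — impossible without revisiting a point), hence no Hamiltonian path exists at all for a suitably chosen $S$; a shape with a row that is strictly longer than the sum-compatible rows above and below simply has no Hamiltonian path, so error $>0$ is forced trivially.

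The main obstacle — and the place where care is needed — is choosing $S$ so that the non-foldability is genuinely forced and the argument is short: if $S$ is too simple it may actually be Hamiltonian (and then foldable, e.g.\ by Theorem~\ref{thm:y-monotone-scale1}'s zig-zag when the error happens to be zero), and if $S$ is chosen for a subtle dynamic (blocking) reason the proof stops being ``straightforward'' as the theorem claims. So the right move is to pick $S$ whose induced graph has \emph{no} Hamiltonian path at all — the smallest such $y$-monotone example, such as three collinear points at $y=1$ with a single point attached above the middle one and a single point attached below the middle one (a ``plus'' with unit arms). Then any Turning Machine configuration has its positions tracing a path, which cannot cover all five points of $S$ without repetition, so at least one point of $S$ is missed or one point outside $S$ is occupied; in either case the symmetric difference is nonempty and the folding error is $>0$. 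I would finish by noting this $S$ is visibly $y$-monotone (each of its three occupied $y$-rows is a contiguous segment — of lengths $1$, $3$, $1$), completing the proof; a figure reference along the lines of Figure~\ref{fig:cross} could be cited for intuition.
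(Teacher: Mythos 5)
Your overall strategy is the same as the paper's: exhibit a $y$-monotone shape whose induced graph has no Hamiltonian path, and conclude that since the positions of any Turning Machine configuration trace a simple path, the symmetric difference with the shape is necessarily nonempty. That reduction is correct. The gap is in your concrete witness. The induced graph of a shape (Definition~\ref{def:shape}) uses unit edges of the \emph{triangular} grid $G_\triangle$, so two points are adjacent not only along $\pm\vx$ and $\pm\vy$ but also along $\pm\vw$, i.e.\ along $\pm(-1,1)$. For your unit-armed plus with points $(0,1),(1,1),(2,1),(1,2),(1,0)$, the tip $(1,2)$ is adjacent to $(2,1)$ and the tip $(1,0)$ is adjacent to $(0,1)$ via these diagonal edges, so every vertex has degree $\ge 2$ and the path $(1,0),(0,1),(1,1),(2,1),(1,2)$ is a Hamiltonian traversal. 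Your supporting claim that ``a row of length $3$ flanked by single points forces the path to enter, reach one end, and come back'' is square-grid reasoning and fails on $G_\triangle$. (And even if a Hamiltonian path did not exist for that reason, having one is only necessary, not sufficient, for zero-error foldability --- cf.\ Theorem~\ref{thm:spiral} --- so the non-Hamiltonicity direction is the right one to use, but it must actually hold.)

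The repair is exactly the hypothesis the paper imposes: take a cross with width-$1$ arms of length strictly greater than $1$. Then each of the four arm tips has exactly one neighbour in the shape even under triangular-grid adjacency (e.g.\ for the tip $(0,2)$ of a cross centred at the origin with arms of length $2$, the candidates $(0,2)\pm(1,0)$, $(0,2)+(-1,1)$ and $(0,2)+(1,-1)$ all lie outside the shape), so the induced graph has four vertices of degree $1$; a simple path has at most two such vertices, hence no traversal exists and every folding has error $>0$. With that substitution your argument is correct and coincides with the paper's.
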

\begin{proof}
Let $S$ be a cross shape with width-1 line segments for arms, with arm length $>1$, as illustrated in Figure~\ref{fig:cross}. 

Since $S$ is $y$-monotone, by Theorem~\ref{thm:y-monotone-scale1} it is foldable (with error) by a Turning Machine, 
but since it is not Hamiltonian (does not have a traversal), any folding of it has non-zero error. 
\end{proof}
\begin{figure}[ht]
\floatbox[{\capbeside\thisfloatsetup{capbesideposition={left,bottom},capbesidewidth=5cm}}]{figure}[\FBwidth]
{\caption{A foldable shape, but one that requires error $>0$.}\label{fig:cross}}
{\includegraphics[width=3cm]{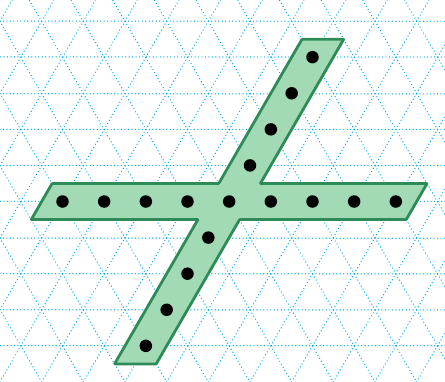}}
\end{figure}

Theorem~\ref{thm:cross} feels unsatisfactory: it is not fair to ask Turning Machines to exactly (zero error) fold shapes that don't even have an error-free traversal. Later, in Theorem~\ref{thm:spiral}, we give a much stronger result:
there is a class of shapes that are unfoldable (with zero error), even though each shape in the class has a (zero error) traversal, and moreover that traversal is foldable on at least one Turning Machine trajectory. 

We leave the following open problem as future work:

\begin{openproblem}\label{op:foldable shapes} 
Characterise the class of shapes that are foldable with 0 error.
\end{openproblem}
In the next section we make some partial progress.

\subsection{Folding scaled shapes with 0-error}\label{sec:exact-folding}

In this section we present an approach to folding {\em spatially scaled} shapes with $0$-error. In our setting, factor-2 scaling is sufficient. 
See an example in Figure~\ref{fig:yw-monotone}.

\begin{figure}[t]
\centering
\includegraphics[page=1,width=0.7\textwidth]{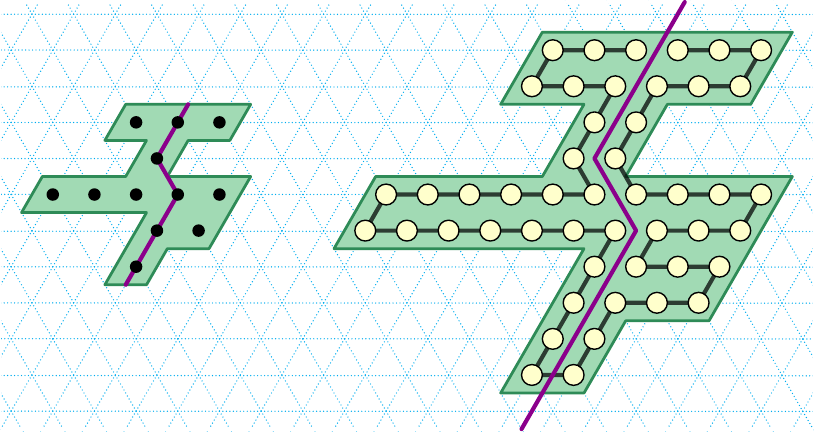}
\caption{Left: an $xy$-connected $y$-monotone shape $S$ with a $yw$-separator (purple). Right:~$S_{\times 2}$, a scaled version of $S$ by a factor two, its traversal $\Sigma$ (black) by a chain of a Turning Machine $T^{\Sigma}$, and the cut $\gamma$ (purple).}
\label{fig:yw-monotone}
\end{figure}

\begin{definition}
Given a shape $S$, we define its scaled version $S_{\times k}$ by a factor $k$ in the following way. Each vertex $(i,j)\in S$ is replaced by a $k\times k$ square in shape $S_{\times k}$. That is, for every $(i,j)\in S$, shape $S_{\times k}$ contains all grid-vertices $\{(ki+a,kj+b)\mid 0\le a,b<k\}$.
\end{definition}

If we blindly attempt to take our approach of folding with error, Theorem~\ref{thm:y-monotone-scale1}, and attempt to apply it directly to fold factor-$2$ scaled shapes,  we will fail (for example, consider a factor-$2$ scaled version of the cross in Figure~\ref{fig:cross}). However, in this section we introduce a sufficient property of a shape $S$ to have, such that its scaled version $S_{\times 2}$ is foldable by a Turning Machine (leading to Theorem~\ref{thm:yw-separator}).

Let $C=\{c_1,c_2,\dots,c_k\}$ be a chain of grid-vertices in $S$.
That is, $C$ is a non-self-intersecting sequence of grid-points such that $c_i$ and $c_{i+1}$ are neighboring grid vertices, for all $i$.

\begin{definition}
We say that chain $C$ is a \emph{$yw$-chain} if the $y$-coordinates of $c_i$ and $c_{i+1}$ differ by exactly one.
\end{definition}
Note that in this case $C$ is a chain of straight-line segments parallel to one of the $\vec y$ or $\vec w$ directions.

Now, let $S$ be a $y$-monotone shape, let $y_\mathrm{min}$ be the $y$-coordinate of its bottom most row, and let $y_\mathrm{max}$ be the $y$-coordinate of its topmost row.

\begin{definition}\label{def:yw-separator}
We say that chain $C=\{c_1,c_2,\dots,c_k\}$ is a \emph{$yw$-separator} of a $y$-monotone shape $S$ if (1)~$C$ is a $yw$-chain, (2)~$c_i\in S$ for all $i$, and (3)~the $y$-coordinates of $c_1$ and $c_k$ are $y_\mathrm{min}$ and $y_\mathrm{max}$, respectively.
\end{definition}

First, we prove that a scaled version of an $xy$-connected $y$-monotone shape with a $yw$-separator can be partitioned into two pieces with ``nice'' left and right boundary, respectively (see Figure~\ref{fig:yw-monotone}~(right)).

\begin{definition}
Define the \emph{left (right) boundary} of a $y$-monotone shape $S$ to be the set of the leftmost (rightmost) grid-points in every row of $S$.
\end{definition}
\begin{definition}
Given the $yw$-separator $C$ of a $y$-monotone shape $S$. Define $C_{\times 2}$ as the set of grid points in $S_{\times 2}$ that corresponds to $C$ in $S$, that is, for each $c_i = (x_i,y_i) \in C$, we denote the four points in $C_{\times 2}$ corresponding to $c_i$, as $c_{i,(0,0)}=(2x_i,2y_i)$, $c_{i,(1,0)}=(2x_i+1,2y_i)$, $c_{i,(0,1)}=(2x_i,2y_i+1)$, and $c_{i,(1,1)}=(2x_i+1,2y_i+1).$
\end{definition}

Consider the $yw$-separator $C$ of $S$, where $S$ is a $xy$-connected $y$-monotone shape, and consider the grid-points $C_{\times 2}$ of $S_{\times 2}$.

\begin{figure}[t]
\centering
\includegraphics[page=2]{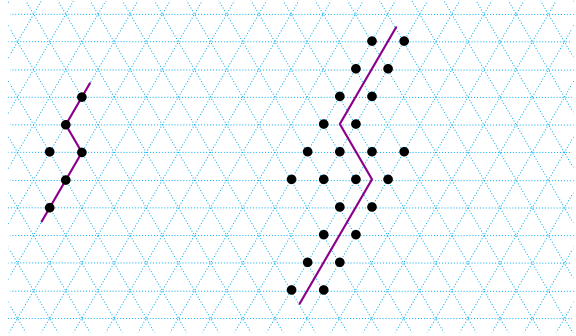}
\caption{Left: an $xy$-connected $y$-monotone shape $S$ containing 6 points, along with a $yw$-separator $C$ (in purple). 
Right: grid points in black show  $S_{\times 2}$ which is the factor-2 scaling of $S$ and the separator $C$ (in purple) that separates $S_{\times 2}$ into $S'_{\times2}$ (left) and $S''_{\times2}$ (right).}
\label{fig:yw-separator}
\end{figure}

We cut $S_{\times 2}$ in between the points of $C_{\times 2}$ in the following way (refer to Figure~\ref{fig:yw-separator}).
Let $C'$ and $C''$ be the rightmost boundary of $S'_{\times 2}$ and the leftmost boundary of $S''_{\times 2}$ respectively.
Assign the bottommost points $c_{1,(0,0)}$ to $C'$ and $c_{1,(1,0)}$ to $C''$.

For two consecutive points $c_i$ and $c_{i+1}$ on $C$, if $\overline{c_i c_{i+1}}$ is parallel to $\vec{y}$, then assign $c_{i,(0,1)}$ and $c_{i+1,(0,0)}$ to $C'$, and $c_{i,(1,1)}$ and $c_{i+1,(1,0)}$ to $C''$.

Otherwise, if $\overline{c_i c_{i+1}}$ is parallel to $\vec{w}$, then one of the following cases holds: either the point $c_i-(1,0)$ belongs to $S$, or the point $c_{i+1}+(1,0)$ belongs to $S$.
Indeed, as $S$ is a $xy$-connected $y$-monotone shape, one of the two points must belong to $S$.
Then, in the first case, when $c_i-(1,0)$ belongs to $S$, we assign one point $c_{i+1,(0,0)}$ to $C'$, and three points $c_{i,(0,1)}$, $c_{i,(1,1)}$, and $c_{i+1,(1,0)}$ to $C''$.
In the second case, when $c_{i+1}+(1,0)$ belongs to $S$, we assign three points $c_{i,(0,1)}$, $c_{i+1,(0,0)}$, and $c_{i+1,(1,0)}$ to $C'$, and one point $c_{i,(1,1)}$ to $C''$.

Finally, assign the topmost points $c_{k,(1,0)}$ to $C'$ and $c_{k,(1,1)}$ to $C''$.

In the following lemma we prove that the described method indeed partitions $S_{\times 2}$ into two simply-connected pieces with $yw$-chains on the right and left boundary.

\begin{lemma}\label{lem:cut-yw-separator}
Let $S$ be an $xy$-connected $y$-monotone shape that has a $yw$-separator.
Then $S_{\times 2}$ can be partitioned into two simply-connected $y$-monotone pieces $S'_{\times 2}$ and $S''_{\times 2}$ such that (1)~every row of $S_{\times 2}$ is partitioned into two non-empty subsets, with the left belonging to $S'_{\times 2}$, and the right to $S''_{\times 2}$, and (2)~the right boundary of $S'_{\times 2}$ and the left boundary of $S''_{\times 2}$ form $yw$-chains.
\end{lemma}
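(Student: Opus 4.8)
The plan is to make the partition completely explicit by attaching to every row of $S_{\times2}$ a single ``cut column'' $\kappa(r)$, read off from the assignment of $C_{\times2}$ described just before the lemma, and then to verify the three asserted properties by an induction that walks up the separator.

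First I would set up preliminaries. Since $C$ is a $yw$-chain whose endpoints lie in the bottom and top rows of $S$, it is $y$-monotone (this is what the construction tacitly uses; we take $C$ ascending), so it visits exactly one grid-point $c_i=(x_i,y_i)$ in each row $y_i$ of $S$, and the four scaled points $c_{i,(0,0)},c_{i,(1,0)}$ (in row $2y_i$) and $c_{i,(0,1)},c_{i,(1,1)}$ (in row $2y_i+1$) are the only points of $C_{\times2}$ in those two rows; in particular the assignment rules touch every row of $S_{\times2}$. I would also record that a connected $y$-monotone shape has every row equal to an interval and is therefore simply connected; hence $S_{\times2}$ is simply connected, each row $r$ of $S_{\times2}$ is an interval $[L(r),R(r)]$, and consecutive rows overlap.

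Next I would unwind the construction into the function $\kappa$: for each row $r$ of $S_{\times2}$ let $\kappa(r)$ be the column at which the construction places the right-boundary point of $S'_{\times2}$, so that the left-boundary point of $S''_{\times2}$ sits at column $\kappa(r)+1$; then set $S'_{\times2}=\{p\in S_{\times2}:x(p)\le\kappa(\mathrm{row}(p))\}$ and $S''_{\times2}=S_{\times2}\setminus S'_{\times2}$. The core is an induction on $i$ along $C$ showing, for the rows up to level $2y_i+1$, that (i)~$L(r)\le\kappa(r)<R(r)$ (so both pieces meet row $r$ and the designated boundary points really lie in $S_{\times2}$), and (ii)~$\kappa(r+1)-\kappa(r)\in\{0,-1\}$. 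For a segment of $C$ parallel to $\vec{y}$ the cut is locally vertical and (i),(ii) are immediate; the even$\to$odd transition $2y_i\to2y_i+1$ inside one scaled level is checked directly. Property (i) for a segment parallel to $\vec{w}$ is exactly where $xy$-connectedness is consumed: there the cut must shift one column diagonally, and $xy$-connectedness guarantees that one of $c_i-(1,0)$, $c_{i+1}+(1,0)$ lies in $S$, which is precisely the extra column (after scaling) that keeps $\kappa(r)$ strictly inside $[L(r),R(r)]$ in the rows spanned. Property (ii) holds because the only grid moves that raise $y$ by one are $\vec{y}=(0,1)$ and $\vec{w}=(-1,1)$, and the cut inherits this from $C$; a step that raised the column would need the non-grid vector $(1,1)$.

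From (i) and (ii) the conclusions follow routinely. Conclusion~(1): $[L(r),\kappa(r)]$ and $[\kappa(r)+1,R(r)]$ are nonempty by (i) and partition row $r$. Simple-connectedness and $y$-monotonicity: each row of $S'_{\times2}$ is the interval $[L(r),\kappa(r)]$ and each row of $S''_{\times2}$ is $[\kappa(r)+1,R(r)]$, so both are $y$-monotone, have no holes, and are connected because consecutive rows overlap (using that $S_{\times2}$ is connected $y$-monotone together with $|\kappa(r+1)-\kappa(r)|\le1$ and the cut tracking $C_{\times2}\subseteq S_{\times2}$). Conclusion~(2): the right boundary of $S'_{\times2}$ is $\{(\kappa(r),r)\}_r$ and the left boundary of $S''_{\times2}$ is $\{(\kappa(r)+1,r)\}_r$, and by (ii) consecutive points of either set differ by $(0,1)$ or $(-1,1)$, i.e.\ by one step parallel to $\vec{y}$ or $\vec{w}$, so both are $yw$-chains.

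\textbf{Main obstacle.} The delicate point is property~(i) through the $\vec{w}$-segments of $C$: this is the only place the $xy$-connectedness hypothesis is used, and verifying it requires the fiddly case split into ``$c_i-(1,0)\in S$'' versus ``$c_{i+1}+(1,0)\in S$'' together with keeping track of which of the four scaled points of $c_i$ was already assigned by the previous segment. Once the cut function $\kappa$ is pinned down with properties (i) and (ii), the rest is bookkeeping.
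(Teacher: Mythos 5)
Your route is the same one the paper takes: your $\kappa(r)$ is exactly the column of the paper's chain $C'$, and everything is reduced to your per-row facts (i) and (ii). The genuine problem is step (ii), which is precisely the point you flag as ``fiddly'' and then assert: your justification only rules out \emph{increases} of $\kappa$ (no $(1,1)$ grid step), but it does not rule out a drop of $2$ in a single row step, and such a drop does occur. It happens when a $\vec{w}$-parallel segment $(c_i,c_{i+1})$ is handled by the second case ($c_{i+1}+(1,0)\in S$), so that the cut enters the scaled level of $c_{i+1}$ at column $2x_{i+1}+1$, and the next segment $(c_{i+1},c_{i+2})$ is $\vec{w}$-parallel and handled by the first case ($c_{i+1}-(1,0)\in S$), so that the cut leaves that level at column $2x_{i+1}-1$: inside one scaled level $\kappa$ falls by $2$. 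Concretely, take $S=\{(0,0),(0,1),(-1,1),(-2,1),(-2,2)\}$ with its (unique) $yw$-separator $C=(0,0),(-1,1),(-2,2)$; this $S$ is $xy$-connected and $y$-monotone. In $S_{\times 2}$, rows $0,1$ occupy columns $\{0,1\}$ and rows $4,5$ occupy columns $\{-4,-3\}$, so conclusion (1) forces $\kappa(1)=0$ and $\kappa(4)=-4$, while a $yw$-chain boundary lets $\kappa$ decrease by at most $1$ per row, hence by at most $3$ over the three steps from row $1$ to row $4$. So your induction cannot be completed, and in fact no cut satisfying both conclusions exists for this $S$: the statement being proved is itself too strong. (You are in good company: the paper's own proof asserts ``by construction'' that consecutive points of $C'$ differ in $x$ by $0$ or $-1$, and that assertion fails in exactly the same scenario.)

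What does survive, and is all that the traversal in Theorem~\ref{thm:yw-separator} actually uses, is the weaker statement that the cut moves by $0$ or $-1$ \emph{across consecutive scaled levels}, i.e.\ from row $2j+1$ to row $2j+2$: those two rows are governed by a single segment of $C$, and a short case check ($\vec{y}$-segment: $0$; $\vec{w}$-segment, either case: $-1$, with the needed boundary point supplied by $c_j-(1,0)\in S$ or $c_{j+1}+(1,0)\in S$) gives it. Inside a level (row $2j$ to $2j+1$) the cut may jump, but there the zig-zag traversal turns around at the common left (resp.\ right) end of the two rows, so no adjacency of boundary points is needed. To salvage your write-up you would need to weaken (ii) and conclusion (2) to this level-to-level form and re-check that non-emptiness, the interval structure of the rows, and simple connectivity still go through; as written, the crucial step fails.
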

\begin{proof}
We argue that by splitting $S_{\times 2}$ between the two chains $C'$ and $C''$ constructed above, we indeed obtain such $S'_{\times 2}$ and $S''_{\times 2}$.

First, observe that $C'$ and $C''$ are both $yw$-chains.
Indeed, for each pair of rows in $S_{\times 2}$ corresponding to a row in $S$ from $y_\mathrm{min}$ to $y_\mathrm{max}$ both $C'$ and $C''$ have grid-points assigned to them (from the points corresponding to either grid-points in $C_{\times 2}$ or adjacent grid-points in direction $\vec{x}$ or $-\vec{x}$).
Furthermore, by construction, for every two consecutive points $c'_{j},c'_{j+1}\in C'$ (and $c''_{j},c''_{j+1}\in C''$) they differ in $y$-coordinate by exactly $1$, and they differ in $x$-coordinate by $0$ or $-1$.

Furthermore, observe that $C'$ and $C''$ are both $yw$-separators of $S_{\times 2}$, as they span from the bottommost to the topmost row of $S_{\times 2}$.

Let $S'_{\times 2}$ consist of $C'$ and all the grid-points of $S_{\times 2}$ to the left of $C'$, and let $S''_{\times 2}$ consist of $C''$ and all the grid-points of $S_{\times 2}$ to the right of $C''$.
Shapes $S'_{\times 2}$ and $S''_{\times 2}$ are simply-connected, as $C'$ and $C''$ are connected, and have $yw$-chains on the rightmost and the leftmost boundary respectively.
\end{proof}

We next define a traversal $\Sigma$ of $S_{\times 2}$. 
Consider the shapes $S'_{\times 2}$ and $S''_{\times 2}$ constructed as described in the proof of the above lemma.
For each pair of rows of $S_{\times 2}$ corresponding to a row in $S$, we traverse the bottom of the two rows of $S'_{\times 2}$ from the grid-point on $C'$ left to the end of the row, and return to the grid-point on $C'$ along the top one of the two rows (see Figure~\ref{fig:yw-monotone}).
As the two leftmost grid-points of the two rows in $S'_{\times 2}$ correspond to the same grid-point of $S$, the traversal from the first to the second row is valid.
Similarly, we traverse the bottom of the two rows of $S''_{\times 2}$ from the grid-point on $C''$ right to the end of the row, and return to the grid-point on $C''$ along the top one of the two rows.
As $C'$ and $C''$ are $yw$-chains, the traversal between the two pairs of rows in $S'_{\times 2}$ and $S''_{\times 2}$ is valid as well.
Finally, we connect the two parts of the traversal $\Sigma$ across $C'$ and $C''$ in the bottommost row of $S_{\times 2}$.

\begin{theorem}\label{thm:yw-separator}
Let $S$ be an $xy$-connected $y$-monotone shape that has a $yw$-separator $C$, then $S_{\times 2}$ is foldable by a Turning Machine in expected time $O(\log n)$.
\end{theorem}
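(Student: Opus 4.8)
The ingredients are already in place: by Lemma~\ref{lem:cut-yw-separator} the shape $S_{\times 2}$ splits into two simply-connected $y$-monotone pieces $S'_{\times 2}$ (left, with $yw$-chain right boundary $C'$) and $S''_{\times 2}$ (right, with $yw$-chain left boundary $C''$), together with the explicit traversal $\Sigma$ of $S_{\times 2}$ constructed above. The first step is a structural observation: $\Sigma$ decomposes as $\Sigma = \Sigma' \cdot e \cdot \Sigma''$, where $\Sigma'$ traverses $S'_{\times 2}$ from its topmost row down to the point $(2x_1,2y_1)$ using only the directions $\{\pm\vx,-\vy,-\vw\}$ (a negative zig-zag path), $e$ is the single edge joining $(2x_1,2y_1)$ to $(2x_1+1,2y_1)$ in the bottommost row, and $\Sigma''$ traverses $S''_{\times 2}$ from $(2x_1+1,2y_1)$ up to its topmost row using only $\{\pm\vx,+\vy,+\vw\}$ (a positive zig-zag path); this relies on the facts that the two rows of each block of $S_{\times 2}$ have equal $x$-extent, that $C',C''$ are $yw$-chains so consecutive points are grid-adjacent, and that in every row the $C'$- and $C''$-points are $x$-adjacent, all from Lemma~\ref{lem:cut-yw-separator}. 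Then I would define a length-$n$ Turning Machine $T^\Sigma$, $n=|\Sigma|$, whose target configuration traces $\Sigma$, by giving each monomer the initial state in $\{0,\pm1,\pm2,\pm3\}$ dictated by the direction of the corresponding edge of $\Sigma$: positive states $\{0,1,2,3\}$ on the $\Sigma''$-part exactly as in Equation~\eqref{eqn:zz states}, the mirror-image negative states $\{0,-1,-2,-3\}$ on the $\Sigma'$-part, and state $0$ on the single monomer realising $e$. On the trajectory that executes all of $\Sigma'$ and then all of $\Sigma''$, $T^\Sigma$ folds $S_{\times 2}$ with zero error; it remains to show this happens on \emph{every} trajectory and in expected time $O(\log n)$.

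The monomers of $T^\Sigma$ occur in blocks with no interleaving: the negative-state monomers $m_0,\dots,m_{p-1}$, then a single state-$0$ monomer $m_p$ (which never moves), then the positive-state monomers $m_{p+1},\dots,m_{n-1}$. I would first re-establish the analogues of Lemmas~\ref{lem:unblocked-states} and~\ref{lem:pred-TM}, and of Conclusion~\ref{concl:only state 1} of Lemma~\ref{lem:pi:blocking}, for $T^\Sigma$. The polygon-and-turn-angle argument of Lemma~\ref{lem:unblocked-states} still goes through: Lemma~\ref{lem:state-chain-general} only needs all monomers to start pointing east, which holds, and for a blocked monomer $m_i$ with $|\Delta s(m_i)|\le 1$ the no-interleaving order forces the monomer $m_{j-1}$ appearing in that argument to lie in the same sign-block as $m_i$ (or to be $m_p$), so the ``state out of range'' contradiction fires exactly as before. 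Hence no monomer of $T^\Sigma$ that has moved at most once is ever blocked, in either part. Given this, the within-part temporary-blocking analysis of Lemma~\ref{lem:zz} (and its mirror image) applies separately to the positive suffix and to the negative prefix: among same-sign monomers, blocking occurs only for a state-$(\pm1)$ monomer adjacent to a state-$(\pm3)$ neighbour, and it is temporary because that state-$(\pm3)$ neighbour has moved zero times and so is never blocked.

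The remaining, and principal, case is a monomer of the positive suffix being blocked by a monomer of the negative prefix, or vice versa. The plan is to prove a geometric separation invariant stating that no reachable configuration of $T^\Sigma$ has the prefix sub-chain and the suffix sub-chain overlapping. On the suffix side, $(m_p,m_{p+1},\dots,m_{n-1})$ is a valid configuration of a states-$\le 3$ machine, so Lemma~\ref{lem:180:li} places it on or above the horizontal line through $\pos(m_p)$ and, by the analogous monotonicity taken relative to the separator direction, weakly to the ``$S''$-side'' of the translate of the $yw$-separator anchored at $m_p$; symmetrically the prefix lies weakly to the ``$S'$-side'' of that same translate; since $e$ keeps $m_{p+1}$ exactly one grid step to the right of $m_p$ and $C$, being a $yw$-chain, meets every row exactly once, these two regions abut along that $yw$-line without crossing. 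Turning this into a rigorous invariant --- in particular tracking how the anchored copy of $C$ moves when prefix monomers move, and checking that it genuinely separates the two partially-folded chains in every reachable configuration --- is where the work concentrates, and is exactly the point at which the $yw$-separator hypothesis and the factor-$2$ scaling are used: scaling makes the two rows of each block coincide in extent, so the corridors swept by the two halves meet cleanly along a $yw$-line rather than interleaving.

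Finally, once permanent blocking is excluded, the $O(\log n)$ time bound follows exactly as in Theorem~\ref{thm:zz}: a suppressed-system reduction leaves, for each of the constantly many state levels, (i)~the time for all high-state monomers at that level to decrement, which is $\sum_{k=1}^{n}1/k = O(\log n)$ by Equation~\eqref{eq:et} since those monomers are never blocked and each fires at rate $1$, plus (ii)~the time for temporarily-blocked state-$(\pm1)$ monomers to become unblocked, which is at most the time for their never-blocked state-$(\pm3)$ neighbours to move, again $O(\log n)$. Summing over the constant number of levels and over the two halves gives expected time $O(\log n)$, as claimed.
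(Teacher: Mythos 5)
Your overall architecture is the same as the paper's: split $\Sigma$ at the separator into a negative zig-zag prefix and a positive zig-zag suffix, dispose of within-half blocking with the zig-zag machinery (Lemma~\ref{lem:unblocked-states}, Lemma~\ref{lem:zz}, Theorem~\ref{thm:zz}), and rule out cross-half blocking by a geometric separation along the $yw$-cut. But the proposal stops short precisely at the decisive step. The separation invariant is not proved; you explicitly defer it (``where the work concentrates''), and the justification you sketch would not work as stated: a suffix chain whose links lie in $\{+\vx,+\vy,+\vw,-\vx\}$ enjoys the vertical monotonicity of Lemma~\ref{lem:180:li}, but it has no a priori lateral monotonicity with respect to a $yw$-line --- a partially folded return row (a run of $-\vx$ links) could in principle poke across the translate of $C$ anchored at $m_p$, so ``weakly to the $S''$-side'' is not an ``analogous monotonicity'' consequence of Lemma~\ref{lem:180:li}. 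Likewise, your claim that the witness $m_{j-1}$ in the re-run of Lemma~\ref{lem:unblocked-states} must lie in the same sign-block as $m_i$ is unjustified: the colliding pair $m_k\in\tail(m_i)$, $m_j\in\head(m_i)$ may straddle the two blocks (that is exactly the cross-blocking case), and the no-interleaving of the state signs along the chain says nothing about which monomers can geometrically collide; appealing to it before the separation invariant is in place is circular.

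For comparison, here is how the paper closes the gap you left open. Having shown (as you do) that the prefix machine $T'$ and suffix machine $T''$ individually compute $S'_{\times 2}$ and $S''_{\times 2}$, it fixes the frame of the first suffix monomer $m_{i+1}$ and observes that a move of any suffix monomer translates the later suffix monomers by exactly one of $\vw$, $-\vx$, $-\vy$. Hence, if some suffix monomer were ever strictly to the left of the cut $\gamma$ (the separator from Lemma~\ref{lem:cut-yw-separator}, extended to infinity along $\vec y$), all of its subsequent positions would be confined to the $120^\circ$ cone spanned by $\vw$ and $-\vy$, so it could never re-cross $\gamma$ --- contradicting that $T''$ on its own reaches its target, which lies entirely to the right of $\gamma$. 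The mirrored argument keeps the prefix monomers to the left of $\gamma$, so no cross-half collision can ever occur; this confinement argument is where the $yw$-separator hypothesis and the factor-$2$ scaling are actually used. Your concluding $O(\log n)$ time analysis is fine once this invariant is supplied, but without it the main claim --- that \emph{every} trajectory folds $S_{\times 2}$ --- is not established.
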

\begin{proof}
We prove that the Turning Machine $T^\Sigma$ defined as follows folds the shape $S_{\times 2}$, i.e.~on all trajectories.
Let $T^{\Sigma}$ consist of $n$ monomers $m_0,m_1,\dots,m_{n-1}$, where $n$ is the number of grid-points of $S_{\times 2}$, and the first monomer $m_0$ corresponds to the topmost grid-point of $C'$ (defined above). 
In the folded state, $T^{\Sigma}$ traverses $S'_{\times 2}$ from top to bottom, and then traverses $S''_{\times 2}$ from bottom to top.
Let the initial states of the monomers forming the bottommost row (except for the rightmost one) of $S_{\times 2}$ be~$0$.
Applying Lemma~\ref{lem:state-chain-general} we derive the initial states of the remaining monomers (refer to Figure~\ref{fig:yw-monotone}).
Let monomer $m_j$ correspond to a grid-point $p_j\in S'_{\times 2}$.
We consider the following cases: $p_j$ is in an odd or even row of $S'_{\times 2}$; furthermore, we consider the cases when $p_j$ is the leftmost point, the rightmost point, or an interior point of the row.
Let $C'=\{c'_1,c'_2,\dots,c'_k\}$ and $C''=\{c''_1,c''_2,\dots,c''_k\}$, where $c'_1$ and $c''_1$ are the bottom most grid-points, and $c'_k$ and $c''_k$ are the top most grid-points.
Let $p_j$ be in an odd row, then
\begin{itemize}
\item if $p_j=c'_\ell\in C'$ and $\ell>1$, then $s(m_j)=-2$ if $\overline{c'_{\ell-1},c_\ell}$ is parallel to $\vec{y}$, and $s(m_j)=-1$ if $\overline{c'_{\ell-1},c_\ell}$ is parallel to $\vec{w}$,
\item otherwise, if $p_j\not\in C'$ or $p_j=c'_1$, then $s(m_j)=0$.
\end{itemize}
Let $p_j$ be in an even row of $S'_{\times 2}$, then
\begin{itemize}
\item if $p_j$ is the leftmost point in its row, then $s(m_j)=-2$,
\item otherwise, $s(m_j)=-3$.
\end{itemize}
Similarly, for the monomers whose final positions fall in $S''_{\times 2}$, we specify the following initial states.
Let $p_j$ be in an even row of $S''_{\times 2}$, then
\begin{itemize}
\item $s(m_{n-1})=0$,
\item if $p_j=c''_\ell\in C''$ and $\ell<k$, then $s(m_j)=1$ if $\overline{c''_{\ell},c''_{\ell+1}}$ is parallel to $\vec{y}$, and $s(m_j)=2$ if $\overline{c'_{\ell},c''_{\ell+1}}$ is parallel to $\vec{w}$,
\item otherwise, if $p_j\not\in C''$, then $s(m_j)=-3$.
\end{itemize}
Let $p_j$ be in an odd row of $S''_{\times 2}$, then
\begin{itemize}
\item if $p_j$ is the right point in its row, then $s(m_j)=1$,
\item otherwise, $s(m_j)=0$.
\end{itemize}
We claim that $T^\Sigma$ indeed computes $S_{\times 2}$ and does not enter a permanently blocked configuration for any sequence of transition rules (trajectory).
Assume that there is a configuration $c$ that is reachable from the initial configuration of $T^\Sigma$ that is permanently blocked.
Let $m_i$ be the monomer corresponding to the bottommost grid-point of $C'$, that is, $m_i$ is the last monomer in the traversal $\Sigma$ that is still in $S'_{\times 2}$.
Let $T'$ and $T''$ be Turning Machines consisting of the monomers $\{m_0,\dots,m_i\}$ and $\{m_{i+1},\dots,m_{n-1}\}$ correspondingly.
Similarly to the proofs of Theorems~\ref{thm:zz} and~\ref{thm:y-monotone-scale1}, we can argue that $T'$ and $T''$ individually compute the shapes $S'_{\times 2}$ and $S''_{\times 2}$, i.e., they fold without entering in a permanently blocked configuration.
Thus, if $c'$ is blocked, then for every monomer $m_\ell$ there must be a pair of monomers $m_j$ with $j\le i$ and $m_k$ with $k>i$ that are blocking the transition of $m_\ell$.

\begin{figure}[t]
\centering
\includegraphics[page=3]{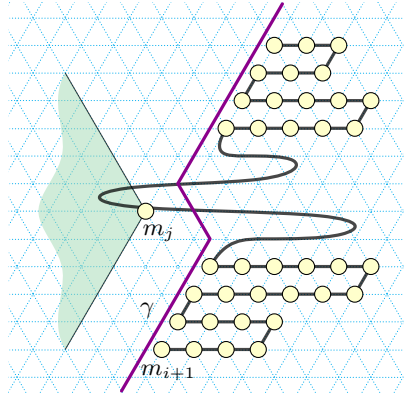}
\caption{Folding of a Turning Machine $T''$. Its monomers remain on the right side of the cut $\gamma$. If monomer $m_j$ were to the left of $\gamma$, its further position is confined to the cone shown in green.}
\label{fig:yw-zigzag}
\end{figure}

Consider individual folding of the Turning Machine $T''$.
Without loss of generality, let the monomer $m_{i+1}$ be fixed in the origin of the coordinate system, and the remaining monomers move with respect to it.
We consider the cut $\gamma$ between the shapes $S'_{\times 2}$ and $S''_{\times 2}$ in this global coordinate system, and extend its ends to infinity along the $\vec{y}$ direction (see Figure~\ref{fig:yw-zigzag}).
We claim that the monomers of $T''$ always remain to the right of $\gamma$.
Consider how the position $\pos(m_j)$ changes when the transition rules are applied to the monomers of $T''$.
The monomers of $T''$ rotate from orientation $\vec{x}$ to $\vec{y}$, from $\vec{y}$ to $\vec{w}$, and from $\vec{w}$ to $-\vec{x}$.
If a monomer $m_k$ moves, where $k\ge j$, the position of $m_j$ does not change.
If a monomer $m_k$ moves, where $k<j$, the position of $m_j$ changes by one unit distance in the directions $\vec{w}$, $-\vec{x}$, or $-\vec{y}$.
Suppose in an intermediate configuration $c''$ some monomer $m_j\in T''$ has a position to the left of the cut $\gamma$.
From this moment on, the position of $m_j$ is confined to a $120^\circ$ with its apex in $\pos(m_j)$ and the two rays emanating in the directions $\vec{w}$ and $\vec{-y}$.
Thus, monomer $m_j$ will never cross the cut $\gamma$ to the right side, and thus $T''$ does not compute $S''_{\times 2}$.

Similarly, we argue that the monomers of $T'$ always remain to the left of the cut $\gamma$.
This implies that there can never be a blocked pair $m_j$ with $j\le i$ and $m_k$ with $k>i$.
Therefore, $T^\Sigma$ folds $S_{\times 2}$ on every trajectory.

For the time analysis, we note that the target configuration of $T'$ is a negative zig-zag path, and the target configuration of $T''$ is a positive zig-zag path. By Theorem~\ref{thm:zz}, both $T'$ and $T''$ would each individually fold in $O(\log n)$ expected time if they were separate Turning Machines. In the construction in this proof, we show that monomers in $T'$ don't block any monomers in $T''$ (and vice-versa), hence even if both $T'$ and $T''$ are joined, they fold independently. 
Thus, $T^\Sigma$ completes in $O(\log n)$ expected time.
 \end{proof}

\subsection{Shapes that have a traversal, yet are not foldable: spirals}\label{sec:impossible-shapes} 
\begin{figure}[t]
    \centering
    \includegraphics[width=0.65\textwidth]{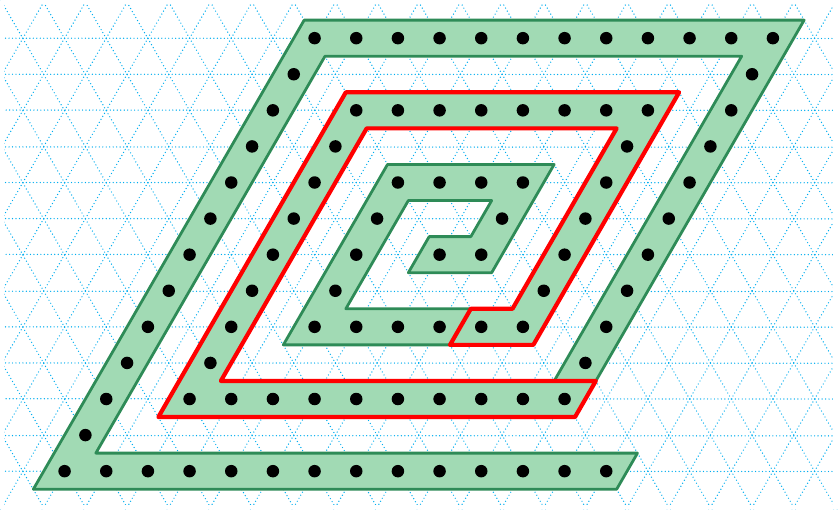}
    \caption{A $k=3$-turn, gap 1, spiral on the triangular grid $G_\triangle$, as defined in Definition~\ref{def:spiral}. The centre of the spiral is at (0,0), and the arms of the spiral turn anti-clockwise while keeping at a distance (gap) of exactly 1 from each other.  The monomers framed in red are those which reside in the ``almost rectangle'' $R_{k'} = R_{2}$ in Definition~\ref{def:spiral}.}
    \label{fig:spiral_defn}
\end{figure}

The goal of this section is to prove Theorem~\ref{thm:spiral}, which gives a class of shapes that have a traversal yet are not foldable. Specifically,  the  following definition of $k$-turn, 1-gap, spiral, defines a particular kind of spiral that makes  $k$ anti-clockwise  turns, and has its arms held at unit distance (gap 1) apart, as shown in Figure~\ref{fig:spiral_defn}. 

\begin{definition}[$k$-turn $1$-gap spiral]\label{def:spiral}
For $k\in \Nset^+$, define the anti-clockwise \emph{$k$-turn $1$-gap spiral} to be $\bigcup_{k' \leq k} R_{k'}$, where $R_{k'}$ is an ``{\em almost rectangle}'' formally defined as follows (see Figure~\ref{fig:spiral_defn} for a pictorial definition): 
\begin{align*}
R_{k'} &=  \{ (x,y) \mid y = \pm 2k' \textrm{ and } -2k' \leq x \leq 2k'-1 \}  \\
          &\cup  \{ (x,y) \mid x \in \{ -2k' , 2k'-1\} \textrm{ and } -2k' \leq y \leq 2k' \}  \\
          & \cup \{ (2k'-2, -2k' + 2), (2k',-2k'),(2k'+1,-2k') \} \\
          &\setminus \{ (2k'-1, -2k'  +1  ) \}\,.
\end{align*}
\end{definition}

\begin{remark}
For each $k$, the $k$-turn $1$-gap spiral has two traversals--one starting at the centre, one ending at the centre. 
It can also be seen that $k$-turn $1$-gap spiral has a Turning Machine that has a trajectory whose final configuration is the spiral---$S_M$ starts as a line with monomer $m_0$ at the centre of the spiral, and with initial states being {\em turning numbers}\footnote{Start with some integer (e.g. 0) and traverse the curve writing down an integer at each point, but incrementing the noted value at each turn of $\pi/3$ anti-clockwise, and decrementing at each turn of $\pi/3$ clockwise.} for the spiral curve. 
One can then imagine a trajectory that folds each arm of the spiral one at a time starting at the centre and working its way out (like rolling up a piece of paper).  
However, despite this, in this section we show that for each $k \in \Nset^+$, the $k$-turn $1$-gap spiral is not foldable by a Turning Machine (Theorem~\ref{thm:spiral}). 
The proof shows that any Turning Machine that attempts to fold a spiral must fail by either having invalid/illegal states, or else having at least one permanently blocked reachable configuration. 
\end{remark}

The following technical definition is used in the statement of Lemma~\ref{lem:turning numbers for a spiral}. 
The notation $[a]^b$, means $\underbrace{a, a,\ldots , a}_{b \textrm{ times}}$. 

\newcommand{\Tio}{\ensuremath{T_{\mathrm{in-to-out}}}}
\newcommand{\Toi}{\ensuremath{T_{\mathrm{out-to-in}}}}
\begin{definition}\label{def:turning numbers for a spiral}
For $k\in\Nset^+$, let $S$ be a $k$-turn $1$-gap spiral.
A sequence of inside-to-outside turning numbers for S is of the form
\[
\Tio(t_0) 
= [t_0]^{1}, [t_1]^{2}, [t_2]^{3}, [t_3]^{4}, \ldots, [t_{4k-2}]^{4k-1}, [t_{4k-1}]^{4k}, [t_{4k}]^{4k+1}, t_{4k}  \,,
\]
where $t_0 \in\Zset$ is such that $t_0 \equiv 0~\mathrm{mod}~6$ and 
\begin{equation}\label{eq:Tio}
t_i = \begin{cases} 
t_{i-1} + 2 & \textrm{ if } i \textrm{ is even, }  \\
t_{i-1} + 1 & \textrm{ if } i \textrm{ is odd. }
\end{cases} 
\end{equation}
A sequence of outside-to-inside turning numbers for $S$ is any sequence of the form
\[
\Toi(t_0) = 
t_0 , [t_0]^{4k+1}, [t_1]^{4k}, [t_2]^{4k-1}, [t_3]^{4k-2}, \ldots  ,  [t_{4k-1}]^{2} ,  [t_{4k}]^{1}  \,,
\]
where $t_0 \in\Zset$ is such that $t_0 \equiv 3~\mathrm{mod}~6$, and 
\begin{equation}\label{eq:Toi}
t_i = \begin{cases} 
t_{i-1} - 1 & \textrm{ if } i \textrm{ is even, }  \\
t_{i-1} - 2 & \textrm{ if } i \textrm{ is odd. }
\end{cases} 
\end{equation}
\end{definition}

\begin{lemma}\label{lem:turning numbers for a spiral}
For $k\in\Nset^+$, the $k$-turn $1$-gap spiral $S_k$ is unfoldable by any Turning Machine $M$ that has initial state sequence  $s_0(m_0),s_0(m_1),\ldots, s_0(m_{n-1}) $ 
that is not one of the turning number sequences $\Tio(s_0(m_0))$ or $\Toi(s_0(m_0))$ from Definition~\ref{def:turning numbers for a spiral}.
\end{lemma}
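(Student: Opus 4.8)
The plan is to show that if a Turning Machine $M$ folds the $k$-turn $1$-gap spiral $S_k$, then its initial state sequence is forced — up to the choice of the starting value $s_0(m_0)$ — to be exactly one of $\Tio(s_0(m_0))$ or $\Toi(s_0(m_0))$. I would proceed by relating the initial states to the geometry of the {\em target} configuration via the turn-angle machinery of Section~\ref{sec:tools}. Recall that if $M$ folds $S_k$, then in particular $S_k$ (more precisely a Hamiltonian traversal of it) is the target configuration reached on every trajectory; since every monomer ends in state $0$, we have $\Delta s(m_i) = s_0(m_i)$ for all $i$ in the target. Applying Lemma~\ref{lem:state-chain-general} between $m_0$ and $m_j$ in the target gives $s_0(m_j) - s_0(m_0) = \frac{3}{\pi}\sum_{k=1}^{j}\alpha_k$, so the initial state of every monomer is completely determined by $s_0(m_0)$ together with the cumulative turn angle of the traversal up to that monomer. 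Hence the whole initial state sequence is a function of $s_0(m_0)$ and of {\em which} Hamiltonian traversal of $S_k$ is used as target.

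Next I would argue that $S_k$ has exactly two Hamiltonian traversals (as noted in the Remark following Definition~\ref{def:spiral}): one running from the centre outward, one from the outside inward, and these are reverses of each other. This is a finite, local check on the ``almost rectangle'' structure of Definition~\ref{def:spiral}: at the spiral's centre and at each of the $k$ ``corner gadgets'' $(2k'-2,-2k'+2),(2k',-2k'),(2k'+1,-2k')$ with the hole at $(2k'-1,-2k'+1)$, the path has no branching freedom, so the traversal is forced once an endpoint is chosen, and the only degree-1 vertices are the centre and the outer tip. For each of these two traversals I would then compute the cumulative turn-angle sums explicitly: walking along an arm contributes $0$, and each quarter-spiral-turn is realised as a run of straight segments punctuated by $+\pi/3$ turns (for the anti-clockwise/inside-to-outside direction) whose counts along successive arms grow linearly — this is exactly what produces the multiplicities $1,2,3,\ldots,4k{+}1$ and the alternating increments $+2$ (even $i$) / $+1$ (odd $i$) of Equation~\eqref{eq:Tio}; the out-to-in traversal, being the reverse, gives the mirrored increments $-1$/$-2$ and the reversed multiplicity pattern of Equation~\eqref{eq:Toi}. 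Plugging these sums into the identity from the previous paragraph yields precisely $\Tio(s_0(m_0))$ in the first case and $\Toi(s_0(m_0))$ in the second. Finally, the residue conditions $t_0 \equiv 0 \bmod 6$ and $t_0 \equiv 3 \bmod 6$ come from the requirement that $m_0$ (at the centre, pointing along its outgoing edge) and $m_{n-1}$ must be consistent with the fixed initial orientation (all monomers east-pointing on the $x$-axis) after the forced total rotation; equivalently, the $\pi/3$-granularity of turn angles forces the starting value into a single residue class mod $6$ once the traversal direction and the spiral's entry/exit directions are fixed.

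Putting these together: any $M$ that folds $S_k$ must have its target equal to one of the two traversals, and then its initial state sequence is forced by the turn-angle identity to be $\Tio(s_0(m_0))$ or $\Toi(s_0(m_0))$. Contrapositively, if the initial state sequence is neither of these, $M$ cannot fold $S_k$ — either the states are inconsistent with {\em any} legal final configuration tracing $S_k$, or the would-be target is not actually reached on all trajectories, which is the statement of the lemma. I expect the main obstacle to be the bookkeeping in the turn-angle computation: carefully accounting for the corner gadgets and the removed point in each $R_{k'}$ (which perturb the ``pure rectangle'' turn counts by a bounded amount) and verifying that these perturbations exactly reproduce the stated closed forms and, in particular, the mod-$6$ conditions on $t_0$ — rather than any conceptual difficulty, since the reduction to ``finitely many traversals $+$ Lemma~\ref{lem:state-chain-general}'' is clean. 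A secondary point requiring care is justifying that ``folds $S_k$'' really does pin down the target configuration's geometry (and not just its point set) so that the turn angles $\alpha_k$ are well-defined along the chain; this follows because the target is a Hamiltonian traversal and $S_k$'s traversal is essentially unique.
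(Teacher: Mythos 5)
Your proposal is correct and follows essentially the same route as the paper: both arguments use the fact that the spiral's two degree-one endpoints force an essentially unique traversal, that each monomer's direction in the final configuration pins its initial state modulo $6$, and that the local state-difference constraints then make the whole sequence rigid, yielding $\Tio(s_0(m_0))$ or $\Toi(s_0(m_0))$. The only (cosmetic) difference is that you invoke the cumulative turn-angle identity of Lemma~\ref{lem:state-chain-general} to fix all states in one shot, whereas the paper marches monomer-by-monomer using Lemma~\ref{lem:s2} together with the per-monomer directionality constraint.
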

\begin{proof}
From its definition, $S_k$ has exactly $|S_k| = 8k^2+6k+2$ points, which, by a straightforward calculation, is the same as the length of  
the sequences of canonical turning numbers in Definition~\ref{def:turning numbers for a spiral}. 
If $M$ does not have exactly $n = |S_k|$ monomers, does not fold $S_k$ (either it does not cover all points of $S_k$ or covers too many).  
Assume then that we are given a Turning Machine $M$ with $n = |S_k|$ monomers, but that do not have states given by Definition~\ref{def:turning numbers for a spiral}.

The spiral has exactly two points, namely $(0,0)$ and $(2k+1, -2k)$, that have degree 1 (one neighbour in $S_k$); we respectively call them the {\em inside} and {\em outside} start points. 
If the spiral were foldable, then monomer $m_0$ is positioned on either the inside or outside start point. 
By Lemma~\ref{lem:s2}, $| s_0(m_i) - s_0(m_{i+1})| \leq  2$ for all $0\le i<n-1$.

Suppose $\pos(m_0)$ is the inside start point. 
If any of the remaining claims do not hold, then the $S_k$ is unfoldable: 

Monomer $m_0$'s initial state is $s_0(m_0) \equiv 0 \mod 6$, by directionality of $m_0$ in the final configuration of $M$ (if not, we are done because either $m_0$ finishes in state 0 but pointing in the wrong direction and thus places $\pos(m_1)$ outside of $S_k$, or else $m_0$ never reaches state 0 meaning $m_0$ is permanently blocked).
Also,  $s_0(m_1) = 1 \mod 6$, for the same reason. 
But Lemma~\ref{lem:s2} tell us that $| s_0(m_i) - s_0(m_{i+1})| \leq  2$ for all $0\le i<n-1$, 
hence $s_0(m_1) = s_0(m_0) + 1$. (If not, we would get that $M$ is blocked (by violating Lemma~\ref{lem:s2}), or that $M$ does not precisely trace the spiral (if $s_0(m_1) \neq 1 \mod 6$). 
Tracing around $S_k$ using the same reasoning for each point along $S_k$
gives that either $\Tio(s_0) = s_0(m_0),s_0(m_1),\ldots, s_0(m_{n-1}) $ or else $M$ does not fold $S_k$, giving the lemma conclusion for this case. 

Else, $\pos(m_0)$ is on the outside start point of $S_k$. 
A similar argument (to the inside case) shows that  
$\Toi(s_0) = s_0(m_0),s_0(m_1),\ldots, s_0(m_{n-1}) $ or else $M$ does not fold $S_k$, giving the lemma conclusion for this case.
\end{proof}

\begin{theorem}[shapes with a traversal, but are unfoldable]\label{thm:spiral}
For all $k\geq 2$ the $k$-turn $1$-gap spiral $S_k$ is not foldable. 
\end{theorem}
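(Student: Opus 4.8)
By Lemma~\ref{lem:turning numbers for a spiral}, any Turning Machine that folds $S_k$ has exactly $n=|S_k|$ monomers and initial state sequence equal to either $\Tio(s_0(m_0))$, with $m_0$ on the inside start point, or $\Toi(s_0(m_0))$, with $m_0$ on the outside start point. In each family the state sequence is pinned down once $s_0(m_0)$ is fixed, and $s_0(m_0)$ is itself free only up to an additive multiple of $6$; adding such a multiple $c$ to every state merely asks every monomer to perform $c$ extra full turns and cannot help the fold (formally, these extra turns can be absorbed into the trajectory built below, in the spirit of the proof of Theorem~\ref{thm:no 2pi rotation}), so it suffices to refute the representative with $s_0(m_0)$ of least absolute value. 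Hence there are only two machines to consider, and because the two candidate families are interchanged by reversing the monomer order together with a reflection of the plane (turning counterclockwise turning numbers into clockwise ones), it is enough to exhibit, for the inside-to-outside machine, a reachable permanently blocked configuration; the outside-to-inside case then follows by the mirror-image argument.

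For the inside-to-outside machine I would build an adversarial trajectory in two phases. \emph{Phase~1 (lock the inner arms).} The prefix $m_0,\dots,m_j$ with $j=|S_{k-1}|-1$ is exactly the portion of the chain destined for the inner $k-1$ arms $\bigcup_{k'\le k-1}R_{k'}$, and (up to the single extra turn owed by $m_j$) its initial states are the inside-to-outside turning numbers of the $(k-1)$-turn $1$-gap spiral, which is foldable on at least one trajectory by the remark following Definition~\ref{def:spiral}. Running that trajectory, followed by $m_j$'s last turn, while applying no rule to any monomer of the suffix, leaves the suffix dragged rigidly --- hence still a straight segment --- emanating from the exit notch of the now fully folded inner sub-spiral, with all of $m_0,\dots,m_j$ in state $0$. \emph{Phase~2 (jam the outer arm).} Since $k\ge2$, every monomer of the pendant tail $m_{j+1},\dots,m_{n-1}$ (the $16k-2\ge 30$ monomers destined for the outermost arm $R_k\setminus R_{k-1}$) has initial state $\ge6$: it must realise a net turn of at least $2\pi$ on its way out. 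Applying to this straight tail the construction of Theorem~\ref{thm:no 2pi rotation} (with the bulk of the tail playing the role of the ``red line'' of Figure~\ref{fig:T7}), I reach a configuration in which no tail monomer has an applicable rule. The inner sub-spiral consists entirely of state-$0$ monomers, so it contributes no moves, and being a fixed obstacle it can only delete applicable rules of the tail, never create one; therefore the whole configuration is permanently blocked, and it is reachable by construction, so the inside-to-outside machine does not fold $S_k$.

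The crux is making the two phases fit together. In Phase~1 one must check that the rigidly dragged tail never collides with the growing inner sub-spiral during the fold; this can be arranged by choosing the move order so that the tail always swings on the outside of the partially folded sub-spiral, exactly as a free flap swings clear when one rolls up a sheet of paper, and by verifying that the phase terminates with $m_0,\dots,m_j$ all in state $0$ and the tail straight and lying in free space next to the exit notch. In Phase~2 one must adapt the jamming construction of Theorem~\ref{thm:no 2pi rotation} to a tail that (i)~is anchored at one end to the folded sub-spiral rather than to the $x$-axis, and (ii)~has non-uniform states, all $\ge6$: point~(i) is harmless because the sub-spiral is a static obstacle built of state-$0$ monomers and can only help the jam, and point~(ii) is handled by first spending each tail monomer's surplus turns to bring the stretch of tail used by the gadget into the uniform regime it requires, which is possible since that stretch is, throughout, an unobstructed near-straight segment. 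Once these geometric bookkeeping points are settled the inside-to-outside machine is refuted for every $k\ge2$, and the outside-to-inside machine is refuted identically after the reflection described in the first paragraph, which proves the theorem.
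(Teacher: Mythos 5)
Your first step (invoking Lemma~\ref{lem:turning numbers for a spiral} to reduce to the two canonical state sequences) matches the paper, but your next reduction is where the argument breaks. You claim that since $s_0(m_0)$ is free only up to a multiple of $6$, adding such a multiple ``merely asks every monomer to perform extra full turns and cannot help the fold'', so that it suffices to refute the representative of least absolute value. This is a genuine gap, not bookkeeping: changing $t_0$ by a multiple of $6$ changes the \emph{signs} of states, and in this model sign is direction --- positive states turn anti-clockwise, negative states clockwise --- so the machines $\Tio(0)$ (all states $\ge 0$), $\Tio(-6k)$ (all states $\le 0$) and $\Tio(-6j)$ for $0<j<k$ (mixed signs) are different dynamical systems with unrelated trajectory spaces. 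Unfoldability requires exhibiting a reachable permanently blocked configuration in \emph{each} of these machines; a jam built for $\Tio(0)$ says nothing about the others, and ``absorbing the extra turns'' is not available when, for example, all states are negative and your Phase~1 anti-clockwise roll-up of the inner sub-spiral cannot even begin. The paper makes no such reduction: for each of the two families it splits into three sign-pattern subcases (all non-negative, all non-positive, mixed) and gives a separate explicit jam for each. Your two-phase plan (fold the inner sub-spiral, then jam the straight outer arm via Theorem~\ref{thm:no 2pi rotation}) corresponds only to the all-non-negative inside-to-outside case --- and even there the paper's Case~1(a) jams by folding the tail prematurely into a ``pocket'' created after a partial fold, rather than after completing the inner sub-spiral --- while the all-negative case is handled with the $2\pi$-impossibility gadget plus a long tail that must first be folded aside, and the mixed case with a contiguous block of state-$0$ monomers whose two sides fold in opposite directions into a pincer.

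A second issue is the asserted reversal-plus-reflection duality between the inside-to-outside and outside-to-inside families. Such a reversal symmetry of the dynamics is plausible (blocking is translation-invariant, so fixing the head instead of the tail gives configurations differing by a translation), but it is stated nowhere in the paper and is not proved by you; moreover, reflecting the plane maps $S_k$ to its mirror image rather than to $S_k$, and reversal/reflection also permutes the sign-pattern subcases, so even if granted it would transfer only your single all-non-negative inside-to-outside jam to the all-non-positive outside-to-inside machine, leaving the remaining cases untouched (this compounds the first gap). The paper instead treats the outside-to-inside family directly with three further constructions (one of which is a mirror image of an inside-to-outside construction, but at the level of a concrete figure, not via a general duality). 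Finally, your Phase~1 claim that the rigid straight tail can always be swung clear while the inner sub-spiral rolls up, and your Phase~2 adaptation of the Theorem~\ref{thm:no 2pi rotation} gadget to a tail anchored at the folded sub-spiral with states exceeding $6$, are substantial geometric claims deferred as ``bookkeeping''; they would need the kind of explicit move-order verification the paper supplies in its case analysis.
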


\begin{figure}[t]
  \centering
\includegraphics[page=1,width=\textwidth]{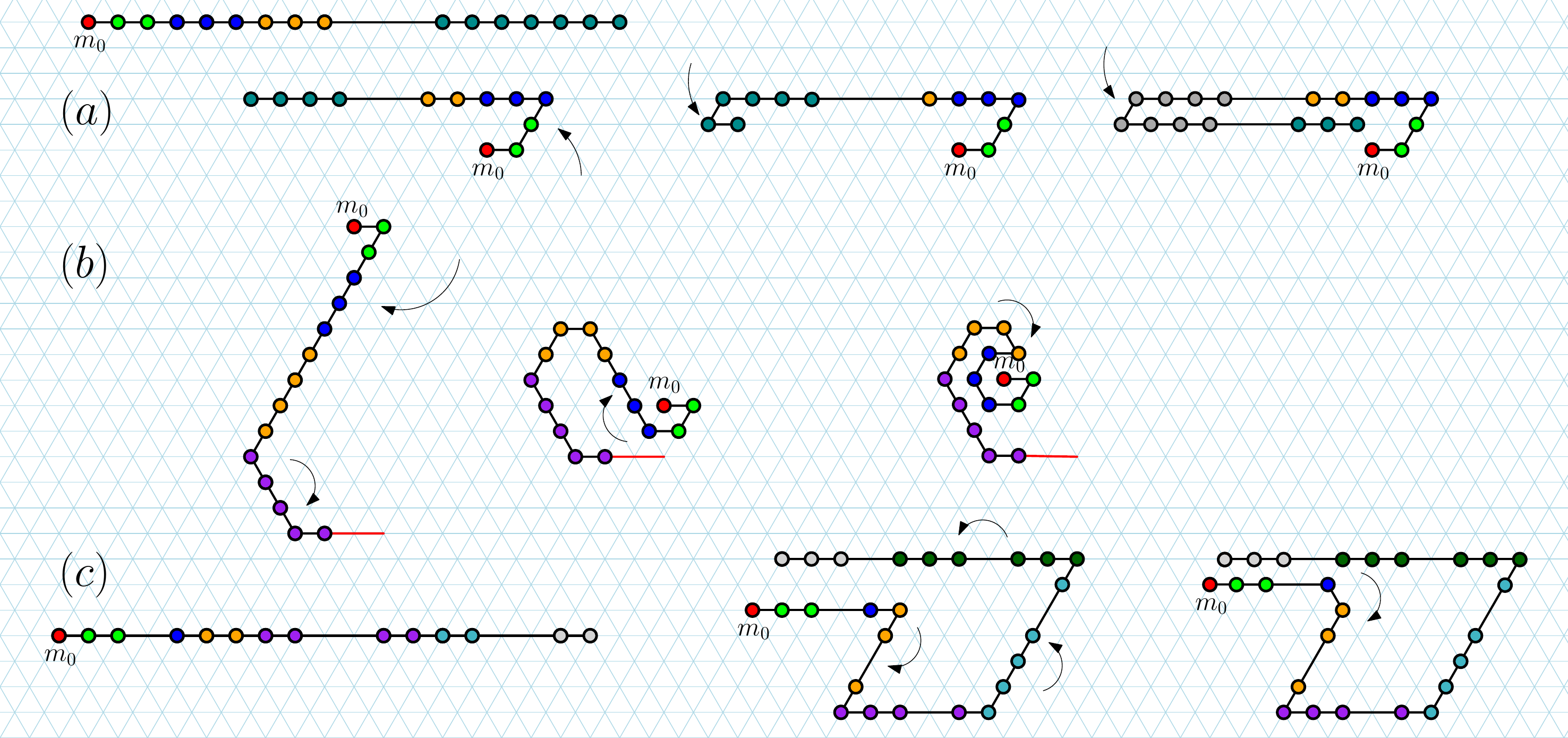}
\caption{Illustration for Case 1 of the proof of Theorem~\ref{thm:spiral}, showing that a $2$-turn, 1-gap, Spiral can not be folded from ``inside to out''. 
Top left: initial configuration, monomers are co-linear, colours distinguish initial states.
(a)~Initial configuration with all states positive.  First we fold the intended innermost coil/turn of the spiral, by rotating almost the full line of monomers anti-clockwise by $\pi$, creating a small ``pocket'' (red, green, and blue monomers). 
Then the teal/grey monomers ``prematurely'' fold the configuration into the pocket, eventually yielding a permanently blocked configuration -- to carry out the folding it suffices that the teal monomers have initial state $\geq 6$. 
(b)~Initial configuration with all states negative. The trajectory shown folds into a permanently blocked configuration similar to that in Figure~\ref{fig:T7}, but here uses clockwise turning (negative states) and has an extra long tail that needs to be folded out of the way before completing the erroneous tight spiral. The configuration in on the right is not (yet) permanently blocked, but the 7 monomers in the inner tight spiral are permanently blocked and thus all trajectories forward lead to permanently blocked configurations.  
(c)~Initial configuration with strictly positive and strictly negative states. By Case 1(c) in the proof, there is a contiguous segment of monomers in state 0, here shown in purple. Monomers to the left (negative states) fold clockwise, and monomers to the right (positive states) fold anticlockwise, yielding a pincer-like permanently blocked configuration.} 
    \label{fig:Tio}
\end{figure}

\begin{proof}
Suppose for the sake of contradiction that there is a Turning Machine $M$ that folds $S_k$, $k\geq 2$.
By Lemma~\ref{lem:turning numbers for a spiral}, there are two cases, either,
(Case 1)  
$M$ has initial states $\Tio(s_0(m_0))$ and $s_0(m_0) \equiv 0 \mod 6$, or 
 (Case 2) $\Toi(s_0(m_0))$ and $s_0(m_0) \equiv 3 \mod 6$.

\vspace{\baselineskip} \noindent {\bf Case 1}: $M$ has initial states $\Tio(s_0(m_0))$ for $s_0(m_0) \equiv 0 \mod 6$ (intuitively, folding the spiral from inside to outside).

We have $\pos(m_0)=(0,0)$ (at the ``centre'' of the spiral), 
otherwise if $M$ finishes with all monomers in state 0 it would place (many) monomers outside $S_k$ and we get the statement.
There are three subcases: 
(a) all initial states are positive, (b) all initial states are negative, or (c) there are both strictly positive and strictly negative initial states:

\begin{itemize}
\item Case 1(a): all initial states are positive ($s_0(m_i) \geq 0$ for all $0\leq i < n$).\\
Here, the idea is find a reachable configuration $c_b$ of $M$ that is permanently blocked, as illustrated in Figure~\ref{fig:Tio}(a).  

First, the $M$ carries out repeated line rotations by $\pi / 3$, until reaching a configuration where $m_0$ is in state 0.  
Then the line of monomers $m_1,m_2,m_3,\ldots$ rotates by $\pi / 3$, which puts $m_1,m_2$ in state 0.
We next have the line of monomers $m_3,m_4,m_5,\ldots$ rotate by $2 \pi / 3$, which puts $m_1,m_2$ in state 0, and parallel to the x-axis, pointing the $-x$ direction  (Figure~\ref{fig:Tio}(a), left).

Next we apply a turning rule application to $m_{n-2},m_{n-1}$, and then another to $m_{n-1}$ (Figure~\ref{fig:Tio}(a), middle).
Now we repeat the previous three steps, moving backwards along monomer indices, until we reach the configuration shown in Figure~\ref{fig:Tio}(a), right:  
i.e., for each $j \leq n-1$ (starting at $j=n-1$) apply one turning rule to $m_{j-2}$, one to $m_{j-1}$ and then one more turning rule to $m_{j-2}$, then decrement $j$ and repeat. 
Eventually we reach a value of $j$ for which no move is possible, yielding the permanently blocked configuration shown in Figure~\ref{fig:Tio}(a), right.

\item Case 1(b): all initial states are negative  ($s_0(m_i) \leq 0$ for all $0\leq i < n$).\\  
Here, the intuition is to block $M$ by using the fact that $2\pi$ line rotation is impossible ($k\geq 2$), as shown in Figure~\ref{fig:Tio}(b).

First, by Definition~\ref{def:turning numbers for a spiral},  $t_0<t_1<\cdots<t_{4k}$ on  $\Tio(t_0)$. 
Moreover, for all monomer states to be negative, and by applying Equation~\eqref{eq:Tio}, it is case that $t_0\leq -6k$. 
By the definition of \Tio($t_0$) and since $k=2$, there are $\geq 15$ monomers with initial state $\leq -6$. 
Next we apply the main idea used to prove Theorem~\ref{thm:no 2pi rotation}, and illustrated in Figure~\ref{fig:T7} (there we folded in the anticlockwise direction, here it is clockwise). 
Specifically, we  claim that the configuration shown in Figure~\ref{fig:Tio}(b), right, is reachable, 
since the 15 monomers $m_0, m_1,\ldots m_14$ have initial state $\leq -6$, hence they are able to reach the configuration shown,
and since the other monomers (orange, purple) can be moved as shown to enable the tight inner spiral to form.  
The configuration in Figure~\ref{fig:Tio}(b), right is  not (yet) permanently blocked, but the 7 monomers in the inner tight spiral are permanently blocked and thus all trajectories forward lead to permanently blocked configurations. 

\item Case 1(c):  there are both strictly positive and strictly negative initial states.\\
We know from the definition of $\Tio(t_0)$, and the subcase we are in, that $t_0 < 0 <  t_{4_k} $. 
Moreover, we claim that $\Tio(t_0)$ has a contiguous subsequence $0,0,\ldots,0$; 
in other words there is an $i \in \{1,2,\ldots,4k-1\}$ such that the initial state sequence, $\Tio(s_0) = \Tio(t_0) = t_0,\ldots t_{4_k}$,
contains the state subsequence $ [t_i]^{i+1} = [0]^{i+1}$. 
To see the claim, first note that by Definition~\ref{def:turning numbers for a spiral},  
$t_0 \equiv 0 \mod 6$, and by Equation~\eqref{eq:Tio} every 4th term   
$t_{4j}$ (here, $0< j < k$) has the property that  $t_{4j} \equiv 0 \mod 6$. 
Second, since $t_0 < 0$ and $0 < t_{4k}$, there is some $j$ such that $t_{4j} = 0$, giving the claim.

Let $l_i$ denote the segment of $i+1$ monomers with initial states $0,0,\ldots,0$ (Figure~\ref{fig:Tio}(c) purple segment).
Then, the preceding segment $l_{i-1}$ has length $|l_i|-1 = i$ and monomers all in initial state $-2$, 
and all monomers preceding that have initial state $<-2$  (by Equation~\eqref{eq:Tio}). 
Hence the second configuration in Figure~\ref{fig:Tio}(c) is reachable (decrement each monomer in $l_{i-1}$ twice). 
 The succeeding segment $l_{i+1}$ has length $|l_i|+1 + i+1$ and monomers all in state $1$, and  
and all monomers succeeding that have initial state $\geq 3$ (by Equation~\eqref{eq:Tio}). 
Hence the third configuration in Figure~\ref{fig:Tio}(c) is reachable (increment each monomer in $l_{i+1}$ once, and each monomer
in $l_j$, for $j>i+1$, three times). 
Figure~\ref{fig:Tio}(c) is permanently blocked.  
\end{itemize}

\begin{figure}[t]
  \centering
\includegraphics[width=\textwidth]{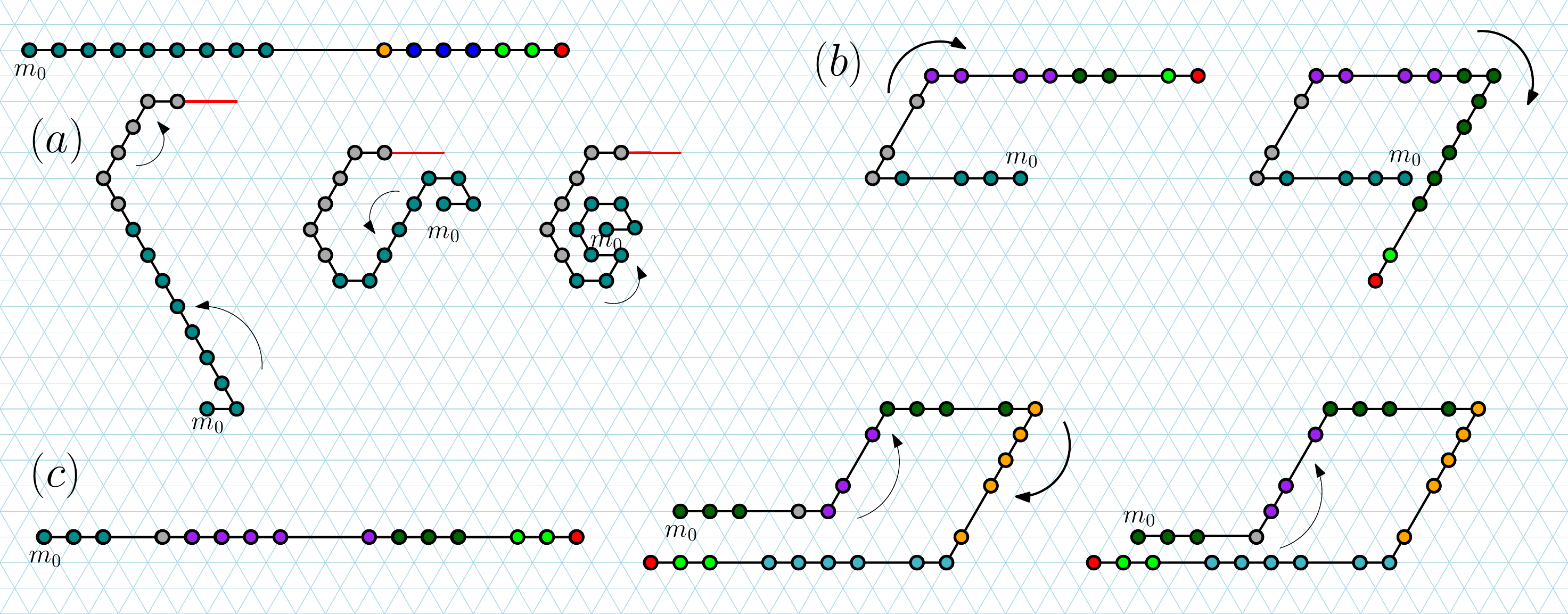}
\caption{Illustration for Case 2 of the proof of Theorem~\ref{thm:spiral} showing that a 2-turn, 1-gap, Spiral can not be folded from ``outside to in". Top left: initial configuration, monomers are co-linear, colours distinguish initial states. (a) Initial configuration with all states positive. Trajectory shown folds into a permanently blocked configuration similar to that in Figures~\ref{fig:Tio}(b) and~\ref{fig:T7}. The configuration to the right is not yet permanently blocked, but the 7 monomers in the inner tight spiral (starting at $m_0$) are permanently blocked and thus all trajectories from this configuration will yield permanently blocked configurations. 
(b) Initial configuration with all states negative. First we fold the outer most part of the spiral by rotating almost a full line of monomers clockwise by $\pi$ creating a large ``C shape'' (dark green, gray, purple monomers). Then the remaining arm of the spiral folds such that it is anti-parallel to the $y$-axis (right). This arm of the spiral wishes to rotate clockwise to reach the center, but to do so would cause these monomers to intersect with the ``C shape'' outer coil of the spiral, thus leading to a permanently blocked configuration. 
(c) Initial configuration with both strictly positive and strictly negative initial states. This construction is similar to that used in Figure~\ref{fig:Tio}(c) such that there is a contiguous segment of monomers with states $0,0,\ldots,0$, here shown in dark green. Monomers to its left (positive states) fold anti-clockwise, and monomers to the right (negative states) fold clockwise, yielding a ``pincer-like'' permanently blocked configuration.} 
    \label{fig:Toi}
\end{figure}

\bigskip
\noindent{\bf Case 2}: $M$ has initial states $\Toi(s_0(m_0))$ for $s_0(m_0) \equiv 3~\mathrm{mod}~6$ (intuitively, folding the spiral from outside to inside).

We have $\pos(m_0)= (2k+1, -2k)$ (at the ``outside'' of the spiral, bottom right corner), 
otherwise if $M$ finishes with all monomers in state 0 it would place (many) monomers outside $S_k$ and we get the statement.
There are three subcases: 
(a) all initial states are positive, (b) all initial states are negative, or (c) there are both strictly positive and strictly negative initial states: 

\begin{itemize}
\item Case 2(a): all initial states are positive ($s_0(m_i) \geq 0$ for all $0\leq i < n$).\\ 
this case is illustrated in Figure~\ref{fig:Toi}(a). 
The proof here is very similar to that of Case 1(b), in particular Figure~\ref{fig:Toi}(a) is a reflection about the x-axis of Figure~\ref{fig:Tio}(b),
 and the only other difference here is that to reach a permanently  blocked configuration  we start turning from the monomers that are intended to become the {\em shorter} (rather than longer) arms of the spiral.  

\item  Case 2(b): all initial states are negative  ($s_0(m_i) \leq 0$ for all $0\leq i < n$).\\ 
First, by Definition~\ref{def:turning numbers for a spiral},  $t_0 > t_1 > \cdots > t_{4k}$ on  $\Toi(t_0)$. 
Moreover, for all monomer states to be negative, and by applying Equation~\eqref{eq:Toi}, it is case that $t_0\leq -3$. 
We begin by applying $|t_0|$ turning rule applications to {\em all} monomers, until the first $k$ monomers (i.e, $m_0,m_1,\dots,m_{4k}$) have each reached state 0. 
Next we apply 2 turning rules to all non-zero monomers such that monomers with initial state $t_1$ turn to state zero (grey in Figure~\ref{fig:Toi}(b), left). 
Next we apply 1 turning rules to the remaining non zero monomers such that the monomers which were initially in state $t_2$ are now in state zero (purple in Figure~\ref{fig:Toi}(b), left). 
Next we apply 2 turning rule applications to monomers $m_i$ with $12k+2<i<n$ in order to reach the `q-shaped' configuration shown in Figure~\ref{fig:Toi}(b), right. 

We claim that all trajectories forward from this configuration reach a permanently blocked configuration:  
The first three ``spiral arms'' (monomers with initial states $t_0,t_1,t_2$) have reached state 0 and thus will not rotate any further. 
In order for the remaining non-0 state monomers to attempt reach the target configuration of $S_k$, they can only make clockwise turns which results in them wrapping around  the blocked inner region. Eventually all are permanently blocked 

\item Case 2(c):  there are both strictly positive and strictly negative initial states. This case is similar to that of Case 1(c). By the definition of \Toi($t_0$), and the subcase we are in, $t_0>0>t_{4k}$. Similarly to Case 1(c) we claim that the initial state sequence \Toi($s_0$) = \Toi($t_0$) contains the state sequence $[t_i]^{4k+1-i}=0$ for some $i \in \{1,2, \ldots ,4k-1\}$. 
To see this we first note that by Definition~\ref{def:turning numbers for a spiral}, $t_0\equiv 3 \mod 6$, and by Equation~\eqref{eq:Toi} every 4th term after the $2^\textrm{th}$ term is congruent to $0 \mod 6$ (that is to say $t_i\equiv 0 \mod 6$ such that $i \equiv 2  \mod 4$).
Thus $t_{4j+2}\equiv 0 \mod 6$ for some j with $0\leq j < k$. We let $l_i$ denote this segment of $4k+1-i$ monomers each with initial state 0. (Figure~\ref{fig:Toi}(c) dark green segment). Then the preceding segment $l_{i-1}$ has length $4k-i+2$ and has initial states $1$ and the succeeding segment $l_{i+1}$ of length $4k+2$ with initial state $-2$ by  Equation~\eqref{eq:Toi}. Hence the second configuration in Figure~\ref{fig:Toi} is reachable. The segment $l_{i+2}$ of length $4k-i-1$ has initial state $-3$ by Equation~\eqref{eq:Toi}. Hence the third configuration in Figure~\ref{fig:Toi}(c) is reachable. Thus Figure~\ref{fig:Toi}(c) is permanently blocked.
\end{itemize}
\end{proof}

\section*{Acknowledgments}
We thank Vera Sacristán and Suneeta Ramaswami for insightful ideas and important input. This work began at the 29th Bellairs Winter Workshop on Computational Geometry (March 21--28, 2014 in Holetown, Barbados), we thank Erik Demaine for organising a wonderful workshop and providing valuable feedback, and the rest of the participants for providing a stimulating environment. We also thank Dave Doty and Nicolas Schabanel for helpful comments.
This paper expands upon the conference version published in {\em Proceedings of the 26th International Conference on DNA Computing and Molecular Programming}. Oxford 2020, Leibniz International Proceedings in Informatics (LIPIcs), with new material appearing in Sections~\ref{sec:y-monotone}--\ref{sec:impossible-shapes}. 
We thank the anonymous reviewers of this work for insightful comments led to significant improvements of the text.

%\bibliographystyle{plain}
%\bibliography{bib}

\appendix
% !TEX root = turning_machines.tex

\newpage
\section{Line rotation by \texorpdfstring{$\pi/3$}{π/3}, \texorpdfstring{$\pi$}{π} and \texorpdfstring{$4\pi/3$}{4π/3}}\label{app:sec:rot}

In this appendix we present proofs that line-rotating Turning Machine for respective angles of $\pi/3$, $\pi$ and $4\pi/3$ terminates in expected time $O(\log n)$. 
These claims are superseded by the results in the main paper, but we include the proofs as they give a number of techniques to analyse the Turning Machine model. 

\subsection{Line rotation by \texorpdfstring{$\pi/3$}{π/3}: \texorpdfstring{$\tml{n}{1}$}{L1n}}
The following proof of line rotation by $\pi/3$ radians is intended to be a simple example worked out in detail.
Let $L^1_n$ be the Turning Machine defined in Definition~\ref{def:line} with $\sigma=1$, as illustrated in Figure~\ref{fig:60} (left).

\lempioverthree*
\begin{proof}
The initial configuration (Figure~\ref{fig:60}, left) of $\tml{n}{1}$ is a line of $n-1$ monomers in state $1$ with an additional final monomer in state 0, i.e.\ at time 0 the $n$ states are $s(m_0) s(m_1) \cdots s(m_{n-1}) = 1^{n-1}0$.
Since monomer states only change by decrementing from 1 to 0, any configuration on any trajectory of $\tml{n}{1}$ has its (composite) state of the form $\{0,1\}^{n-1} 0$. 
Consider a configuration $c$ in a trajectory of evolution of $\tml{n}{1}$, and the corresponding state\footnote{In fact any $x \in \{0,1\}^{n-1} 0$ is the state of a reachable configuration, but we don't need to prove that.} $x \in \{0,1\}^{n-1} 0$.
Let $m_i^c$ denote the $i$th monomer of $\tml{n}{1}$ in configuration $c$.
For any $i \in \{0,1,\ldots,n-2\}$ such that $s(m_i^c)=1$, consider the unique configuration $c'$ where $c\rightarrow c'$ and $s(m_i^{c'})=0$ (and, by definition of next configuration step, $j\neq i$ implies $s(m_j^{c'})=s(m_j^{c})$).

We claim that
$\tail(m_i^c)$ does not share any positions with $\headp(m_i^c)$, in other words, that~$c'$ is a non-self-intersecting configuration. 
To show this, consider a horizontal line $\ell_i$ through monomer $m_i^c$ and observe that in $c'$ (and in $c$), the monomers $\tail(m_i^c) = m_0^c, m_1^c, \ldots, m_{i}^c$ lie on or below $\ell_i$ (because the path  $\pos(m_0^c), \pos(m_1^c),\ldots,\pos(m_i^c)$ is connected and consists of unit length segments each at an angle of either $0^\circ$ or $60^\circ$ clockwise relative to the $x$-axis), but the monomers $\headp(m_i^c) = m^{c'}_{i+1}, m^{c'}_{i+2}, \ldots, m^{c'}_{n-1}$ lie strictly above $\ell_i$ (because $\pos(m^{c'}_{i+1})$ is strictly higher than $\pos(m^c_i)$, and because the path  $\pos(m'_{i+1}),\pos(m'_{i+2}), \ldots,\pos(m^{c'}_{n-1})$ is connected and consists of unit length segments each at an angle of $0^\circ$ or $60^\circ$ to the $x$-axis).
Hence there are no blocked configurations reachable by $\tml{n}{1}$ (neither permanent nor temporary blocking).

At each reachable configuration $c$, starting from the initial configuration, we can choose $i$ independently from the set of non-zero states. 
The expected time for the first rule application  is $1/(n-1)$ since it is the expected time of the minimum of $n-1$ 
independent exponential random variables each with rate 1.  
The next is $1/(n-2)$, and so on.  
By linearity of expectation, the expected  value of the total time $T$ is 
\[
E[T] 
      = \sum_{k=1}^{n-1} \frac{1}{k} 
        = O(\log n)\,
\]
where the sum is the $(n-1)^{\mathrm{th}}$ partial sum of the harmonic series, known to have a $O(\log n)$ bound. 
Hence $\tml{n}{1}$ completes in expected $O(\log n)$ time.
\end{proof}

\subsection{Line rotation by \texorpdfstring{$\pi$}{π}: \texorpdfstring{$\tml{n}{3}$}{L3n}}
Next, we analyse line rotation of $\pi$ radians. 
\begin{lemma}\label{lem:pi:blocking} 
Let $\tml{n}{3}$ be a line-rotating Turning machine, then:
\begin{enumerate}[(i)]
\item\label{concl:only state 1} only monomers in state 1 adjacent to a monomer in state 3 are blocked, and that blocking is temporary,
\item\label{concl: no more than 2n/3 blocked} any reachable configuration of $\tml{n}{3}$ has no more than $2n/3$ blocked monomers, and 
\item\label{concl:pi:exactly 2n/3 blocked} there exists a configuration of $\tml{n}{3}$ that has exactly $2n/3$ blocked monomers.
\end{enumerate}
\end{lemma}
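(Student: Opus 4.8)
The plan is to establish the three conclusions in turn. Throughout I use the fact that in $\tml{n}{3}$ every monomer points in one of the directions $\vx,\vy,\vw,-\vx$ (turn angles $0,\pi/3,2\pi/3,\pi$ from east), so the $y$-coordinate is non-decreasing along the chain, which is essentially the content of Lemma~\ref{lem:180:li}. For conclusion~(\ref{concl:only state 1}): since $\Delta s(m_i)=3-s(m_i)$ in $\tml{n}{3}$, Lemma~\ref{lem:unblocked-states} (with $s=3$) says monomers in states~$2$ or~$3$ are never blocked, and monomers in state~$0$ have no applicable rule, so only state-$1$ monomers can be blocked. A state-$1$ monomer $m_i$ has turned twice, hence points in direction $\vw$, so when it moves $\head(m_i)$ translates by $-\vy$. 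Because $m_i$ points $\vw$, $\pos(m_{i+1})$ lies one row above $\ell_i$, so by $y$-monotonicity all of $\head(m_i)$ lies strictly above $\ell_i$; after the $-\vy$ translation $\headp(m_i)$ lies on or above $\ell_i$, while $\tail(m_i)$ lies on or below $\ell_i$. Hence any collision between $\headp(m_i)$ and $\tail(m_i)$ must occur exactly on $\ell_i$, and I would finish with a short finite case analysis: the tail monomers lying on $\ell_i$ and ending at $m_i$ form a monotone horizontal run, and the first head monomers after translation either land just west of $m_i$ (no collision, as that run cannot extend both ways), or land on $\pos(m_{i-1})$ — forcing $m_{i-1}$ to point $\vx$, i.e.\ to be in state~$3$ — or land on $\pos(m_i)$ via $m_{i+2}$ — forcing $m_{i+1}$ to point $\vx$, i.e.\ to be in state~$3$. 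So every blocked monomer is in state~$1$ and adjacent to a state-$3$ monomer. Such blocking is temporary: the state-$3$ neighbour is never blocked (Lemma~\ref{lem:unblocked-states}), and by Theorem~\ref{thm:line-rotation-computes} $\tml{n}{3}$ never permanently blocks and reaches its all-zero target, so the state-$1$ monomer $m_i$ (not being in state~$0$) must move on any completing trajectory, giving a trajectory that reaches a configuration in which $m_i$ is unblocked.

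For conclusion~(\ref{concl: no more than 2n/3 blocked}): by~(\ref{concl:only state 1}) every blocked monomer is in state~$1$, and no three consecutive monomers can all be blocked, for if $m_i,m_{i+1},m_{i+2}$ were all blocked they would all be in state~$1$, so $m_{i+1}$ would have both neighbours in state~$1$, hence no state-$3$ neighbour, contradicting~(\ref{concl:only state 1}). Moreover $m_{n-1}$ stays in its initial state~$0$ forever and so is never blocked, so the set of blocked monomers is contained in $\{0,\dots,n-2\}$ and contains no three consecutive indices; a subset of a path on $n-1$ vertices with no three consecutive elements has size at most $\lceil 2(n-1)/3\rceil\le 2n/3$, giving the bound.

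For conclusion~(\ref{concl:pi:exactly 2n/3 blocked}): the plan is to exhibit an explicit reachable configuration attaining the bound (for $n$ divisible by~$3$), whose state sequence is a boundary-adjusted copy of the period-$3$ pattern $1,1,3,1,1,3,\dots$ with $m_{n-1}$ forced to~$0$. This configuration is reachable: starting from the all-$3$ initial configuration, apply in any order the moves that decrement each intended-state-$1$ monomer twice; each such move acts on a monomer currently in state~$2$ or~$3$, which by Lemma~\ref{lem:unblocked-states} is never blocked, so the whole move sequence is valid and ends in the desired configuration, whose entire geometry is determined by its state sequence together with $\pos(m_0)=(0,0)$. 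It then remains to compute the monomer positions and apply the $-\vy$ head-translation test from~(\ref{concl:only state 1}) to verify that the state-$1$ monomers possessing a state-$3$ neighbour — exactly $2n/3$ of them — are indeed all blocked.

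I expect the main obstacle to be the local geometric case analysis underlying conclusion~(\ref{concl:only state 1}): one must carefully enumerate the possible shapes of the horizontal tail-run ending at $m_i$ together with the first few monomers of $\head(m_i)$ near $\ell_i$, and rule out ``long-range'' collisions (a head monomer $m_j$ with $j>i+2$ meeting a tail monomer $m_k$ with $k<i-1$); here the $y$-monotonicity of $\tml{n}{3}$ chains, which confines every collision to the single line $\ell_i$, is exactly what keeps the analysis finite and tractable. A secondary nuisance is arranging the boundary of the extremal configuration in~(\ref{concl:pi:exactly 2n/3 blocked}) so that the blocked count comes out exactly $2n/3$ rather than off by an additive constant.
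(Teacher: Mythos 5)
Your treatment of conclusions~(i) and~(ii) is sound and close in spirit to the paper's: both arguments reduce to Lemma~\ref{lem:unblocked-states} (only state-$1$ monomers can be blocked) and then confine the potential collision to the line $\ell_i$. Your geometric step is organised a bit differently and, once written out, works: a state-$1$ monomer points $\vw$, its move translates $\head(m_i)$ by $-\vy$, so any overlap of $\headp(m_i)$ with $\tail(m_i)$ lies on $\ell_i$; the tail points on $\ell_i$ form a monotone horizontal run ending at $m_i$ (east exactly when $m_{i-1}$ is in state $0$, west exactly when it is in state $3$), the head points at level $y_i+1$ form a monotone run starting at $\pos(m_i)+\vw$, and checking the four combinations shows a collision occurs precisely when $m_{i-1}$ or $m_{i+1}$ is in state $3$ (in particular the ``long-range'' collisions you worry about cannot occur when neither neighbour is in state $3$, since then the translated head run sits at $x\le x_i-1$ and the tail run at $x\ge x_i$). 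The paper instead does a case split on $s(m_{i\pm1})$ using Lemma~\ref{lem:180:li}; the two routes are of comparable length. For temporariness you invoke Theorem~\ref{thm:line-rotation-computes}, which is legitimate (its proof does not use this lemma, so there is no circularity), though the paper's intent in the appendix is a self-contained argument: your own case analysis already gives the more direct justification, namely that after the unblocked state-$3$ neighbour(s) move once, neither neighbour is in state $3$ and $m_i$ is unblocked. Your count in~(ii), excluding $m_{n-1}$ and bounding by $\lceil 2(n-1)/3\rceil\le 2n/3$, is if anything more careful than the paper's.

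For conclusion~(iii) there is a genuine problem, and it cannot be ``arranged away'' at the boundary as you hope. By~(i) every blocked monomer is in state $1$ and adjacent to a state-$3$ monomer, $m_{n-1}$ is permanently in state $0$, and each state-$3$ monomer has at most two neighbours; so if $B$ monomers are blocked and $t$ are in state $3$, then $B\le 2t$ and $B+t\le n-1$, whence $B\le \frac{2(n-1)}{3}<\frac{2n}{3}$. Thus \emph{no} configuration has exactly $2n/3$ blocked monomers: the statement is off by one (this slip is in the paper too --- its witness $(131)^{k-1}130$ for $n=3k$ has only $2k-1=2n/3-1$ monomers in state $1$, all blocked, not $2n/3$). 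Your proposed pattern is also suboptimal: $1,1,3,1,1,3,\dots$ with the last state forced to $0$ leaves the first monomer of each $1,1$ pair at the two ends without a state-$3$ neighbour, giving only $2n/3-2$ blocked monomers, whereas the period-$3$ arrangement $1,3,1$ (every $1$ flanking a $3$) attains the true maximum $2n/3-1$. So the honest fix is to prove the corrected claim ``there is a (reachable) configuration with exactly $2n/3-1$ blocked monomers'' using the state sequence $(131)^{k-1}130$; your reachability argument (decrement each intended state-$1$ monomer twice, each such move acting on a state-$2$ or state-$3$ monomer, which is never blocked by Lemma~\ref{lem:unblocked-states}) carries over verbatim and is in fact more than the paper provides.
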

\begin{proof} 
Consider any reachable configuration $c$ of $\tml{n}{3}$, and let monomer $m_i$ be blocked in $c$.
By Lemma~\ref{lem:unblocked-states}, monomers in state 2 and 3 are never blocked.
By definition, monomers in state 0 are not blocked.
Thus if $m_i$ is blocked it is in state 1, i.e. $s(m_i)=1$. 
We claim that in this case either $s(m_{i-1})=3$ or $s(m_{i+1})=3$ (or both).
Consider the following two cases for $s(m_{i+1})$:
\begin{enumerate}[1.]
\item If $s(m_{i+1}) \in \{ 1,2\}$, 
then by Lemma~\ref{lem:180:li} all monomers of $\headp(m_{i})$, 
except its first monomer $m'_{i+1}$, 
lie strictly above $\ell_i$, and since $\tail(m_i)$ lies on or below $\ell_i$, 
we get that $\tail(m_i)$ does not intersect $\headp(m_{i})$, except possibly at $\pos(m'_{i+1})$. Whether $\pos(m'_{i+1})$ intersects $\tail(m_i)$ depends on the state of $m_{i-1}$:
\begin{enumerate}[(a)]
\item\label{180:a} If $s(m_{i-1}) \in\{1,2 \}$, then all monomers of $\tail(m_i)$ lie strictly below $\ell_i$ (except its first monomer $m_i$ which is not at position $\pos(m'_{i+1})$), hence $\pos(m'_{i+1})$ cannot intersect $\tail(m_i)$. 
%Thus we contradict the assumption that $m_i$ is blocked hence this subcase does not occur.
Then $m_i$ cannot be blocked. 
\item\label{180:b} If $s(m_{i-1})=0$, then 
$m'_{i+1}$ does not intersect $\tail(m_i)$:  
Indeed, $\pos(m_{i-1}) = \pos(m_i) +\vec{x} = \pos(m'_{i+1}) + 2 \vec x \neq  \pos(m'_{i+1})$.
Furthermore, let $m_{j}$, $m_{j+1}$, ..., $m_{i-1}$ be the longest consecutive subsequence of  monomers in state 0 preceding monomer $m_i$.
Then $\pos(m_{j})$, $\pos(m_{j+1})$, ..., $\pos(m_{i+1})$ are all strictly to the west of $\pos(m_{i})$. If $j-1\ge 0$, the non-zero-state\footnote{Which must be in state 1 or 2, since 3 would give a self-intersection along the configuration.} monomer $m_{j-1}$ enforces that the monomers $m_{0}$, $m_{1}$, ..., $m_{j-1}$ lie strictly below $\ell_i$.
Thus $m_i$ is not blocked.
\end{enumerate}
Therefore, monomer $m_{i-1}$ can only be in state $3$.

\item If $s(m_{i+1}) =0$:
Both $\headp(m_i)$ and $\tail(m_i)$ have monomers on $\ell_i$, but we claim the positions of $\headp(m_i)$ do not  intersect those of $\tail(m_i)$. 
If $s(m_{i-1})\in\{1,2\}$, then all monomers of $\tail(m_i)$ except $m_i$ lie strictly below $\ell_i$, and thus $\headp(m_i)$ does not intersect $\tail(m_i)$ (and recall that $\headp(m_i)$ does not intersect $\pos(m_i)$ because configurations are simple). 
If $s(m_{i-1}) = 0$ then the monomers $M =\{m_{i-1}, m_i, m'_{i+1}\}$ lie along $\ell_i$ (pointing west). Note that a prefix of $M$ is a suffix of $\tail(m_i)$ and a (disjoint) suffix of $M$ is a prefix of $\headp(m_i)$. Hence, in order for $\tail(m_i)$ to intersect $\headp(m_i)$, one or both must depart from $\ell_i$, but, by Lemma~\ref{lem:180:li}, $\tail(m_i)$ can only do so by having monomers strictly below $\ell_i$, and $\headp(m_i)$ can only do so by having monomers strictly above $\ell_i$. Thus, monomer $m_{i-1}$ can only be in state 3. 
\end{enumerate}
Therefore, if $m_i$ is blocked, then it is in state 1 and either $m_{i-1}$ or $m_{i+1}$ is in state 3, and thus is temporarily blocked which yields Conclusion~\ref{concl:only state 1} of the lemma.
Hence, there cannot be three monomers in a row which are blocked, resulting in Conclusion~\ref{concl: no more than 2n/3 blocked} of the lemma.

For Conclusion~\ref{concl:pi:exactly 2n/3 blocked}, consider a line-rotating Turning Machine $\tml{n}{3}$ with $n=3k$ for some $k$.
The configuration $c$ with state sequence $S = (131)^{k-1} 130$ has exactly $2n/3$ blocked monomers, as every monomer in state $1$ is either blocked by a preceding monomer in state $3$, or by a following monomer in state $3$.
\end{proof}

\begin{theorem}[rotate a line by $\pi$]\label{thm:pi rotation}
For each $n\in\Nset$, the line-rotating Turning Machine $\tml{n}{3}$ computes its target configuration, and does so in expected time $O(\log n)$.
\end{theorem}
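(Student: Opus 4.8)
The statement is the $s=3$ instance of Theorems~\ref{thm:line-rotation-computes} and~\ref{thm:s5fast}, but the plan is to give a direct, self-contained argument that exploits the sharp blocking description of Lemma~\ref{lem:pi:blocking}, which is the point of including it in the appendix. Recall that in $\tml{n}{3}$ every monomer's state runs $3\to 2\to 1\to 0$ and that monomers in states $2$ and $3$ (i.e.\ with $\Delta s\le 1$) are never blocked by Lemma~\ref{lem:unblocked-states}. For correctness I would first check that $\tml{n}{3}$ has no reachable permanently blocked configuration. In a non-final reachable configuration there is a non-zero monomer: if some non-zero monomer is in state $2$ or $3$ it is unblocked by Lemma~\ref{lem:unblocked-states}, and otherwise every non-zero monomer is in state $1$, so in particular no monomer is in state $3$ and then Conclusion~\ref{concl:only state 1} of Lemma~\ref{lem:pi:blocking} says no state-$1$ monomer is blocked. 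Either way some rule is applicable. Since each rule application strictly decreases the non-negative integer sum of states and the configuration space is finite, every trajectory terminates at the all-zero configuration; there each $m_i$ with $i\le n-2$ has had exactly $s_0(m_i)=3$ rules applied and hence points west, so the chain lies on the line rotated by $\pi$, which is the target configuration of $\tml{n}{3}$ (Remark~\ref{rem:targetconfig}).

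For the running time, let $T_3$ be the first time at which no monomer is in state $3$. For each $i\le n-2$, monomer $m_i$ is in state $3$ at time $0$, and while it remains in state $3$ its decrement rule is continuously applicable (Lemma~\ref{lem:unblocked-states}) and hence fires at rate $1$, independently of everything else in the system; so the first time $m_i$ leaves state $3$ is an $\mathrm{Exp}(1)$ random variable, and these are independent over $i$. Thus $T_3$ is the maximum of $n-1$ i.i.d.\ $\mathrm{Exp}(1)$ variables, so $\E[T_3]=\sum_{k=1}^{n-1}1/k=O(\log n)$. After time $T_3$ no monomer is ever again in state $3$ (states only decrease), so by Conclusion~\ref{concl:only state 1} of Lemma~\ref{lem:pi:blocking} no monomer is blocked after $T_3$; hence from $T_3$ onward each still-non-zero monomer decrements at rate $1$ independently of the others, and it needs at most two more decrements, so it reaches state $0$ by time $T_3+E_i+E_i'$ for independent $\mathrm{Exp}(1)$ variables $E_i,E_i'$. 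The whole system finishes at the maximum of these times, which is at most $T_3+\max_i E_i+\max_i E_i'$, so by the strong Markov property at $T_3$ and linearity of expectation the expected completion time is at most $\E[T_3]+\E[\max_i E_i]+\E[\max_i E_i']=O(\log n)$.

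The main obstacle is not really an obstacle once Lemma~\ref{lem:pi:blocking} is in hand: its Conclusion~\ref{concl:only state 1} is exactly what makes the dynamics decouple after the last state-$3$ monomer disappears, turning the tail of the evolution into independent rate-$1$ countdowns rather than something that could conspire to slow down by more than a constant factor. The points that need care are the continuous-time Markov chain bookkeeping: arguing that a state-$3$ monomer's decrement clock genuinely runs at rate $1$ from time $0$ regardless of all other monomers (so that $T_3$ is a clean maximum of independent exponentials), and that invoking the strong Markov property at the stopping time $T_3$ is legitimate so that the post-$T_3$ analysis is independent of how the first phase unfolded.
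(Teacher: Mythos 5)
Your proof is correct, and while the correctness half mirrors the paper's (both rest on Lemma~\ref{lem:unblocked-states} and Conclusion~\ref{concl:only state 1} of Lemma~\ref{lem:pi:blocking} to rule out permanent blocking), your time analysis takes a genuinely different route. The paper's proof of Theorem~\ref{thm:pi rotation} leans on the quantitative parts of Lemma~\ref{lem:pi:blocking} --- Conclusion~\ref{concl: no more than 2n/3 blocked}, that at most a $2/3$ fraction of monomers is ever blocked --- and then couples $\tml{n}{3}$ with an artificially slowed system (block $2n'/3$ of the $n'$ non-zero monomers, and pace state-$0$ arrivals as one per three steps), arriving at a $9H_n$ bound. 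You instead use only Conclusion~\ref{concl:only state 1}, in a two-phase argument: until the stopping time $T_3$ at which the last state-$3$ monomer has moved, each state-$3$ monomer's clock runs unblocked at rate $1$ (exactly the independence claim the paper itself makes in the proof of Theorem~\ref{thm:s5fast}), giving $\E[T_3]=H_{n-1}$; after $T_3$ no state-$3$ monomer ever reappears, so Conclusion~\ref{concl:only state 1} guarantees no blocking at all, and the remainder is a maximum of independent sums of at most two $\mathrm{Exp}(1)$ variables. Your approach buys a cleaner and arguably more rigorous bound --- it avoids the paper's somewhat informal ``hypothetical slowed system'' coupling and the blocked-fraction counting entirely --- and it is structurally close to the phase induction of Theorem~\ref{thm:s5fast}, but simpler here because after phase one there is literally no blocking, so nothing like Lemma~\ref{lem:pred-TM} is needed. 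What the paper's route buys is the constant-fraction-unblocked technique itself (with Conclusion~\ref{concl:pi:exactly 2n/3 blocked} showing its tightness), which is the tool that generalises to settings where blocking persists throughout the evolution, as in the $4\pi/3$ analysis of Theorem~\ref{thm:line-rotation-time}; your phase argument exploits the special feature of $s=3$ that all blocking is caused by state-$3$ neighbours and hence vanishes after the first phase. The two points you flag as needing care (the rate-$1$ clock of a continuously applicable rule, and the strong Markov property at $T_3$) are handled at the same level of rigour as the paper's own CTMC arguments, so no gap there.
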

\begin{proof}
By Lemma~\ref{lem:pi:blocking}, no configuration has a permanently blocked monomer,  hence every trajectory of  $\tml{n}{3}$ ends in the target configuration. 
 
At the initial step, the rate of rule applications is $n-1$ (there are $n-1$  monomers in state~3). Over time, for successive configurations along a trajectory, 
the rate of rule applications may decrease for two reasons: 
(a) some monomers may be temporary blocked, and 
(b) after a monomer transitions to state 0 no more rules are applicable to it. 
We  reason about both: 

(a) Lemma~\ref{lem:pi:blocking}(ii) shows that a configuration with state sequence $s = (131)^{n/3-1}130$ has $2n/3$ blocked monomers, and Lemma~\ref{lem:pi:blocking}\ref{concl: no more than 2n/3 blocked} states that no configuration has more than $2n/3$ blocked monomers for $n$ divisible by 3. 
Using that fact, and in order to simplify the proof, we shall analyse a new, possibly slower, system where for any configuration $c$ that has $n' \leq n$ monomers in state $\neq 0$, we ``artificially'' block $2 n' / 3$ monomers.\footnote{The monomers are not necessarily geometrically blocked, we are merely stopping any rule from being applied to them. No configuration in a trajectory of $\tml{n}{3}$ witnesses a larger slowdown due to blocking than the slowdown we have imposed on the configurations of $T'_n$.}  
Since this assumption merely serves to slow the system, it is sufficient to give an upper bound on the expected time to finish.

(b)  A second ``slowdown'' assumption will be applied during the analysis and is justified as follows. 
Intuitively,  the number of monomers  transitioning to state 0 increases with time, and since monomers in state 0 have no applicable rules, this causes a {\em decrease} in the rate of rule applications. 
Consider a hypothetical continuous-time Markov system $M$, with $3n$ steps with rate decreasing by 1 every third step, that is, with successive rates $n,n,n,n-1,n-1,n-1,n-2,\ldots,2,1,1,1$.
By linearity of expectation, the expected value of the finishing time $T$ is the sum of the expected times $E[t_i]$ for each of the individual steps $i \in \{1,2,\ldots, 3n\}$:
\begin{equation}\label{eq:expected time 0 state pi}
E[T] = \sum_{i = 1}^{3n}E[t_i] = \sum_{m = 1}^{n}3\cdot\frac{1}{m}  = 3\sum_{m = 1}^{n}\frac{1}{m} = 3 H_n  \leq 3(\ln(n) + 1) = O(\log n)\,,
\end{equation}
where $H_n$ is the $n^{th}$ partial sum of the harmonic series $\sum_{m = 1}^{\infty}\frac{1}{m}$ with $H_n \leq \ln(n) + 1$ (see~\cite{graham1989concrete}).
Since, in $\tml{n}{3}$, it requires at least 3 steps to send a monomer from state 3 (the initial state) to state 0, no trajectory sends monomers to state 0 at a faster rate than a (hypothetical) trajectory  where a transition to state 0 appears at every third configuration (step). 
Hence, if there were no blocking whatsoever, then the expected time for $\tml{n}{3}$ would be no larger than $3 H_n$ (given by Equation~\eqref{eq:expected time 0 state pi}). 

Taking the blocking ``slowdown assumption'' in (a) into account, if the rate at step $i$ is $r_i$, then the slowed down rate is $\frac{1}{3} r_i$ giving an expected time of
\begin{equation}
E[T] = \sum_{i = 1}^{3n} E[t_i] = \sum_{m = 1}^{n}3\cdot  \frac{3}{1} \cdot \frac{1}{m}  = 9 \sum_{m = 1}^{n}\frac{1}{m} = 9 H_n  \leq 9 (\ln(n) + 1) = O(\log n)\,.
\end{equation}
Since our two assumptions merely serve to define a new system that is necessarily slower than $\tml{n}{3}$, we get the claimed expected time upper bound of $O(\log n)$ for $\tml{n}{3}$. 
\end{proof}

\subsection{Line rotation by \texorpdfstring{$4\pi/3$}{4π/3}: \texorpdfstring{$\tml{n}{4}$}{L4n}}

\begin{lemma}\label{lem:unblocked-fraction}
Let $m_i$ be a blocked monomer in some reachable configuration $c$ of a line rotation Turning Machine $\tml{n}{s}$ with $n\in\Nset$ and $1 \le s\le 4$,
and let $m_j\in\head(m_i)$ and $m_k\in\tail(m_i)$ be a pair of monomers which block the movement of $m_i$, then in the subchain of $\tml{n}{s}$ from $m_k$ to $m_{j-1}$ the number of unblocked monomers is at least half the number of blocked monomers.
\end{lemma}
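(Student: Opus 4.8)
\emph{Reformulation and easy cases.} Write $N=j-k$ for the length of the subchain $m_k,\dots,m_{j-1}$, $B$ for the number of blocked monomers in it and $U=N-B$ for the unblocked ones; the claim $U\ge B/2$ is equivalent to $B\le\tfrac23 N$, and the plan is to prove this bound. If $s\le 2$ it is vacuous: in $\tml{n}{s}$ every monomer of nonzero state has $\Delta s\le 1$, so by Lemma~\ref{lem:unblocked-states} no monomer is ever blocked. So assume $s\in\{3,4\}$, and recall that by Lemma~\ref{lem:unblocked-states} every blocked monomer $m_\ell$ satisfies $\Delta s(m_\ell)\ge 2$ (and, being blocked, has state $1$ or $2$).

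\emph{The polygon and its endpoints.} Since $(m_j,m_k)$ is a blocking pair for $m_i$, the closed chain $P=\pos(m_k),\pos(m_{k+1}),\dots,\pos(m_{j-1}),\pos(m_j),\pos(m_k)$ is a simple polygon, exactly as in the proof of Lemma~\ref{lem:unblocked-states}; its vertices other than $m_j$ are precisely the subchain in question. The identity $\Delta s(m_{j-1})=\Delta s(m_i)+2\pm 6-\tfrac{3}{\pi}\beta_j$ derived there shows that the ``$+6$'' (anticlockwise) branch is impossible when $s\le 4$ (it would force $\Delta s(m_{j-1})\ge 6$), so $P$ is traversed clockwise and its turn angles sum to $-2\pi$. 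Telescoping this through Lemma~\ref{lem:state-chain-general} gives $\Delta s(m_k)-\Delta s(m_{j-1})=6+\tfrac{3}{\pi}(\beta_j+\beta_k)$, which together with $0\le\Delta s(\cdot)\le 4$ pins the subchain endpoints down to $\Delta s(m_k)-\Delta s(m_{j-1})\in\{2,3,4\}$; in particular $\Delta s(m_{j-1})\le 2$, and when $s=3$ this already forces $\Delta s(m_{j-1})\le 1$, so $m_{j-1}$ is unblocked.

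\emph{Core and counting.} The main step is to extend Conclusion~(i) of Lemma~\ref{lem:pi:blocking} from $\tml{n}{3}$ to $\tml{n}{4}$: every blocked monomer $m_\ell$ has an immediate neighbour $m_{\ell-1}$ or $m_{\ell+1}$ that is unblocked. For $s=3$ this is Lemma~\ref{lem:pi:blocking}(i) (the witness is a state-$3$ monomer, which has $\Delta s=0$ and hence is unblocked). For $s=4$ I would argue by cases on $s(m_\ell)\in\{1,2\}$ and on the directions and states of $m_{\ell-1},m_{\ell+1}$, using Lemma~\ref{lem:180:li}-style reasoning about which side of the horizontal line through $m_\ell$ the sets $\tail(m_\ell)$ and $\headp(m_\ell)$ lie (now for monomer directions among $0^\circ,60^\circ,120^\circ,180^\circ,240^\circ$), together with the fact from Lemma~\ref{lem:pred-TM} that a maximal state-$\le 3$ stretch behaves like a rotated copy of $\tml{n}{3}$ when state-$4$ monomers are present. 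This local claim rules out three consecutive blocked monomers, so each maximal run of blocked monomers in the subchain has length at most $2$; charging each blocked monomer to an adjacent unblocked monomer (each unblocked monomer receiving at most two charges, one from each side) then gives $B\le 2U$, except possibly for a blocked run abutting an end of the subchain whose outward neighbour lies outside $\{m_k,\dots,m_{j-1}\}$. These end cases are handled using the pinned-down endpoint states and the structure of $P$ — e.g.\ $\Delta s(m_{j-1})\le 2$, with the borderline case $\Delta s(m_{j-1})=2$ forcing $\beta_j=-\tfrac{2\pi}{3}$, $\Delta s(m_i)=4$ and $\Delta s(m_k)=4$, restrictive enough to check directly, and the sharp turn $\beta_k=-\tfrac{2\pi}{3}$ forced at $m_k$ when $m_k$ is blocked — so that no end run spoils the ratio, yielding $U\ge B/2$.

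\emph{Main obstacle.} The genuine work is the local characterisation of blocking for $\tml{n}{4}$: unlike $\tml{n}{3}$, a blocked monomer can be in two different states, several of its neighbours can have $\Delta s\ge 2$, and Lemma~\ref{lem:pi:blocking} is not directly available once state-$4$ monomers appear, so the argument reduces to a fairly tedious enumeration of the local grid-directions around a blocked monomer, ruling out every case in which a blocked monomer would have only blocked neighbours. The polygon $P$ together with the explicitly computed endpoint states is precisely what keeps the bookkeeping at the two ends of the subchain under control.
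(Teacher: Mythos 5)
Your plan rests on a claim you never establish, and it is the whole content of the lemma in the hard case: that in $\tml{n}{4}$ every blocked monomer has an immediate unblocked neighbour (so that blocked runs have length at most two, after which you charge). For $s=3$ you can indeed quote Lemma~\ref{lem:pi:blocking}(\ref{concl:only state 1}), but for $s=4$ you only say you ``would argue by cases'', and the one tool you invoke to tame those cases, Lemma~\ref{lem:pred-TM}, does not apply the way you use it: it gives a correspondence between a configuration of $\tml{n}{s}$ and one of $\tml{n}{s-1}$ only when \emph{no monomer anywhere} in the configuration is still in state $s$, and it is a statement about the whole machine, not about a ``maximal state-$\le 3$ stretch'' coexisting with state-$4$ monomers elsewhere on the chain. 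So the $s=4$ local characterisation is an open piece of work, not a routine enumeration; note also that the paper itself never proves a no-three-consecutive-blocked statement for $\tml{n}{4}$ (its global bound for $s=4$ is the weaker $3n/4$ of Theorem~\ref{thm:line-rotation-time}, obtained precisely by combining this lemma with a planar-subdivision argument), which is a strong hint that your intermediate claim is at least not easy, and possibly not true. On top of that, even granting the local claim, your charging argument is not finished: the count is required \emph{inside} the subchain $m_k,\dots,m_{j-1}$, and a pattern such as blocked--blocked--unblocked--blocked--blocked at the two ends already violates $U\ge B/2$, so the end-of-subchain analysis is essential; you gesture at it via the endpoint identities (and your bound $\Delta s(m_{j-1})\le 2$ there is not even tight --- with $\Delta s(m_i)\le 3$ and $\beta_j\ge -2\pi/3$ the clockwise branch forces $\Delta s(m_{j-1})\le 1$, and all these identities only hold modulo $6$), but no actual argument is given that end runs cannot spoil the ratio.

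For comparison, the paper's proof avoids local case analysis entirely. Because every monomer of $\tml{n}{s}$ starts in the same state, the current state of $m_\ell$ determines the direction of the edge $\pos(m_\ell)\pos(m_{\ell+1})$; summing the $x$- and $y$-displacements around the closed polygon $P=\pos(m_k),\dots,\pos(m_j),\pos(m_k)$ (which close up to the unit closing edge) gives $\#(s)+\varepsilon_x=\#(s-2)+\#(s-3)$. Since by Lemma~\ref{lem:unblocked-states} only monomers in states $s-2$ or $s-3$ can be blocked while those in state $s$ (and $s-1$) are unblocked, either $\#(s)\ge 1$ and the blocked count is at most $2\#(s)$, or $\#(s)=0$, in which case the $x$-balance forces $\varepsilon_x=1$ and $m_i$ to be the only blocked monomer, and the $y$-balance produces at least one unblocked state-$(s-1)$ monomer. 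That global displacement count is what replaces both your unproven adjacency claim and your endpoint bookkeeping; if you want to salvage your route you would need to actually prove the $s=4$ neighbour claim and the end-run control, which is substantially more work than the counting argument.
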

\begin{proof}
Similarly to the proof of Lemma~\ref{lem:unblocked-states}, consider the closed chain $P=\pos(m_k)$, ..., $\pos(m_j)$, $\pos(m_k)$.
Let $x(m_i)$ denote the $x$-coordinate of the position of monomer $m_i$, and $y(m_i)$ denote the $y$-coordinate of the position of monomer $m_i$. 
Note, that for any $\ell$,
\begin{itemize}
\item if $s(m_\ell)=s$, then $x(m_{\ell+1})=x(m_{\ell})+1$ and $y(m_{\ell+1})=y(m_{\ell})$,
\item if $s(m_\ell)=s-1$, then $x(m_{\ell+1})=x(m_{\ell})$ and $y(m_{\ell+1})=y(m_{\ell})+1$,
\item if $s(m_\ell)=s-2$, then $x(m_{\ell+1})=x(m_{\ell})-1$ and $y(m_{\ell+1})=y(m_{\ell})+1$,
\item if $s(m_\ell)=s-3$, then $x(m_{\ell+1})=x(m_{\ell})-1$ and $y(m_{\ell+1})=y(m_{\ell})$,
\item if $s(m_\ell)=s-4$, then $x(m_{\ell+1})=x(m_{\ell})$ and $y(m_{\ell+1})=y(m_{\ell})-1$.
\end{itemize}
Let $x(m_k)-x(m_j)=\varepsilon_x$ and $y(m_k)-y(m_j)=\varepsilon_y$, with $\varepsilon_x,\varepsilon_y \in \{-1, 0, 1\}$.
The total change in $x$-coordinate and the total change in $y$-coordinate, when traversing $P$, is zero, that is,
\begin{equation}\label{eq:sum-coord}
\begin{aligned}
\sum_{\ell=k}^{j-1} (x(\ell+1)-x(\ell)) + \varepsilon_x = 0\,,\\
\sum_{\ell=k}^{j-1} (y(\ell+1)-y(\ell)) + \varepsilon_y = 0\,.
\end{aligned}
\end{equation}
Considering the first part of Equation~(\ref{eq:sum-coord}), and taking into account that the $x$-coordinate increases only when traversing monomers in state $s$, and the $x$-coordinate decreases only when traversing monomers in state $s-2$ or $s-3$, we get
$
\#(s) + \varepsilon_x = \#(s-2) + \#(s-3)
$, where $\#(u)$ denotes the number of monomers with state $u$ in the subchain from $m_k$ to $m_{j-1}$.
Observe, 
by Lemma~\ref{lem:unblocked-states}, monomers in states $s$ and $s-1$ cannot be blocked, and
since $s\le 4$, only the monomers in states $s-2$ or $s-3$ can be blocked. 
This implies, that within the subchain from $m_k$ to $m_{j-1}$, the number of blocked monomers is at most within an additive factor $1$ from the number of unblocked monomers.

Suppose, for a given subchain from $m_k$ to $m_{j-1}$, the number of monomers in state $s$ is strictly positive (that is, $\#(s)\ge 1$).
Then,
$
\#(s) \ge \frac{1}{2}(\#(s-2) + \#(s-3))
$,
that is, in the subchain, the number of unblocked monomers is at least half the number of blocked monomers.

Now suppose that the number of monomers in state $s$ in the subchain is zero (that is, $\#(s)= 0$).
As the blocked monomer $m_i$ has state either $s-2$ or $s-3$, the $x$-coordinate decreases by $1$ when traversing it.
The $x$-coordinate only increases when traversing  monomers in state $s$.
Therefore, if there are no monomers in state $s$, $\varepsilon_x$ has to be $1$, and, besides the blocked monomer $m_i$, the subchain from $m_k$ to $m_{j-1}$ consists only of monomers in states $s-4$ and $s-1$.

Furthermore, as $\varepsilon_x=1$, we have that $pos(m_k)=pos(m_j)-\vec{w}$ (that is, $m_i$ is in state $s-3$).
We claim that there is at least one monomer in state $s-1$ in the subchain from $m_k$ to $m_{j-1}$.
Indeed, consider the second part of Equation|\ref{eq:sum-coord}.
Traversing the edge between monomers $m_k$ and $m_{j}$ changes the $y$-coordinate by $\varepsilon_y=y(m_j)-y(m_k)=y(-\vec{w})=-1$.
Thus there has to be at least one monomer traversing which increases the $y$-coordinate.
This can only be a monomer in state $s-1$.
Thus, in the subchain from $m_k$ to $m_{j-1}$, there is one blocked monomer $m_i$ and at least one unblocked monomer in state $s-1$, and the total number of unblocked monomers is at least the number of blocked monomers.
\end{proof}

\begin{theorem}\label{thm:line-rotation-time}
For each $n\in\Nset$ and $1 \le s\le 4$, the line rotation Turning Machine $\tml{n}{s}$ computes its target configuration in $O(\log n)$ steps.
\end{theorem}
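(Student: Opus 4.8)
The plan is to combine two ingredients: that $\tml{n}{s}$ (for $1\le s\le 4$) never permanently blocks, so every trajectory terminates at the target, together with a lower bound on the instantaneous rate of rule applications in every reachable configuration, from which the $O(\log n)$ expected time follows by a harmonic-series argument in the style of the proof of Theorem~\ref{thm:pi rotation}. (As in Theorem~\ref{thm:s5fast}, the stated bound should be read as expected time $O(\log n)$; the number of rule applications on any trajectory is $\Theta(n)$.)

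For the first ingredient I would simply invoke Theorem~\ref{thm:line-rotation-computes}: $\tml{n}{s}$ with $1\le s\le 4$ computes its target, so on every trajectory the total state $\Phi(c)=\sum_i s(m_i)$ runs from its initial value $\Phi_0 = s(n-1) < 4n$ down to $0$. The key elementary observation is that each rule application changes a positive state $s'$ to $s'-1$, so $\Phi$ decreases by \emph{exactly} $1$ at each step; hence on any trajectory $\Phi$ passes through every integer in $\{\Phi_0,\Phi_0-1,\dots,1,0\}$ exactly once. It then suffices to bound the expected waiting time while $\Phi=v$ by $O(1/v)$ and sum over $v$.

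The core step is the rate bound: \emph{in every reachable configuration $c$ of $\tml{n}{s}$ with $1\le s\le 4$, the number of unblocked monomers (those carrying an applicable rule) is at least a constant fraction of the number $n'$ of monomers in a non-zero state}. Write $U$ and $B$ for the sets of non-zero monomers that are unblocked and (temporarily) blocked respectively, so $n'=|U|+|B|$ and the rate in $c$ equals $|U|$. For each $i\in B$, Lemma~\ref{lem:unblocked-fraction} supplies a blocking-witness pair and hence an index interval $I_i\ni i$ (the subchain from $m_k$ to $m_{j-1}$) inside which the number of unblocked monomers is at least half the number of blocked ones. The family $\{I_i\}_{i\in B}$ covers $B$, and by a standard interval-covering argument one can extract a subfamily $\mathcal{J}$ that still covers $B$ while having \emph{ply} at most $2$ (no index lies in more than two intervals of $\mathcal{J}$). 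Summing the Lemma~\ref{lem:unblocked-fraction} inequality over $\mathcal{J}$ gives $\tfrac12|B|\le\tfrac12\sum_{I\in\mathcal{J}}(\#\text{blocked in }I)\le\sum_{I\in\mathcal{J}}(\#\text{unblocked in }I)\le 2|U|$, whence $|U|\ge|B|/4$ and so $|U|\ge n'/5$. This is precisely where $s\le 4$ is used: it is the range in which Lemmas~\ref{lem:unblocked-states} and~\ref{lem:unblocked-fraction} confine blocked monomers to states $s-2$ and $s-3$; for $s=5$ one instead needs the machinery behind Theorem~\ref{thm:s5fast}.

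Finally I would assemble the pieces. Since every non-zero monomer has state at most $s\le 4$, we have $n'(c)\ge \Phi(c)/s$, so the rate in $c$ is $|U(c)|\ge n'(c)/5\ge \Phi(c)/(5s)\ge \Phi(c)/20$. Thus the expected time spent with $\Phi=v$ before the next ($\Phi$-decrementing) event is at most $20/v$, and by linearity of expectation the expected completion time is at most $\sum_{v=1}^{\Phi_0}20/v = 20\,H_{\Phi_0}\le 20\,H_{4n} = O(\log n)$, exactly as in Equation~\eqref{eq:expected time 0 state pi}. I expect the main obstacle to be the third paragraph: promoting the purely local Lemma~\ref{lem:unblocked-fraction} to the global ``a constant fraction of non-zero monomers is unblocked'' statement, i.e.\ selecting the bounded-ply subfamily $\mathcal{J}$ and carefully tracking which monomers are double-counted on each side; the remaining steps are routine bookkeeping.
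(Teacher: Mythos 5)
Your proposal is correct, and it shares the paper's two main ingredients -- Theorem~\ref{thm:line-rotation-computes} for termination and Lemma~\ref{lem:unblocked-fraction} as the local blocked-vs-unblocked bound -- but it aggregates the local bound globally by a genuinely different route. The paper turns each blocking witness pair $(m_j,m_k)$ into a unit edge $e_{j,k}$, forms the planar subdivision induced by the chain together with the outer-face-incident such edges, applies Lemma~\ref{lem:unblocked-fraction} to the subchain bounding each bounded face, and then counts incidences (each blocked monomer on at least one bounded face, each unblocked monomer on at most two) to get unblocked $\geq \tfrac14\,$blocked; you instead stay one-dimensional, taking for each blocked monomer the index interval $[k,j-1]$ of its witness subchain and extracting a minimal subcover, whose ply is at most $2$ by the standard three-intervals-through-a-point argument, which yields the same constant $|U|\ge |B|/4$ without any planarity considerations. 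Your closing step is also cleaner than the paper's: rather than invoking the slowdown-style harmonic argument of Theorem~\ref{thm:pi rotation} informally (the paper's last sentence), you track the potential $\Phi=\sum_i s(m_i)$, which drops by exactly $1$ per event, and bound the holding time at $\Phi=v$ by $O(1/v)$ via the rate bound $|U|\ge n'/5\ge \Phi/(5s)$; this makes the "constant fraction unblocked $\Rightarrow O(\log n)$ expected time" implication fully explicit, and correctly measures the fraction against the non-zero monomers rather than all $n$ monomers. One small point to make explicit when writing this up: your inequality $\sum_{I\in\mathcal{J}}\#(\text{unblocked in }I)\le 2|U|$ needs "unblocked" in Lemma~\ref{lem:unblocked-fraction} to mean monomers that actually carry an applicable rule (non-zero and not blocked); the lemma's statement literally says only "unblocked", but its proof exhibits the witnesses in states $s$ or $s-1$, which are non-zero in the only cases ($s\in\{3,4\}$) where blocked monomers exist, so the stronger reading is available -- the paper's own face-counting argument needs the same reading. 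You are also right that the theorem's "$O(\log n)$ steps" should be read as expected time, since every trajectory uses exactly $s(n-1)=\Theta(n)$ rule applications.
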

\begin{proof}
By Theorem~\ref{thm:line-rotation-computes} the Turning Machine $\tml{n}{s}$ computes its target configuration. That it computes the target configuration in $O(\log n)$ steps follows from the claim that in any intermediate configuration $c$, the number of blocked monomers is not greater than $3n/4$.

To prove this claim, consider a reachable configuration $c$ of $\tml{s}{n}$, and consider all blocked monomers $B=\{m_i : m_i \text{ is blocked}\}$.
\begin{figure}[t]
    \centering
    \includegraphics[page=10]{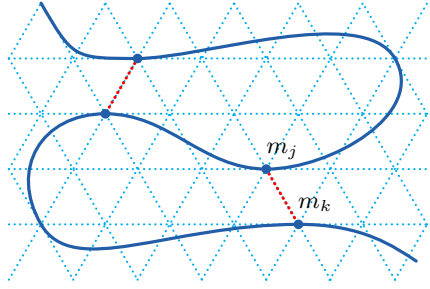}
    \caption{Subdivision $D'$ of the plane consists of chain $\tml{s}{n}$ (shown in blue), and all edges (shown in red), connecting pairs of monomers blocking the movement of some monomer, such that these edges are incident to the outer face of $D'$. }
    \label{fig:plane-subdivision}
\end{figure}
Let $e_{j,k}$ be the edge connecting the positions of two monomers $m_j$ and $m_k$ which block the movement of some monomer $m_i\in B$ (note, that $m_i$ can be blocked by more than one pair of monomers).
Let $E=\{e_{j,k}\}$ be the set of all such edges for all pairs  $m_j$ and $m_k$ which block some monomer in $\tml{s}{n}$.
Observe, that no two edges in $E$ cross each other, as they are unit segments in the triangular graph, and for the same reason no edge in $E$ crosses the chain $\tml{s}{n}$.
Let the chain $\tml{s}{n}$ together with the set of edges $E$ partition the plain into plane subdivision $D$ (refer to Figure~\ref{fig:plane-subdivision}).
The bounded faces of $D$ are formed of subchains of $\tml{s}{n}$ and edges from $E$.
Now, remove the edges of $E$ from $D$ which are not incident to the outer face, resulting in a plane subdivision $D'$.
In it, every bounded face is formed by a single subchain of $\tml{s}{n}$ and a single edge from $E$.

Observe, that all monomers of $\tml{s}{n}$ which are blocked are incident to at least one bounded face.
Otherwise, there would be two monomers $m_j$ and $m_k$ blocking the move with the edge $e_{j,k}$ not in $E$, thus contradicting the definition of $E$.

For each bounded face $f_i$ in $D'$, by Lemma~\ref{lem:unblocked-fraction}, we have 
$
\#_i(\text{unblocked}) \ge \frac{1}{2}\#_i(\text{blocked}) ,
$
where $\#_i(\text{unblocked})$ denotes the number of unblocked monomers incident to the face $f_i$, and $\#_i(\text{blocked})$ denotes the number of blocked monomers incident to the face $f_i$.

Note, that each unblocked monomer can be incident to at most two bounded faces of $D'$, and recall that each blocked monomer is incident to at least one bounded face of $D'$.
Then,
\[
\#(\text{unblocked}) \ge \frac{1}{2}\sum_{f_i\in D'} \#_i(\text{unblocked}) \ge \frac{1}{2}\left(\frac{1}{2}\sum_{f_i\in D'} \#_i(\text{blocked})\right) \ge \frac{1}{4} \#(\text{blocked})\,,
\]
where the sums are over the bounded faces of $D'$, and $\#(\text{unblocked})$ denotes the total number of unblocked monomers in $\tml{s}{n}$, and  $\#(\text{blocked})$ denotes the total number of blocked monomers in $\tml{s}{n}$.

Since there is a constant fraction of unblocked monomers in any configuration, the total expected time it takes $\tml{n}{s}$ to compute its target configuration is $O(\log n)$.
\end{proof}

\end{document}